\newtheorem{theorem}{Theorem}[section]
\newtheorem{lemma}[theorem]{Lemma}
\newtheorem{corollary}[theorem]{Corollary}
\newtheorem{proposition}[theorem]{Proposition}
\theoremstyle{definition}
\newtheorem{definition}[theorem]{Definition}
\newtheorem{example}[theorem]{Example}
\tikzset{every tree node/.style={align=center,anchor=north}}
\begin{document}

\title{The Possibilistic Horn Non-Clausal Knowledge Bases }
\author{Gonzalo E. Imaz\\ [2mm] %
{\small  Artificial Intelligence Research Institute (IIIA) - CSIC, Barcelona, Spain}\\
{\small \tt email: {\{gonzalo\}}@iiia.csic.es}\\[1mm]
}
%\address{$^1$Artificial
%Intelligence Research Institute (IIIA - CSIC) \\ Campus UAB,
%Bellaterra 08193, Spain \\ {\tt email: tommaso@iiia.csic.es}}
%\author{Lluis Godo$^1$}
%\address{$^1$Artificial
%Intelligence Research Institute (IIIA - CSIC) \\ Campus UAB,
%Bellaterra 08193, Spain \\ {\tt email: godo@iiia.csic.es}}
%\author{Hykel Hosni$^3$}
%\address{$^3$Department of Philosophy, University of Milan, \\ Via Festa del Perdono 7 - 20122 Milano, Italy \\
%{\tt email: hykel.hosni@unimi.it}}

\date{}
\maketitle

\begin{abstract}

Posibilistic logic is the most extended approach to handle uncertain and partially
inconsistent information. Regarding normal forms, advances in possibilistic reasoning
are mostly focused on  clausal form.
Yet, the encoding of real-world problems  usually results in 
a non-clausal (NC) formula and   
   NC-to-clausal translators produce
severe drawbacks that heavily limit  the practical performance 
of clausal   reasoning.  Thus, by 
 computing formulas   in its original  NC form, we  propose several contributions showing
 that notable advances are also possible in  possibilistic  non-clausal reasoning.
 
   {\em Firstly,} we   define the class of {\em  
 Possibilistic Horn Non-Clausal   Knowledge Bases,} or  
  $\mathcal{\overline{H}}_\Sigma $,
 which subsumes the classes:
   possibilistic Horn   and
    propositional Horn-NC.   $\mathcal{\overline{H}}_\Sigma $ is shown to be
  a kind of NC analogous of the  standard  Horn     class.
 
{\em Secondly}, we   define    {\em  Possibilistic Non-Clausal Unit-Resolution,} 
or   $ \mathcal{UR}_\Sigma $,   and   prove that $ \mathcal{UR}_\Sigma $ correctly   computes the inconsistency degree of $\mathcal{\overline{H}}_\Sigma $
members.  $\mathcal{UR}_\Sigma $ had not been  proposed before
and is formulated in a clausal-like manner, which eases its understanding, 
 formal proofs and future  extension towards  non-clausal resolution.

{\em Thirdly}, we prove that  computing the inconsistency degree
of $\mathcal{\overline{H}}_\Sigma $  members takes  polynomial time.
% i.e. is in $ \mathcal{NP} \cap \mbox{co-}\mathcal{NP}$. 
Although there already exist tractable   classes in possibilistic logic, 
all of them  are  clausal, and thus, 
   $\mathcal{\overline{H}}_\Sigma $ turns out to be the first 
characterized polynomial non-clausal class within possibilistic   reasoning. 
 
We discuss that our approach serves as a starting point to  
  developing uncertain non-clausal  reasoning on the basis 
  of both methodologies: DPLL and    resolution. 
% both nonexistent in the state-of-the-art.
 
% , the most efficient scheme in propositional logic,
%, nonexistent so far. 

%\vspace{.15cm}   
%\noindent  {\bf Field:}  {\em Tractable Approximate Automated Reasoning.}
 
\vspace{.05cm} 
\noindent {\bf  Keywords:}  {\em Possibilistic Logic; Horn; Non-Clausal; Inconsistency; 
  Tractability;   Resolution; DPLL; Satisfiability Testing; Logic Programming.}
\end{abstract}

%\end{frontmatter}

%JA INCLOSOS:

\newenvironment{niceproof}{\trivlist\item[\hskip
\labelsep{\it Proof.\/}]\ignorespaces}{\hfill$\blacksquare$\endtrivlist}

\newenvironment{proofsketch}{\trivlist\item[\hskip
\labelsep{\it Proof Sketch.\/}]\ignorespaces}{\hfill$\blacksquare$\endtrivlist}

\newenvironment{niceproofsketch}{\trivlist\item[\hskip
       \labelsep{\bf Proof Sketch.\/}]\ignorespaces}{\hfill$\blacksquare$\endtrivlist}

%%%%%%%%%%%%%%%THE COMPLET ABSTACT IS IN THE END%%%%%%%%%%%%%%%%%%%%%%%

%%%%%%%%%%%%%%%%UTILITAR \vspace*{48pt}%%%%%%%%%%%%%%%%%%%%%

\section{Introduction}

Possibilistic logic is the most popular approach to represent and  reason
with  uncertain and partially inconsistent knowledge.  Regarding normal forms, the encoding 
of real-world problems  does usually not result 
in a clausal  formula and 
% on which, however, have   focused  
%   knowledge representation and reasoning progresses within possibilistic logic.   
   although      a possibility 
non-clausal formula is theoretically   equivalent to some possibilistic clausal formula
 \cite{DuboisP14, DuboisP94},   approaches needing clausal form 
 transformations are practically infeasible or have  experimentally shown 
 to be highly inefficient as    discussed below. 
 
 \vspace{.05cm}
 Two kinds of clausal form transformation
 are known: (1) one is based on the repetitive application of the distributive laws 
 to the input non-clausal formula until  a logically equivalent clausal formula is obtained; 
 and (2) the other transformation, Tsetin-transformation \cite{Tseitin83}, 
 is based  on recursively 
 substituting    sub-formulas  in the input non-clausal formula  by fresh literals until 
 obtaining an  equi-satisfiable, but not equivalent, clausal formula.
 
 \vspace{.05cm}
 The first   transformation  blows up exponentially 
 the formula size, 
 and since real-world problems   have 
 a large number of variables and connectives,  
 the huge dimension of the resulting clausal formulas     
  prevents even   highly-efficient state-of-the-art
 solvers from attaining solutions in a reasonable  time.  
 % It should be also  
% mentioned that the ultimate goal of solvers
%for combinatorial problems (SAT, CSP, QBF, etc.) is  solving polynomially as many 
%input instances as  possible  and so, it would be contradictory to install them a preprocessing module which increases exponentially the size, and so, the difficulty of the  instance they have to face.
 
\vspace{.05cm}
The second kind of transformation also involves a number of drawbacks. The Tseitin-transformation usually produces an increase of formula size and  
 number of variables, and also a loss  of information  about the  formula's original structure.
Besides in most cases,  the normal form is not unique. Deciding how to perform the transformation
enormously influences the solving process and it is usually impossible to predict which strategy
is going to be the best, as this depends on the concrete 
solver used and on the kind of problem which should be solved. Further, 
Tseitin-transformation keeps the satisfiability test but losses the logical equivalence,  
 which rules out its usage in many real-world problems. 

\vspace{.05cm} 
We abandon the assumption that the input formula  should be transformed
to clausal form  and directly process it     
in its original structure.
% i.e. in non-clausal form, popularly called
% negation normal form (NNF). 
 Since real-world problems  rarely occur in clausal form, 
we   allow an arbitrary nesting of conjunctions and disjunctions and only limit the scope of the negation connective. The non-clausal form considered here is popularly called
negation normal form   (NNF), and  can be obtained deterministically and   causing only a negligible increase of the formula size. 

\vspace{.05cm}
Developing methods for NC   reasoning is an actual  concern in  
the principlal fields of classical logic,  namely  satisfiability solving \cite{Otten11,LiMS19}, 
logic programming \cite{ConfalonieriN11,CabalarFSS19},   theorem proving \cite{FarberKaliszyk19,OliverO20} 
 and quantified boolean formulas \cite{EglySW09,BubeckB13}, 
and in many  other fields (see   \cite{Imaz2021horn} and the references thereof).
 And within non-classical logics, NC formulas with different  
   functionalities  have    been studied in a profusion of  languages:  
   signed many-valued logic   \cite{ MurrayRosenthal94,  BeckertHE98, Stachniak01}, 
 \L ukasiewicz  logic \cite{Lehmke96},  Levesque's three-valued  logic \cite{CadoliS96}, 
   Belnap's four-valued logic  \cite{CadoliS96}, 
 M3  logic \cite{Aguilera97},  fuzzy logic \cite{Habiballa12}, fuzzy description  logic \cite{Habiballa07},    intuitionistic logic \cite{Otten17},
  modal logic  \cite{Otten17},  lattice-valued  logic \cite{00010HZC18} and regular
  many-valued logic \cite{Imaz21b}.
We highlight the proposal in \cite{NievesL12,NievesL15}  as is the only
existing approach, to our knowledge,    to deal with  possibilistic non-clausal formulas, 
 concretely within the  answer set programming field.

%NNF does not impose a very strong restriction on the structure of a formula, as we will see,
%the transformation into NNF is inexpensive in terms of structure loss, increase of formula size, introduction of new variables, etc.the transformation can be achieved deterministically
%by the rules which will be introduced in Theorem XX below. 

%The encoding of practical problems  does usually not result in a CNF formula and as 
%the transformation causes a severe change in the structure of the formula,  we investigate
%if it is preferable to omit the transformation and to directly process the formula in its
%original structure. We develop decision methods which do not rely on restrictions
%on the formula structure stronger than NNF to overcome certain limitations  and to avoid
%the costs induced by the normal form transformation (e.g.,  loss of structural,
%increase of the formula size, etc.).

\vspace{.05cm}
On the other side,  the   Horn clausal formulas are pivotal elements of our proposed 
possibilistic  reasoning approach
towards combining non-clausal expressiveness with
  high efficiency. 
Horn formulas are recognized as central for deductive databases, declarative programming,
and more generally, for rule-based systems. In fact, Horn formulas have received a great
deal of attention since 1943  \cite{McKinsey43,Horn51} and, at present, 
there is a broad span of areas within
artificial intelligence relying on them, and their scope covers a fairly large spectrum of
realms spread across many logics and a variety of reasoning settings.

\vspace{.05cm}
Regarding   possibilistic Horn formulas, computing their 
inconsistency degree
  is a tracta-ble problem
\cite{Lang00} and even almost-lineal    \cite{AlsinetG00}. 
Related to this standard Horn class but 
     going beyond clausal form, 
we  present a novel possibilistic   class, denoted
   $ \mathcal{\overline{H}}_\Sigma$,   
that is in NC form  and that 
we call  Horn Non-Clausal (Horn-NC). 
We show that $ \mathcal{\overline{H}}_\Sigma$ is a sort 
of non-clausal analogous of the   possibilistic Horn  class.
Besides  the   latter, 
  $ \mathcal{\overline{H}}_\Sigma$ also subsumes the class of propositional Horn-NC formulas 
recently presented   \cite{Imaz2021horn}.

\vspace{.05cm}
From a computational view, we  prove   that computing the inconsistency degree
of   $ \mathcal{\overline{H}}_\Sigma$ members is a  tractable problem.
This   result  signifies  that  polynomiallity in our context
is preserved when upgrading both  from  clausal to non-clausal form and from propositional to possibilistic logic. Polynomiallity is preserved when upgrading from  clausal to non-clausal form because both classes possibilistic Horn  \cite{Lang00} and possibilistic Horn-NC are tractable. Similarly, polynomiallity is preserved when upgrading from propositional to possibilistic logic because both  classes  propositional Horn-NC \cite{Imaz2021horn}  and  possibilistic Horn-NC are tractable.

\vspace{.05cm}
In summary, our contributions are:     introducing the hybrid class of {\em Possibilistic Horn Non-Clausal  
Knowledge Bases,} or $\mathcal{\overline{H}}_\Sigma $, 
%resulting from suitably merging both possibilistic classes, Horn and NC, or
%equivalently, by suitably upgrading the  Horn clausal pattern  to NC form; 
and then,  
    proving that computing their inconsistency degree 
    is a polynomial problem.
Our contributions are outlined next.

\vspace{.05cm}
 {\bf Firstly,} the syntactical Horn-NC
restriction is determined  by  lifting the  Horn clausal restriction 
  ``a formula  is Horn if all its  clauses have any number of negative 
literals and  at most one positive literal",
 to the non-clausal  level in the following manner: 
 {\em ``a propositional NC formula is  Horn-NC if all its 
 disjunctions have any number  of negative disjuncts and 
 at most one non-negative   disjunct".} 
%
%\vspace{.05cm} 
 By extending such definition to possibilistic logic, 
 we establish straightforwardly that: 
 a  possibilistic  NC knowledge base is Horn-NC
only if all its  propositional  formulas are Horn-NC.
Accordingly, $\mathcal{\overline{H}}_\Sigma $ is defined  
 as the {\em class of Possibilistic  Horn-NC  Knowledge Bases}. 
 Note that $\mathcal{\overline{H}}_\Sigma $ naturally subsumes 
 the standard  possibilistic  Horn  clausal   class.
%After such simple definition of $\mathcal{H_{NC}}$, 
% we go deeper into the details and give its syntactically 
% detailed definition in a compact and  inductive manner.
 
 \vspace{0.05cm} 
 The set relations that the new  class  
 $\mathcal{\overline{H}}_\Sigma $ bears  
 to the   standard possibilistic classes Horn ($\mathcal{H}_\Sigma$), 
  Non-Clausal ($\mathcal{NC}_\Sigma$)  
  and Clausal ($\mathcal{C}_\Sigma$) 
  are depicted  in {\bf Fig. 1}. Specifically,
we show the next   relationships of $\mathcal{\overline{H}}_\Sigma $ with   $\mathcal{H}_\Sigma$ and  $\mathcal{NC}_\Sigma$:  
 (1) $\mathcal{\overline{H}}_\Sigma $ and $ \mathcal{{H}}_\Sigma $ are related in that
 $\mathcal{\overline{H}}_\Sigma $  subsumes syntactically $\mathcal{{H}}_\Sigma $  but both classes are 
semantically equivalent;
and  (2) $\mathcal{\overline{H}}_\Sigma $  and $\mathcal{NC}_\Sigma$ are related  in that 
$\mathcal{\overline{H}}_\Sigma $ contains all NC
 bases whose clausal form  is Horn. Thus, in view of 
  (1) and 
 (2) relations,  $\mathcal{\overline{H}}_\Sigma $ is a sort of NC analogous of $ \mathcal{{H}}_\Sigma $.

\vspace{-.1cm}   
\begin{center}
 \begin{tikzpicture}
%\hspace{3.8cm}    
  \begin{scope}[blend group=soft light]
    \fill[black!40!white]   (270:.9) circle (2.3);
    \fill[green!100!white] (230:1.2) circle (1.3);
%    \fill[blue!100!white]  (260:1.2) circle  (1.0);
    \fill[red!100!white]    (310:1.2) circle (1.3);
\end{scope}
  \node at (100:.8)       {\bf \ $\ \mathcal{NC}_\Sigma$};
  \node at (215:1.6)      {\bf $\mathcal{C}_\Sigma$};
  \node at (325:1.6)    {\bf $\mathcal{\overline{H}}_\Sigma$};
 \node at (270:.9)         {\bf $\mathcal{{H}}_\Sigma$};
\end{tikzpicture}

\vspace{.1cm}
\footnotesize{{\bf Fig. 1.} 
The  Horn, clausal, Horn-NC and NC    classes.}
\end{center}

{\bf Secondly,}  we establish the inferential calculus {\em Posibilistic Non-Clausal  Unit-Resolution}, or  $\mathcal{UR}_\Sigma$,  and then
prove  that $\mathcal{UR}_\Sigma$ correctly computes the inconsistency 
degree of the bases in the class $\mathcal{\overline{H}}_\Sigma $.   
   NC unit-resolution  
for propositional logic has  been recently presented  \cite{Imaz2021horn}
and $\mathcal{UR}_\Sigma$ is its generalization to possibilistic logic. 
$\mathcal{UR}_\Sigma$ is formulated in a clausal-like fashion,  which contrasts
with the  functional-like fashion of the existing (full) non-clausal resolution \cite{Murray82}.
We argue that our  clausal-like formulation   eases
the understanding of  $\mathcal{UR}_\Sigma$, the  building of the required
formal proofs and the future generalization of $\mathcal{UR}_\Sigma$ to determine Non-Clausal Resolution for possibilistic and for  other   uncertainty logics.

\vspace{.05cm}
{\bf Thirdly,} we prove that  computing the consistency degree  
of   $\mathcal{\overline{H}}_\Sigma $ members  
  has polynomial  complexity. There indeed exist polynomial classes in possibilistic logic 
  but all of them are clausal \cite{Lang00}, and so, the tractable non-clausal fragment
  was empty. We think that this is just a first tractable result in possibilistic
    reasoning and  that the approach presented here will   serve  
  to   widen the 
  tractable possibilistic non-clausal fragment.

%  For instance,   then
%   $\mathcal{UR}_\Sigma$  polynomially computes 
%  the consistency degree of the Horn clausal class. 

\vspace{0.05cm} 
%We recapitulate and illustrate our aforementioned contributions through   the  
Below we give an specific possibilistic  non-clausal  base $\Sigma  $, whose  suffix notation 
will be detailed in Section \ref{sec:NCbasis} and  wherein  $ P, Q, \ldots $ and
$  \neg {P}, \neg {Q}, \ldots $ are positive and negative literals,
 respectively,  and $ \phi_1 $,  $ \phi_2 $ and $ \phi_3 $ are   non-clausal propositional formulas. We will 
$$\varphi= \{\wedge \ \,   P   \ \,  (\vee \ \ \neg Q   \ \ \{\wedge \ \ 
(\vee \ \ \neg P  \ \ \neg Q  \ \ R \,) \ 
\ (\vee \ \ \phi_1 \ \ \{\wedge \ \ \phi_2  \ \  \neg P \,\} \, )
 \ \, Q \,\} \,) \ \  {\phi_3} \ \}  $$
$$\Sigma= \{\,\langle \varphi : {\bf 0.8} \rangle \quad \langle P : {\bf 0.8} \rangle 
\quad \langle \neg Q : {\bf 0.6} \rangle
\quad \langle R : {\bf 0.6} \rangle \quad 
\langle \phi_1 : {\bf .3} \rangle \quad \langle \phi_3 : {\bf 1} \rangle \, \}$$

\noindent   show that    $ \Sigma $ is Horn-NC when  $ \phi_3 $ is Horn-NC and 
%$ \phi_1 $, $ \phi_2 $ and $ \phi_3 $ are Horn-NC and  
at least  one of $ \phi_1 $  or  $ \phi_2 $ is negative. 
%In that case,  we will show that:

\vspace{.1cm}
  Recapitulating, the list of   properties 
of $ \mathcal{\overline{H}}_\Sigma $ is   given below, where   the last two properties have been   shown in \cite{Imaz2021horn} for propositional logic but are inherited by $ \mathcal{\overline{H}}_\Sigma $:
%\vspace{.25cm}
% $\bullet  $ Testing the consistency of $ \varphi $ is tractable \cite{Imaz2021horn}.
%\vspace{.1cm}
% $\bullet  $ Computing the consistency degree of $ \Sigma $ can be intractable.
 % using UR$_\mathrm{NC}$.
%\vspace{.1cm}
%$\bullet  $  $ \Sigma $ is  recognizable as  Horn-NC  in  linear time \cite{Imaz2021horn}.
%\vspace{.1cm}
%$\bullet  $ $ \Sigma $ is  equivalent to a  possibilistic Horn base.
%\vspace{.1cm}
%$\bullet  $  $ \Sigma$  is exponentially smaller that its equivalent Horn   base.
%\vspace{.1cm}
%$\bullet  $  Applying $ \wedge / \vee $ distributivity to 
%$ \Sigma $ yields a   Horn  base.
\begin{itemize} 

 \item Computing the inconsistency degree  of $ \mathcal{\overline{H}}_\Sigma $ is  tractable.

 \vspace{-.25cm}
 \item  $ \mathcal{\overline{H}}_\Sigma $ subsumes syntactically the  possibilitic Horn class.
 
 \vspace{-.25cm}
 \item  $ \mathcal{\overline{H}}_\Sigma $ is equivalent  semantically to the possibilitic Horn class.
 
  \vspace{-.25cm}
 \item $ \mathcal{\overline{H}}_\Sigma $ contains all possibilitic NC bases  whose clausal form  is Horn.
 
% \vspace{-.25cm}
% \item  $ \mathcal{\overline{H}}_\Sigma $ subsumes   the  propositional Horn-NC class \cite{Imaz2021horn}.
 
 \vspace{-.25cm}
 \item $ \mathcal{\overline{H}}_\Sigma $ is linearly recognizable \cite{Imaz2021horn}.
 
  \vspace{-.25cm}
 \item  $ \mathcal{\overline{H}}_\Sigma $ is strictly   succincter\footnote{Succinctness
 was defined in \cite{GogicKPS95}.}
 than the possibilistic Horn   class \cite{Imaz2021horn}.

% \item $\mathcal{H_{NC}}$ is   strictly  succincter   than the Horn class. 
%\item $\mathcal{H_{NC}}$ is    exponentially richer  than the Horn class.
\end{itemize}

%On the other hand, regarding the global class of Horn-NC formulas, $\mathcal{H_{NC}}$,
%we synthesize below its  properties  demonstrated here:

%\begin{enumerate}

% \item $\mathcal{H_{NC}}$ is polynomial for satisfiability testing.
% \item  $\mathcal{H_{NC}}$ naturally subsumes the Horn class.
% \item  $\mathcal{H_{NC}}$ and the Horn class are   logically equivalent.
% \item $\mathcal{H_{NC}}$ contains all  NC formulas  whose clausal form  is Horn.
% \item $\mathcal{H_{NC}}$  is linearly recognizable.
% \item $\mathcal{H_{NC}}$ is   strictly  succincter   than the Horn class. 
%\item $\mathcal{H_{NC}}$ is    exponentially richer  than the Horn class.
%\end{enumerate}

The presented approach serves as starting point  
to develop approximate non-clausal reasoning based on (1) DPLL   and (2) resolution: 
(1)    $\mathcal{UR}_\Sigma$ 
paves the way  to define    DPLL  in NC
since its NC Unit-Propagation is  based on  NC Unit-Resolution, i.e. $\mathcal{UR}_\Sigma$; 
and (2) the existing NC resolution \cite{Murray82} presents some 
deficiencies derived from its functional-like formalization, such as not precisely defining 
the potential resolvents. 
Our clausal-like formalization of $\mathcal{UR}_\Sigma$
skips such deficiencies and   signifies a step forward
 towards  defining   NC resolution  for at least those  uncertainty logics
  for which clausal resolution is already   defined, e.g. possibilistic logic
  \cite{DuboisP87, DuboisP90}.
%Such resolution-based calculi are \underline{nonexistent until now.} 

\vspace{.05cm} 
  This  paper is organized as  follows.
  Section \ref{sec:NCbasis}  and  \ref{sec:refresher}  present   
    background  on propositional non-clausal  formulas and on   possibilistic logic, respectively.
   Section \ref{sec:definClassHorn-NCChapeau}    defines  
   the class  $\mathcal{\overline{H}}_\Sigma $.  Section  \ref{sec:Non-Clausal-Unit-resolution}
    introduces  the calculus $\mathcal{UR}_\Sigma$. 
    Section \ref{sec:illustrativeexamples} provides examples illustrating 
    how $\mathcal{UR}_\Sigma$   computes  
      $\mathcal{\overline{H}}_\Sigma $ members.
    Section \ref{sect:ProofsProperties} provides the formal proofs of 
       the correctness of $\mathcal{UR}_\Sigma$ and of the tractability of   
          $\mathcal{\overline{H}}_\Sigma $.
%    Section \ref{sec:logicprogramming} 
%    describes possibilistic NC logic programming $ \mathcal{{LP}}_\Sigma  $.
     Section \ref{sec:relatedwork } 
     %and \ref{sec:futurework} 
     focuses on
  related    and  future work. Last section    summarizes the main contributions.

\section{Propositional Non-Clausal    Logic} \label{sec:NCbasis}

      This section   presents  some  terminologies used in this paper and  background  on 
 non-clausal (NC)   propositional   logic (see   \cite{DBLP:books/daglib/0029942} for a    complete background).  We present first the needed syntactical concepts and then the semantical  
ones. We begin by introducing  the  language.

\begin{definition}  \label{def:alphCNF} The   NC  language is formed by  
 the     sets:  constants $ \{{\bf \bot, \top}\} $,  propositions 
  $\mathcal{P}=$ \{P, Q, R, \ldots \}, 
 connectives  \{$\neg, \vee,  \wedge\}$ and auxiliary symbols 
 (,  ),  {\em \{ {\em and}   \}}. 
\end{definition}

%\vspace{.1cm}
%\noindent {\bf Remark}. In the examples throughout the article and
%as   usually done in the literature,   $\mathcal{T} $ will be the real unit interval  $\mathcal{T}  =[0 \ 1]$.

Next we describe the required elements relative to clausal formulas.

\begin{definition}
  $X  $   (resp.  $ \neg {X}$)
  with $ X \in \mathcal{P} $  is 
   a  positive (resp. negative) literal. 
  $ \mathcal{L}$ is the set of  literals.  Constants and literals are atoms.
   $(\vee \ \ell_1 \ \ell_2 \ \ldots \ \ell_k)$, the $ \ell_i$   being  literals,  is 
a  clause.   A    clause with  at most
one   positive   literal is   Horn. 
    $\{\wedge \  C_1 \ C_2 \ \ldots \ C_n\}$, the  $ C_i $ being   clauses, is a 
  clausal formula. $ \mathcal{C} $ and $ \mathcal{H} $ 
  are the set of   clausal and Horn formulas, respectively.   
\end{definition}

%\vspace{.15cm}
\noindent {\bf Note.}  We firstly justify  our chosen  notation of non-clausal formulas
before defining them. Thus, for  the sake of readability of 
   non-clausal formulas, we will  employ:

\begin{enumerate}
 \item The prefix notation   as it     
       requires only one  $\vee / \wedge$-connective 
      per formula,  while infix notation requires
   $k-1$, $k$  being  the arity of the involved $\vee / \wedge$-connective.
  \item  Two   formula 
    delimiters (see  Definition \ref{def:NCformulas}),     $(\vee  \,\ldots \,)$  for disjunctions and  
 $\{\wedge  \,\ldots \,\}$ for conjunctions, 
  to better 
   distinguish  them   inside
  non-clausal  formulas.
  \end{enumerate}

\noindent  So our next definition is that of non-clausal formulas\footnote{Also called 
"negation normal form formulas" in the literature.},  whose differential feature 
 is that the connective
$ \neg $ can occur only in front of propositions, i.e.  at atomic level.

\begin{definition}  \label{def:NCformulas} The   
non-clausal formulas over a set of propositional variables $\mathcal{P}  $ is the smallest set $\mathcal{NC}$ such that the following conditions hold:
\begin{itemize}
\item   $\{\bot, \top\} \cup \mathcal{L} \,\subset \,\mathcal{NC}$.

\item  If  \ $\forall i \in \{1, \ldots k \}$,  $\varphi_i \in \mathcal{NC}$  \,then    
\,$\{\wedge  \ \varphi_1 \ldots \varphi_{i}  \ldots    \varphi_k\} \in  \mathcal{NC}$.

\item   If \ $\forall i \in \{1, \ldots k \}$, $\varphi_i \in \mathcal{NC}$ \,then  \,$\,(\vee  \ \varphi_1  \ldots 
\varphi_{i} \ldots   \varphi_k) \in  \mathcal{NC}$.
\end{itemize}

\vspace{.1cm}
 -- $\{\wedge    \,\varphi_1 \ldots  \varphi_{i}  \ldots  \varphi_k \}$ and any $\varphi_i$ 
 are called    conjunction and conjunct,  respectively.

\vspace{.1cm}
  -- $(\vee   \,\varphi_1  \ldots  \varphi_{i} \ldots  \varphi_k )$ and any $\varphi_i$ are called  
disjunction  and disjunct, respectively. 

\vspace{.1cm}
-- $[ \odot  \,\varphi_1  \ldots  \varphi_{i} \ldots  \varphi_k]$    stands for both 
 $(\vee   \,\varphi_1  \ldots  \varphi_{i} \ldots   \varphi_k ) \mbox{\ and \ }
\{\wedge   \,\varphi_1  \ldots  \varphi_{i}  \ldots  \varphi_k \}.$  
\end{definition}

\begin{example} \label{exsec1:varisexamples}  $ \varphi_1 $ 
to $ \varphi_3 $ below are  NC formulas, while   $ \varphi_4 $ is  not.
We will show that $ \varphi_2 $ 
is   Horn-NC while   $ \varphi_1 $ is not Horn-NC, and as $ \varphi_3 $ includes $ \varphi_1 $, then $ \varphi_3 $ is not Horn-NC either. On the other side, the example   
in the  Introduction  is Horn-NC under certain conditions.

\begin{itemize}

\item     $  \ \varphi_1=\{\wedge  \ \ 
(\vee  \ \ \neg {P}  \ \ Q  \ \ \bot   \,)
 \ \ (\vee \ \ Q   \  \   \{\wedge \ \neg {R}  \ \ S  \ \ \top \, \} \, )  \, \}$

\item     $ \ \, \varphi_2= (\vee \ \ \{\wedge \ \ \neg {P}  \ \ \top  \} \  \
  \{\wedge  \ \  (\vee  \ \  \neg {P}  \ \ R   \,) \ \ 
  \{\wedge \ \ Q  \ \ (\vee \ \ P  \ \ \neg {S}  \,) \,\} \,\} \ 
  \ \{\wedge \ \ \bot \ \  Q\}\,)$

\item    $  \ \varphi_3=(\vee \ \ \varphi_1 \ \ \{\wedge  \ \ Q    \ \ 
(\vee \ \ \varphi_1 \ \ \neg {Q}  \ \ \varphi_2) \,\} \ \ 
\{\wedge \ \ \varphi_2 \ \ \top  \ \    \varphi_1  \,\} \,)$

\item  \   $ \varphi_4= \neg (\vee \ \varphi_1 \ \varphi_2) $ \qed

\end{itemize}
\end{example} 

\begin{definition} \label{def:sub-for} Sub-formulas are  recursively defined as 
follows.  The unique sub-formula of an atom  is the atom itself, and 
the sub-formulas of a formula $\varphi=\langle \odot \  \varphi_1 \ldots \varphi_i \ldots \varphi_k \rangle$ 
  are $\varphi$ itself plus  the sub-formulas of  the     $\varphi_i$'s. 
\end{definition}

\begin{example} The sub-formulas of a clausal formula are the 
formula itself plus its clauses, literals
 and constants.
\end{example}

\begin{definition} \label{def:graphformula}  
NC formulas  are   modeled by trees if:  (i) the nodes are:
each atom  is a   {\em   leaf} and
each  occurrence of   
 a $ \wedge /\vee $-connective  is   an {\em  internal node}; and (ii)  {\em the arcs} are:  each
 sub-formula $[ \odot \  \varphi_1 \ldots  \varphi_{i}  \ldots \,\varphi_k ]$    
 is  a  $k$-ary hyper-arc
 linking  the node of  $\odot$      
  with, for every $i$, the node of $\varphi_i$  
  if $\varphi_i$  is an atom and
 with the node of its connective  otherwise.
\end{definition}

\begin{example} \label{Ex:tree} The   tree    of  
$\{\wedge \ \, \neg R \ \, (\vee \ \, \{\wedge \ \, \neg P \ \, Q \} \ \, \bot \ \,
(\vee \ \, \neg P \ \, \neg R)\,)\,\}   $ is given in Fig. 2.

\begin{center}
\begin{tikzpicture}[sibling distance=10em,
  every node/.style = {shape=rectangle, rounded corners,
    draw, align=center,
    top color=pink, bottom color=pink!100}]]
  \node {$\wedge$}
    child { node {$\neg {R}$} }
      child { node {$\vee$} 
        child { node {$\wedge$} 
           child { node {$\neg {P}$} }
           child { node {$Q$}    } }
        child { node {$\bot$} } 
        child  { node {$\vee$} 
           child { node {$\neg {P}$}  }  
           child { node {$\neg {R}$}  }}};
\end{tikzpicture}

\vspace{.1cm}
{\normalsize{\bf Fig. 2.}   Tree of Example \ref{Ex:tree}.}
\end{center}
\vspace{-.5cm}
\qed
\end{example}

%\begin{example}  The graphical representation of  $ \varphi $ from the introduction 
% is given 
%in the illustrative  Examples \ref{ex:formulacompl}, in Section \ref{sec:polynomial}. Example
% \ref{ex:cont} provides further examples of DAGs.
%\end{example}

\noindent 
%A different,  bi-dimensional   graphical model of NCs
%    is handled in e.g.,  \cite{MurrayRosenthal93,JainBartzisClarke06} and in other works. 
\noindent {\bf  Remark.}  Directed acyclic graphs (DAGs)  generalize trees 
and allow for important savings in    space and  time.
Our approach also applies when  NC formulas  are represented 
and implemented by DAGs.  Nevertheless, for simplicity, we will use  formulas representable by   trees in the illustrative examples throughout this article.
  
%In subsection, we provide an example of the use
%of DAGs instead of trees.

%\begin{definition} \label{def:DAGs}  A non-clausal
%formula $ \varphi $ is modeled by a DAG
%if each  sub-formula $ \phi $   is modeled by a unique 
%DAG $D_\phi$ and each $ \phi $-occurrence by a  pointer 
%to (the root of) $D_\phi$.
%\end{definition}

%\begin{example} Let us consider $ \varphi_3 $ from Example \ref{exsec1:varisexamples}. 
% $ \varphi_1 $ and $ \varphi_2 $ should be represented  by    unique 
%DAGs, i.e. $ D_{\varphi_1} $ and $ D_{\varphi_2} $, and each  of the two  occurrences 
%of both $ \varphi_1$ and $ \varphi_2 $ within $ \varphi_3 $, by a  pointer to their
% corresponding $ D_{\varphi_1} $ or $ D_{\varphi_2} $.
%\end{example}

%\subsection{Regular Non-Clausal   Semantics}

\vspace{.15cm}
\noindent $-$ In the remaining of this section, we present  semantical notions.

 \vspace{.1cm}
\noindent $-$   In the next definition (first line), the empty disjunction $ (\vee) $ is 
   considered equivalent  to a  $ \bot $-constant
    and the   empty conjunction $ \{\wedge\} $  to
  a $ \top $-constant.

 \begin{definition} \label{def:interpretation} An interpretation  
  $\omega$ maps the formulas $ \mathcal{NC} $  into the truth-value set $\{0,1\}$  and
  is extended from propositional variables $ \mathcal{P}$ to 
  formulas $ \mathcal{NC} $  via the  rules below,   
where $X \in \mathcal{P}$ and 
$ \varphi_i \in  \mathcal{NC}$, $1 \leq i \leq k  $.  We will denote $ \Omega $
the   universe of  interpretations.
\begin{itemize}

\item   $\omega(\bot)=\omega(\,(\vee)\,)=0$
 \, and \, $\omega(\top)=\omega(\,\{\wedge\}\,)=1$.

\item   $\omega(X) + \omega(\neg X)= 1 $. 

\item      $\omega( \, (\vee\  \varphi_1 \ldots \varphi_{i}  \ldots  \varphi_k) \, ) \,=
  \mathrm{max}\{\omega(\varphi_i): 1 \leq i \leq k \}$.

\item       $\omega(\, \{\wedge\  \varphi_1 \ldots \varphi_{i}  \ldots   \varphi_k\}\, )
=
\mathrm{min}\{\omega(\varphi_i): 1 \leq i \leq k \}$.

\end{itemize} 
\end{definition}

\begin{definition} \label{def:model} $ \varphi $ and $ \varphi' $ being   formulas, some well-known semantical notions  follow. An interpretation    $\omega$ is a   model of   $\varphi$  if  $\omega(\varphi)=1.$ 
If $\varphi$  has a model then it is   consistent and otherwise inconsistent.
  $\varphi$ and $\varphi'$ are  (logically)  equivalent, denoted
 $\varphi \equiv \varphi'$,   if   
   $\forall \omega$, 
  $\omega(\varphi)=\omega(\varphi')$.
 $\varphi'$ is  logical consequence of  $\varphi$, denoted
 $\varphi \models \varphi'$,   if   
   $\forall \omega$, 
  $\omega(\varphi) \leq \omega(\varphi')$.

\end{definition}

 Next, some well-known rules allowing to simplify formulas  are supplied.

\begin{definition} \label{def:simpl}    
 Constant-free,  equivalent      formulas   are  straightforwardly obtained 
  by recursively applying to  sub-formulas the   simplifying  rules below:  

\vspace{.25cm}
$\bullet$ Replace \  \,$(\vee \  \top \  \varphi \,)$   \ with \  $  \top $. 

\vspace{.1cm}
$\bullet$  Replace \ $\{\wedge \   \bot \  \varphi \,\}$  \  with \ $  \bot $.

\vspace{.1cm}
$\bullet$ Replace \ $\{\wedge \   \top \  \varphi \,\}$   \   with  \ $ \varphi  $.

\vspace{.1cm}
$\bullet$  Replace \ \,$(\vee \   \bot \  \varphi \,)$  \ \,with  \ $  \varphi$.
 
\end{definition}

\begin{example} \label{ex:constantsfree} The constant-free,   equivalent NC formula  of 
$ \varphi_2 $  in Example \ref{exsec1:varisexamples} is:
%\footnote{This formula will be used in examples throughout the next section.} 
   $$ \varphi=(\vee \ \ \neg P  \ \  \{\wedge \ \ (\vee \ \ \neg P   \ \ R  )  \ \
\{\wedge \ \ Q   \ \   (\vee \ \  P   \ \ \neg S  ) \, \} \, \}  )$$    

\end{example}

\noindent {\bf Remark.}  For simplicity and since free-constant,    equivalent  formulas are 
 easily obtained,  hereafter we will consider only free-constant formulas.
 
 \section{Necessity-Valued Possibilistic Logic} \label{sec:refresher}
 
%  In possibilistic logic, which   is a  weighted logic, a level of priority
% is attached to each classical propositional formula. 
  Let us have a brief 
refresher on  necessity-valued possibilistic logic (the reader may consult  \cite{DuboisP94,DuboisP04a, DuboisP14} 
for   more details).

\subsection{Semantics}

 At the   semantic level, possibilistic logic is defined in terms of a 
 {\em possibilistic distribution} $ \pi $ on the universe $ \Omega $ of interpretations,
 i.e. an $ \Omega \rightarrow [0,1] $ mapping which intuitively encodes for each 
  $ \omega \in \Omega$ to what extent it  is plausible that $ \omega $ is the actual world.
% Intuitively, $ \pi(\omega) $ represents  the compatibility of  $ \omega $ 
% with the available information.   
  $ \pi(\omega)=0 $ means that $\omega $ is impossible,
  $ \pi(\omega)=1 $ means that nothing prevents $ \omega $  from being true,
  whereas $ 0 < \pi(\omega) < 1 $ means that $ \omega $ is only somewhat possible 
  to be the real world. Possibility degrees are  interpreted qualitatively: 
 when $ \pi(\omega) >  \pi(\omega')$,
 $ \omega $ is considered more plausible than $ \omega' $.
  A possibilistic distribution $ \pi $ is  \emph{normalized}  if 
 $ \exists \omega \in \Omega, \pi(\omega)=1$, i.e. at least one interpretation
 is entirely plausible.

 \vspace{.1cm}
 A possibility distribution $ \pi $  induces two uncertainty functions 
 from the formulas $ \mathcal{NC} $  to $ [0,1] $, called 
   possibility and necessity functions and noted $ \Pi $ and $ N $, respectively,
  which allow us to rank  formulas.   $ \Pi $ is 
 defined by Dubois \emph{et al.} (1994) \cite{DuboisP94} as:
 $$ \Pi(\varphi)=\mathrm{max}\{\pi(\omega) \ \vert \ \omega \in \Omega, \omega \models \varphi\}, $$ 
 
\noindent and evaluates the extent to which  $ \varphi $  is consistent with the 
 beliefs expressed by $ \pi $. The dual \emph{necessity measure} $ N $ is defined
 by: 
 $$ N (\varphi)= 1- 
 \Pi(\neg \varphi)=\mathrm{inf}\{1-\pi(\omega) \ \vert 
 \ \omega \in \Omega, \omega \nvDash \varphi \},$$

\noindent and evaluates the extent to which   $ \varphi $ is entailed by the available
 beliefs \cite{DuboisP94}. So the lower the possibility of an interpretation
  that makes $ \varphi $ False, the higher the necessity degree of $ \varphi $.  
  $ N(\varphi)=1 $  means $ \varphi $ is a totally certain piece of knowledge, 
  whereas  $ N(\varphi)=0 $
  expresses the complete lack of knowledge of priority about $ \varphi $.
%   but does not  mean that $ \varphi $ is or should be false.   
  Note that always  $ N(\top)=1 $ for any 
 possibility distribution, while $ \Pi(\top)=1 $ (and, related, $ N(\bot) $=0) only
 holds  when the possibility distribution is normalized, i.e. only normalized distributions
 can express consistent beliefs \cite{DuboisP94}.
  
 \vspace{.1cm} 
  A major property of $N$ is Min-Decomposability:
 $\forall \varphi, \psi, N(\varphi \wedge \psi)= \mathrm{min}(N(\varphi),N(\psi))  $. 
 However, for disjunctions only  
 $ N(\varphi \vee \psi) \geq \mathrm{max}(N(\varphi),N(\psi))$  holds.
% As  function $ \Pi $ is the dual  of $ N $, $ \Pi $ has  
% the property of max-decomposability w.r.t. disjunction 
% $\forall \varphi, \psi, \Pi(\varphi \vee \psi)= \mathrm{max}(\Pi(\varphi),\Pi(\psi))  $, whereas for the conjunction
% only  $ \Pi(\varphi \wedge \psi) \leq \mathrm{min}(\Pi(\varphi),\Pi(\psi)) $ holds.
 Further, one has    
 $ N(\varphi) \leq N (\psi) $   if   $ \varphi \models \psi  $, and hence,
 $ N(\varphi)=N (\psi) $   if   $ \varphi \equiv \psi  $.

% \begin{definition} We summarize below the  possibilistic axioms:
%\begin{itemize}
%\item $ N(\bot)=0 $,  \  \ $ N(\top)=1 $ \ \ and
%\ \  $ N(\varphi) = 1 - \Pi(\neg \varphi) $.
%\item $ N(\, \{\wedge \ \, \varphi \ \, \varphi' \} \,) = \mathrm{min}\{N(\varphi), N(\varphi') \}$ \ \ and \ \ 
% $ \Pi ( \,(\vee \ \, \varphi \ \, \varphi')\,) =\mathrm{max} \{ \Pi (\varphi), \Pi (\varphi') \} $.
%\item   $N(\, (\vee \ \, \varphi \ \, \varphi' ) \,) \geq \mathrm{max} \{N(q),N(q)\}
%$ \ \ and \ \ 
% $ \Pi ( \,(\wedge \ \, \varphi \ \, \varphi')\,) \leq \mathrm{min} \{ \Pi (\varphi), \Pi (\varphi') \} $
%\item $ N(\varphi)=N (\varphi') $ \ if \ $ \varphi \equiv \varphi'  $ \ \ and \ \ 
% $ N(\varphi) \leq N (\varphi') $ \ if \  $ \varphi \models \varphi'  $.  
%\end{itemize} 
%\end{definition}

\subsection{Syntactics}

A possibilistic  formula  is a   pair
 $ \langle \varphi :\alpha \rangle \in \mathcal{NC} \times (0,1]$,  where $ \varphi $ is a 
 propositional  NC formula,  $ \alpha \in (0,1]$  expresses
 the certainty that $ \varphi $ is the case, and it is  interpreted
 as the semantic constraint $ N(p) \geq \alpha $. So formulas of the form 
 $  \langle \varphi : 0 \rangle $ are 
 excluded. 
 % That is,
% the confidence degree attached to a formula $ \varphi $ is interpreted 
% as the confidence degree of $ \varphi $. 
 A {\em possibilistic  base} $ \Sigma $ is 
 a  collection of possibilistic formulas 
 $\Sigma=\{ \langle\varphi_i : \alpha_i \rangle\, \vert \, i=1, \ldots, k \,\}$ 
 and   corresponds to a set of constraints on possibility distributions.
 The classical knowledge base associated with $ \Sigma $ is denoted as $ \Sigma^* $,
 i.e. $ \Sigma^*=\{\varphi \vert \langle  \varphi :  \alpha \rangle \in \Sigma\} $.
 $ \Sigma $ is consistent if and only if   $ \Sigma^* $ is
 consistent. It is noticeable that, due to  Min-Decomposability, 
 {\em a possibilistic logic base can be easily put in  clausal form.}\footnote{Nevertheless, as  said previously,  this translation can blow up  exponentially the size of formulas and so can  dramatically
 reduce the overall efficiency of the clausal reasoner.} 
  
  \vspace{.1cm}  
  Typically, there can be many possibility 
  distributions that satisfy the
  set of constraints $ N(\varphi) \geq \alpha $ but we are usually only interested in   the \emph{least specific possibility  distribution}, i.e. the possibility 
  distribution that makes minimal commitments, 
  namely, the {\em greatest possibility distribution}
  w.r.t. the following ordering: 
    $ \pi $  is a least specific possibility distribution
 compatible with $ \Sigma $ if for any 
 $ \pi' $, $ \pi' \neq \pi $,  compatible with $ \Sigma $,  one has
 $ \forall \omega \in \Omega, \pi(\omega) \geq \pi'(\omega)   $. Such a least specific  possibility distribution
  always exists and is unique \cite{DuboisP94}.
  
  \vspace{.1cm}  
  Thus, for  a given  $ \langle  \varphi :  \alpha \rangle $,  possibilistic distributions 
 should  consider that  an   
  $ \omega $ that makes $  \varphi$ True is possible at the maximal level, say 1, while an
   $ \omega $  that makes $ \varphi $ False is possible at most  at level
  $ 1 -\alpha $.    Thus 
 the semantic counterpart of a  base $ \Sigma $, or 
 the least specific distribution $ \pi_\Sigma $
 is defined by, $\forall \omega, \omega \in \Omega $: 
% $$ \pi_\Sigma(\omega)=\mathrm{min}_{i=1,\ldots,n} \mathrm{max}(1-\beta_i, [p_i](\omega)),$$
%
%\noindent where $ [p](\omega) =1$ if $ \omega \models p $ and where $ [p](\omega) =0$ otherwise;
 %
% A similar way to express $ \pi_\Sigma $ is:
% 
 $$\pi_\Sigma(\omega)= \left\{
\begin{array}{l l}
1 & \   \mbox{if} \ \ \forall\langle\varphi_i,\alpha_i\rangle \in \Sigma, \omega \models \varphi_i\\
\mathrm{min}\{1- \alpha_i \,\vert \, \omega  \nvDash \varphi_i, 
\, \langle\varphi_i, \alpha_i\rangle \in \Sigma\} & \   \mbox{otherwise}\\
\end{array} \right. $$

 \begin{proposition} \label{propos:leastspecific} Let  $ \Sigma $ be a possibilistic  base. 
 For any possibility distribution $ \pi $  on $ \Omega $, $ \pi $ satisfies 
 $ \Sigma $  if and only if $ \pi \leq \pi_\Sigma $.
 \end{proposition}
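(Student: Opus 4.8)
The plan is to reduce the statement to a pointwise reorganization of quantifiers, since ``$\pi$ satisfies $\Sigma$'' and ``$\pi \leq \pi_\Sigma$'' turn out, after unfolding the relevant definitions, to be two ways of writing the very same family of constraints on $\pi$. First I would recall that $\pi$ satisfies $\Sigma$ precisely when the necessity measure $N_\pi$ induced by $\pi$ meets every weight, i.e. $N_\pi(\varphi_i) \geq \alpha_i$ for every $\langle \varphi_i : \alpha_i \rangle \in \Sigma$, where by definition $N_\pi(\varphi_i) = \mathrm{inf}\{1 - \pi(\omega) \mid \omega \in \Omega,\ \omega \nvDash \varphi_i\}$.

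The key step is to rewrite each individual constraint at the level of single interpretations. Using the elementary fact that $\mathrm{inf}\, S \geq \alpha$ holds exactly when $s \geq \alpha$ for every $s \in S$, the constraint $N_\pi(\varphi_i) \geq \alpha_i$ is equivalent to: for all $\omega$ with $\omega \nvDash \varphi_i$ one has $1 - \pi(\omega) \geq \alpha_i$, that is $\pi(\omega) \leq 1 - \alpha_i$. Hence $\pi$ satisfies $\Sigma$ iff for every $\omega \in \Omega$ and every $\langle \varphi_i : \alpha_i \rangle \in \Sigma$ falsified by $\omega$ we have $\pi(\omega) \leq 1 - \alpha_i$; equivalently, for every $\omega$, $\pi(\omega) \leq \mathrm{min}\{1 - \alpha_i \mid \omega \nvDash \varphi_i,\ \langle \varphi_i, \alpha_i \rangle \in \Sigma\}$.

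It then remains to identify this right-hand side with $\pi_\Sigma(\omega)$, which is exactly the content of the defining case split of $\pi_\Sigma$: when $\omega \models \varphi_i$ for all $i$ there is no falsified formula, the constraint set is empty, and both sides read $\pi(\omega) \leq 1$, matching $\pi_\Sigma(\omega) = 1$; otherwise the bound is precisely the displayed minimum, matching the ``otherwise'' clause. Conjoining over all $\omega$ yields $\pi \leq \pi_\Sigma$, and reading the chain of equivalences backwards delivers the converse, since every implication used is reversible.

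I expect no deep obstacle here, as the argument is essentially an unfolding of definitions; the only point demanding genuine care is the handling of the degenerate cases, so that the empty-minimum conventions implicit in the definition of $\pi_\Sigma$ line up correctly with the infimum conventions in the definition of $N_\pi$. Concretely, one must check that an interpretation falsifying none of the $\varphi_i$ (empty constraint set, $\mathrm{min}\, \emptyset = 1$) and a formula $\varphi_i$ admitting no counter-model (where $N_\pi(\varphi_i) = \mathrm{inf}\, \emptyset = 1 \geq \alpha_i$ automatically) both impose the vacuous bound $\pi(\omega) \leq 1$, so that the two degenerate conventions agree and the equivalence holds uniformly over all of $\Omega$.
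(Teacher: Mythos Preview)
Your argument is correct: unfolding $N_\pi(\varphi_i)\geq\alpha_i$ as a pointwise bound $\pi(\omega)\leq 1-\alpha_i$ for every countermodel $\omega$ of $\varphi_i$, and then recognising the conjunction over $i$ as the defining minimum in $\pi_\Sigma(\omega)$, is exactly the standard route, and your treatment of the empty cases is sound. Note, however, that the paper does not actually supply a proof of this proposition; it merely states it and attributes the result to Dubois, Lang and Prade~\cite{DuboisP94}, so there is no in-paper argument to compare against --- your write-up is more detailed than what the paper itself offers.
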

 
 Proposition \ref{propos:leastspecific}  says that  $ \pi_\Sigma $ is the least specific possibility
 distribution satisfying $ \Sigma $ and it has been shown in reference \cite{DuboisP94}.

\subsection{Syntactic Deduction} 

 This subsection introduces some few notions about  
   deduction in possibilistic logic and 
  starts by the  well-known possibilistic inference rules to be handled in this article:

 \begin{definition}  \label{def:inderences}  We define below three  rules, where $ \ell \in \mathcal{L};  
 \varphi,\psi \in \mathcal{NC}$ and $\alpha, \beta \in (0,1] $.
  The first  is possibilistic resolution \cite{DuboisP87,DuboisP90}; 
the second  rule is    Min-Decomposability; and the third rule,  Max-Necessity, follows 
from the   semantic constraint meaning of  $ \langle  \varphi :  \alpha \rangle$.
\begin{itemize}

\item  $\mbox{\underline{Resol}}: \quad \   \langle \, (\vee \ \, \ell \ \, \varphi) : \alpha \rangle, 
\langle \, (\vee \ \, \neg \ell \ \, \psi) : \beta \, \rangle
\ \vdash \ \langle \ (\vee \ \, \varphi  \ \psi) :  
 \mbox{\bf min}\{\alpha, \beta\} \ \rangle$.

\item  $ \mbox{\underline{MinD}}: \quad  \,\langle  \varphi  : \alpha   \rangle,  
 \ \langle  \psi : \beta  \rangle  
\ \vdash  \ \langle \ \{\wedge  \, \varphi  \ \psi \} :
\mbox{\bf min}\{\alpha,  \beta\} \ \rangle$.

%$ \mbox{MiD}:   \{\langle \varphi_1 : \alpha_1  \rangle \ldots 
%\langle \varphi_i : \alpha_i \rangle  \ldots \langle\varphi_k : \alpha_k \rangle 
%  \vdash   \langle  \{\wedge  \, \varphi_1 \ldots \varphi_i \ldots \varphi_k \} :
%\mbox{min}\{\alpha_i \vert 1 \leq i \leq k\}   \rangle $.

\item $ \mbox{\underline{MaxN}}: \quad \langle  \varphi :  \alpha \rangle, 
\ \langle \varphi : \beta \rangle 
\ \vdash \ \langle \ \varphi : \mbox{\em \bf max}\{\alpha, \beta\} \ \rangle$.
\end{itemize}
  
%   $$\hspace{2.cm}\frac{(\,(\vee \ \, X \ \, \varphi_1) \ \alpha ), (\,(\vee \ \, \neg X \ \, \varphi_2) \ \beta ) \vdash (\,(\vee \ \, \varphi_1 \ \varphi_2) \  min\{\alpha, \beta\} \,)}
%{ (\,(\vee \ \, \varphi_1 \ \varphi_2) \  min\{\alpha, \beta\} \,)}{\mathrm{\ \ Resolution}} $$

%\item Possibilistic  Unit- Resolution $$\frac{(\,(\vee \ \, X ) \ \alpha ), (\,(\vee \ \, \neg X \ \, \varphi') \ \beta ) }
%{ (\,(\vee \ \, \varphi') \  min\{\alpha, \beta\} \,)} $$
\end{definition}

%{\bf Remark.} Regarding rule MinD, one cannot   simply
% deduce  the left formula  from the right one. In section \ref{sec:Non-Clausal-Unit-resolution},
%we provide the  converse rule of MinD, namely 
%we will show what one can deduce  when a conjunction 
%$\langle  \{\wedge  \, \varphi_1 \ldots \varphi_i \ldots \varphi_k \} : \alpha   \rangle$ holds.
 
%\vspace{.1cm}
Before formulating the soundness and completeness theorem in possibilistic logic, we need to
introduce the next concept of $ \alpha $-cut;
% Let $ \Sigma $ a possibilistic knowledge base and $ \alpha \in (0,1] .$ .
 we call the
 $ \alpha $-cut (resp. strict $ \alpha $-cut) of $ \Sigma$, denoted $ \Sigma_{\geq \alpha}$
 (resp. $ \Sigma_{> \alpha}$), the set of classical formulas in   $ \Sigma$ having
 a necessity degree at least equal to $ \alpha $  (resp. strictly greater than $ \alpha $),
 namely $ \Sigma_{\geq \alpha} 
 =\{ \varphi  \,\vert  \,\langle\varphi :\beta\rangle \in \Sigma, \beta \geq \alpha\}$  (resp.
 $ \Sigma_{> \alpha} 
 =\{  \varphi  \,\vert  \,\langle\varphi :\beta \rangle \in \Sigma, \beta > \alpha\}$).

 \begin{theorem} \label{theo:soundn+complete}  The following soundness and completeness theorem holds:
 $$ \Sigma \models_\pi \langle\varphi :\alpha \rangle 
 \ \Leftrightarrow \ \Sigma \vdash_\mathrm{Res}  \langle  \varphi :  \alpha \rangle 
 \quad \Longleftrightarrow
 \quad  \Sigma_{\geq \alpha} \models \varphi   
\ \Leftrightarrow \ \Sigma_{\geq \alpha} \vdash \varphi  $$
 
\noindent where $\models_\pi$ means any $ \omega $ compatible with $ \Sigma$ 
 is also compatible with $\langle\varphi :\alpha \rangle  $, or formally, 
 $ \forall \omega, \pi_\Sigma(\omega) \leq \pi_{\{\langle p : \alpha \rangle\}} (\omega) $. 
 $ \vdash_\mathrm{Res}  $ 
% refers to the syntactic part of symbolic logic, which 
 relies on the repeated use  of   possibilistic resolution.
% $ (p \vee q, \beta) (\neg p \vee r, \gamma) \vdash (q \vee r, \mathrm{min}(\beta,\gamma)) $.
 
\end{theorem}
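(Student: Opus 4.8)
The plan is to route all four conditions through a single semantic quantity, namely the necessity degree of $\varphi$ under the least specific distribution $\pi_\Sigma$, and then to peel off the two purely classical equivalences and the resolution equivalence one at a time. First I would unfold the definition of $\models_\pi$. Since $\pi_{\{\langle\varphi:\alpha\rangle\}}(\omega)$ equals $1$ when $\omega\models\varphi$ and $1-\alpha$ otherwise, the requirement $\forall\omega,\ \pi_\Sigma(\omega)\leq\pi_{\{\langle\varphi:\alpha\rangle\}}(\omega)$ collapses to the single constraint $\pi_\Sigma(\omega)\leq 1-\alpha$ for every $\omega$ with $\omega\nvDash\varphi$. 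By the definition of the necessity measure $N$ induced by $\pi_\Sigma$, namely $N(\varphi)=\mathrm{inf}\{1-\pi_\Sigma(\omega)\mid\omega\nvDash\varphi\}$, this is exactly $N(\varphi)\geq\alpha$. Hence $\Sigma\models_\pi\langle\varphi:\alpha\rangle\iff N(\varphi)\geq\alpha$ with no hidden work.

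Second, I would prove the $\alpha$-cut lemma $N(\varphi)\geq\alpha\iff\Sigma_{\geq\alpha}\models\varphi$ directly from the piecewise formula for $\pi_\Sigma$. For the ($\Leftarrow$) direction, given any $\omega\nvDash\varphi$, the hypothesis $\Sigma_{\geq\alpha}\models\varphi$ forces $\omega$ to falsify some $\varphi_i$ carrying weight $\alpha_i\geq\alpha$, so the minimum defining $\pi_\Sigma(\omega)$ is at most $1-\alpha_i\leq 1-\alpha$, which gives $N(\varphi)\geq\alpha$. For the ($\Rightarrow$) direction I would argue contrapositively: a model $\omega$ of $\Sigma_{\geq\alpha}$ that nonetheless falsifies $\varphi$ would satisfy every formula of weight $\geq\alpha$, so all formulas it falsifies carry weight $<\alpha$, whence $\pi_\Sigma(\omega)>1-\alpha$, contradicting $N(\varphi)\geq\alpha$. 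Combined with the first step this yields $\Sigma\models_\pi\langle\varphi:\alpha\rangle\iff\Sigma_{\geq\alpha}\models\varphi$, and the remaining classical equivalence $\Sigma_{\geq\alpha}\models\varphi\iff\Sigma_{\geq\alpha}\vdash\varphi$ is simply ordinary propositional soundness and completeness applied to the unweighted set $\Sigma_{\geq\alpha}$, which I would invoke.

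Finally, for $\Sigma\vdash_\mathrm{Res}\langle\varphi:\alpha\rangle$ I would show it coincides with $\Sigma_{\geq\alpha}\vdash\varphi$. Using Min-Decomposability to put $\Sigma$ together with $\langle\neg\varphi:1\rangle$ in clausal form, a refutation establishes $\langle\varphi:\alpha\rangle$ by deriving the empty clause $\langle\bot:\alpha\rangle$. Soundness of each Resol step is a short semantic check: if both premises are entailed at levels $\alpha$ and $\beta$, their resolvent is entailed at level $\mathrm{min}\{\alpha,\beta\}$. The substantive direction is completeness, and it rests on the layering observation that, since a resolvent's weight is the minimum of its parents' weights, the weight of any derived clause equals the least weight among the input clauses actually used. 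Consequently a possibilistic derivation of $\langle\bot:\alpha\rangle$ employs only clauses of weight $\geq\alpha$ and projects onto a classical refutation inside $\Sigma_{\geq\alpha}\cup\{\neg\varphi\}$, while conversely every classical refutation of $\Sigma_{\geq\alpha}\cup\{\neg\varphi\}$ lifts verbatim to a possibilistic derivation whose final weight is $\geq\alpha$. Together with classical refutation-completeness this closes the loop.

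I expect the main obstacle to be precisely this last layering correspondence: one must verify that no step in the possibilistic derivation silently consumes a clause of weight below $\alpha$, since that would drag the final weight down, and one must check that the reduction to clausal form via Min-Decomposability genuinely preserves the necessity degree of $\varphi$ when $\varphi$ is an arbitrary non-clausal formula rather than a single clause. The semantic equivalences of the first two paragraphs are routine once the definitions are unfolded; the care lies entirely in matching the weighted derivation to the classical one at exactly the right threshold $\alpha$.
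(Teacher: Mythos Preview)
The paper does not actually prove this theorem: it is stated in Section~3 as background from the possibilistic-logic literature (referencing Dubois, Lang, and Prade~1994), accompanied only by the one-line remark that ``the last half of the above expression reduces to the soundness and completeness theorem of propositional logic applied to each level cut of $\Sigma$.'' So there is no proof in the paper to compare against; the theorem is quoted as a known result.

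Your proposal, by contrast, is a genuine proof sketch and it is essentially the standard one. The reduction $\Sigma\models_\pi\langle\varphi:\alpha\rangle\iff N_{\pi_\Sigma}(\varphi)\geq\alpha$ via the explicit form of $\pi_{\{\langle\varphi:\alpha\rangle\}}$ is correct; the $\alpha$-cut lemma is argued cleanly in both directions; and the layering observation for possibilistic resolution (the derived weight is the minimum input weight used, so a derivation of $\langle\bot:\alpha\rangle$ lives entirely inside $\Sigma_{\geq\alpha}$) is exactly the mechanism behind refutation completeness in this setting. Your closing caveats are well placed: the only delicate point is indeed the lift from classical refutation of $\Sigma_{\geq\alpha}\cup\{\neg\varphi\}$ back to a possibilistic derivation with final weight at least $\alpha$, together with the clausal-form conversion via Min-Decomposability. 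Both are handled in the cited reference, and your outline captures them correctly. In short, your argument is sound and more detailed than anything the paper itself supplies.
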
 

 The last half of the above expression  reduces to the soundness and completeness
 theorem of propositional logic applied to each level cut of $ \Sigma $, which is an ordinary
 propositional   base. 

 \subsection{Partial Inconsistency}

 The   inconsistency degree of a base $ \Sigma $ 
 in terms of its $ \alpha $-cut  can be equivalently
defined as  the largest weight $ \alpha $ such that the $ \alpha $-cut of 
$ \Sigma $ is inconsistent:
$$\mathrm{Inc}(\Sigma)=\mathrm{max}\{\alpha  \, \vert \, \Sigma_{\geq \alpha }\, \mathrm{is \ inconsistent}\}.$$

 $ \mathrm{Inc}(\Sigma)=0 $ entails $ \Sigma^* $ is consistent. 
 In  \cite{DuboisP94},  the inconsistency degree of $ \Sigma $  is 
defined by the least possibility distribution $ \pi_\Sigma $,  concretely  
$\mathrm{Inc}(\Sigma)=1-\mathrm{sup}_{\omega \in \Omega} \pi_\Sigma (\omega).$

\vspace{.1cm}
To check   whether $  \varphi$ follows from
  $ \Sigma $, one should add $ \langle\neg \varphi : 1 \rangle $ to $ \Sigma $ and 
 then check  whether
$ \Sigma \cup \{\langle\neg \varphi : 1 \rangle\} \vdash 
\langle \bot : \alpha \rangle$. Equivalently    the maximum $ \alpha $ s.t. 
$ \Sigma \models \langle  \varphi :  \alpha \rangle$ is given by the inconsistency degree of
$ \Sigma \cup \{\langle\neg \varphi : 1 \rangle \}$, i.e. 
$ \Sigma \models \langle  \varphi :  \alpha \rangle$ iff 
$ \alpha=\mathrm{Inc}(\Sigma \cup \{\langle \neg \varphi, 1 \rangle\} ).$

\begin{proposition} \label{pro:iff-iff-iff} The next statements are proven in \cite{DuboisP94}:
 $$ \Sigma \models \langle  \varphi :  \alpha \rangle 
\ \  \mbox{\em iff} \ \ 
\Sigma \cup \{ \langle\neg \varphi : 1 \rangle \}  \vdash \langle\bot : \alpha \rangle 
 \ \ \mbox{\em iff} \ \
\alpha =  \mathrm{Inc}(\Sigma \cup \{\langle\neg \varphi : 1\rangle \})
 \ \ \mbox{\em iff} \ \
 \Sigma_{\geq \alpha} \vdash \varphi. $$

\end{proposition}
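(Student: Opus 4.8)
The plan is to reduce all four statements to assertions about the classical level cuts $\Sigma_{\geq\beta}$ and then to chain them through Theorem~\ref{theo:soundn+complete} together with the definition of $\mathrm{Inc}$. Throughout I abbreviate $\Sigma'=\Sigma\cup\{\langle\neg\varphi:1\rangle\}$ and read the possibilistic entailment $\Sigma\models\langle\varphi:\alpha\rangle$ as the relation $\models_\pi$ of Theorem~\ref{theo:soundn+complete}. Under this reading, the equivalence between $\Sigma\models\langle\varphi:\alpha\rangle$ and $\Sigma_{\geq\alpha}\vdash\varphi$ is nothing but the outer equivalence already supplied by Theorem~\ref{theo:soundn+complete}, so no work is needed there.

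For the remaining links the key observation is a cut-wise decomposition of $\Sigma'$. Because $\neg\varphi$ carries weight $1$, it belongs to every cut, whence $\Sigma'_{\geq\beta}=\Sigma_{\geq\beta}\cup\{\neg\varphi\}$ for all $\beta\in(0,1]$. Applying the classical refutation theorem to the propositional theory $\Sigma_{\geq\beta}$ gives $\Sigma_{\geq\beta}\vdash\varphi$ iff $\Sigma_{\geq\beta}\cup\{\neg\varphi\}$ is inconsistent, i.e. iff $\Sigma'_{\geq\beta}$ is inconsistent. Feeding this into $\mathrm{Inc}(\Sigma')=\max\{\beta\mid\Sigma'_{\geq\beta}\text{ is inconsistent}\}$ yields $\mathrm{Inc}(\Sigma')=\max\{\beta\mid\Sigma_{\geq\beta}\vdash\varphi\}$, which is exactly the largest weight at which $\varphi$ follows from $\Sigma$. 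This is the bridge tying $\mathrm{Inc}(\Sigma')$ to the statement $\Sigma_{\geq\alpha}\vdash\varphi$.

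To bring in the syntactic refutation $\Sigma'\vdash\langle\bot:\alpha\rangle$, I would instantiate the $\vdash_{\mathrm{Res}}$ leg of Theorem~\ref{theo:soundn+complete} with the formula $\bot$: deriving $\langle\bot:\alpha\rangle$ by repeated possibilistic resolution from $\Sigma'$ is sound and complete for $\Sigma'_{\geq\alpha}\models\bot$, that is, for the inconsistency of $\Sigma'_{\geq\alpha}$. Hence the maximal weight at which the empty clause is derivable from $\Sigma'$ is again $\mathrm{Inc}(\Sigma')$, closing the loop between $\Sigma'\vdash\langle\bot:\alpha\rangle$ and $\alpha=\mathrm{Inc}(\Sigma')$.

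The step I expect to be the main obstacle is reconciling the \emph{equality} $\alpha=\mathrm{Inc}(\Sigma')$ of the third statement with the \emph{threshold} character of the other three. Monotonicity of the cuts shows that each of $\Sigma\models\langle\varphi:\alpha\rangle$, $\Sigma'\vdash\langle\bot:\alpha\rangle$ and $\Sigma_{\geq\alpha}\vdash\varphi$ is downward closed in $\alpha$: if it holds at some $\alpha$ it holds at every smaller weight, since shrinking $\alpha$ only enlarges the cut. Thus each of these three delimits a single threshold, and one must check that the threshold is attained (the $\max$ defining $\mathrm{Inc}$ is a genuine maximum, as only the finitely many weights occurring in $\Sigma$ are relevant, the degenerate value $0$ meaning that $\varphi$ does not follow at any positive level). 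Once attainment is secured, all three thresholds coincide with $\mathrm{Inc}(\Sigma')$, so the four statements pin down the same distinguished value of $\alpha$ and the chain of equivalences closes.
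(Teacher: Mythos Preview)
The paper does not supply its own proof of this proposition: it simply records the chain of equivalences and attributes them to \cite{DuboisP94}. Your write-up therefore goes beyond what the paper does, by actually sketching the argument rather than citing it.

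Your route---reducing everything to the classical cuts $\Sigma_{\geq\beta}$, using the observation $\Sigma'_{\geq\beta}=\Sigma_{\geq\beta}\cup\{\neg\varphi\}$, and invoking Theorem~\ref{theo:soundn+complete} both for $\varphi$ and for $\bot$---is the standard one and is correct. You also correctly flag the one genuine wrinkle: as literally written, the clause ``$\alpha=\mathrm{Inc}(\Sigma')$'' is an \emph{equality} while the other three clauses are downward-closed in $\alpha$, so the four cannot be equivalent pointwise for every $\alpha$. The surrounding text in the paper makes clear that the intended reading is ``the largest $\alpha$ for which the threshold statements hold is $\mathrm{Inc}(\Sigma')$'', and your argument establishes exactly that (with attainment following from the finiteness of the weight set). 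So your interpretation and your proof are both sound; just be aware that you are repairing a slightly loose formulation rather than proving the literal iff-chain.
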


This result shows that any deduction problem in possibilistic logic can be viewed as computing
an inconsistency degree.

  \subsection{Clausal and Non-Clausal  Bases}
 
According to previous definitions,   to each  class of propositional formulas
corresponds a class of possibilistic bases.
Below, we   define   the possibilistic   classes  handled here
  and after recall  the    complexity
  of computing the inconsistency degree of their members.  
%
%The  fourth defined class     is the new  one presented here
% (in Section \ref{sec:definClassH-NestChapeau})   
%whose complexity is  proven to be polynomial.

    \begin{definition}  
%A  possibilistic   formula is a pair $ (\varphi, \alpha )$, where
% $ \varphi  $ is a propositional formula and $ \alpha \in (0 \ 1] $.
%A  possibilistic     base  $\Sigma=\{ (\varphi_i \ \alpha_i )\, \vert \, i=1, \ldots, k \,\}$ is a collection of  possibilistic formulas. 
A possibilistic base $\Sigma=\{ \langle\varphi_i : \alpha_i \rangle\, \vert \, i=1, \ldots, k \,\}$ is  called  Horn, clausal or NC
if  all its   formulas $ \varphi_i, 1 \leq i \leq 1 $, are Horn, clausal or NC, 
 respectively. $\mathcal{H}_\Sigma$, $\mathcal{C}_\Sigma$ and $\mathcal{NC}_\Sigma$  
denote, respectively, the classes of possibilistic  Horn, clausal and NC  bases. 
\end{definition}

In this article we define in Definition   \ref{def:HNCKnowledgeBase}  
a novel possibilistic class, i.e. the possibilistic  Horn-NC class.  
   Next we just define  acronyms associated to   the problems
  of  computing the  inconsistency degree of   
  the four mentioned possibilistic  classes. 

\begin{definition}  \label{def:complexitiesofclassesofbases} Horn-INC, 
CL-INC, Horn-NC-INC  and NC-INC denote 
respectively the problems of computing the  inconsistency degree of 
 Horn, clausal, Horn-NC and NC
   bases.
\end{definition}

% \vspace{.2cm}
\noindent  {\bf Complexities.} Regarding the complexities of the previous problems, we have:
\begin{enumerate}
\item [$ \bullet $]  Clausal Pbs.: CL-INC is Co-NP-complete \cite{Lang00} and  
  Horn-INC is polynomial \cite{Lang00}.

\vspace{-.1cm}
\item [$ \bullet $]  NC-INC   is   Co-NP-complete. This claim stems from:
 (i) Theorem \ref{theo:soundn+complete} applies  to both clausal and NC
bases; and (ii)  checking whether
an interpretation is a model of an NC propositional formula is polynomial 
as for  clausal formulas. 
%Given (i) and (ii) 
%and since CL-INC is Co-NP-complete    \cite{Lang00} so is NC-INC.

\vspace{-.1cm}
\item [$ \bullet $] Horn-NC-INC has polynomial complexity as proven in Section 
\ref{sect:ProofsProperties}.

\end{enumerate}

 \section{The Possibilistic Horn-NC  Class: $\mathcal{\overline{H}}_\Sigma$} \label{sec:definClassHorn-NCChapeau}

 This section  defines   the class {\em $\mathcal{\overline{H}}_\Sigma$} 
 of Possibilistic   Horn Non-Clausal (Horn-NC) 
  bases and states its properties and relationships with other possibilistic classes.
  The   proofs   were  given in \cite{Imaz2021horn} but are provided in an Appendix for
the sake of the paper being self contained.

  \vspace{.15cm} 
 {\em $\mathcal{\overline{H}}_\Sigma$}  subsumes  the next  two classes:

\begin{itemize}

\vspace{-.15cm}
\item   possibilistic Horn, or $\mathcal{{H}}_\Sigma$; and

\vspace{-.2cm}
\item   propositional  Horn-NC, or $\mathcal{\overline{H}}$ (recently presented  \cite{Imaz2021horn}).

\end{itemize}

\vspace{-.15cm}
\noindent    We first define the latter, i.e. the   class $\mathcal{\overline{H}}$ of 
propositional Horn-NC formulas, which is the propositional component of 
 the new possibilistic  class $\mathcal{\overline{H}}_\Sigma$ to be introduced.

 \subsection{Simple Definition of  $\mathcal{\overline{H}}$}
 
Below,  we       define $\mathcal{\overline{H}}$ in a simple way,  and  in the next
subsection, will give its detailed   definition   by taking a closer look
to this simple definition.
%\vspace{.1cm}
\noindent   We start by defining   
the negative formulas, which 
        generalize   the negative     literals in  the clausal framework.

 \begin{definition}  \label{def:negative} A  non-clausal formula is negative if it 
  has uniquely negative literals. 
 We will denote  $\mathcal{N }^-$   the set of
  negative formulas. 
\end{definition}

%Obviously negative formulas are {\em  $\mathrm{{HNC}}$,} i.e. $\mathcal{N}^- \subset \mathcal{{H}}_{\mbox{nc}}$.

\begin{example}   Trivially negative literals are basic negative formulas. Another 
example of negative NC formula is $(\vee \ \ \{\wedge   \  \, \neg P   \  \, \neg R   \, \} 
\ \ \{\wedge \ \, \neg S   \  \,(\vee \ \, \neg P    \ \, \neg Q   \,)\,\}\,) \in \mathcal{N}^-$. \qed
\end{example}

 Next we   upgrade   the Horn  pattern 
 ``a   Horn clause has (any number of negative literals and) 
 at most  one   positive   literal"  to the NC 
 context  in the next   straightforward way:

\begin{definition} \label{theorem:visual} An   NC formula is  Horn-NC
  if all its disjunctions have any number of negative disjuncts and at most one 
 non-negative disjunct. 
We denote   $\mathcal{\overline{H}}$ the class of   Horn-NC formulas.
\end{definition}

%\noindent {\bf  Remark}.    As  Definition   \ref{theorem:visual} 
%is not concerned with how formulas are modeled,     our approach  
%also applies when   they  are  represented 
%     by  DAGs and not just by trees.

%\vspace{.2cm}  
  Clearly  the class $\mathcal{\overline{H}}$  subsumes  the Horn class $  \mathcal{H}$.
%\vspace{.1cm} 
   From Definition \ref{theorem:visual} it follows trivially  that all 
sub-formulas of any Horn-NC  are Horn-NC too.
 Yet, the converse does not hold:   there are non-Horn-NC formulas
whose all sub-formulas are Horn-NC.

\begin{example} \label{exam:simple}  
One can see that $\varphi_1$ below has only one non-negative disjunct and so 
 $\varphi_1$ is   Horn-NC,  while $\varphi_2$  is not Horn-NC as it  
 has two non-negative disjuncts.
 \begin{itemize}
\item   $\varphi_1 = (\vee  \   \  \{\wedge \  \   \neg Q  \ \    \neg S  \} \  \  
\{\wedge \  \   R   \   \   P   \,\} \,).$ 
 
\item   $\varphi_2 = (\vee  \ \ \{\wedge  \ \  \neg Q  \ \ S   \} \ \ 
\{\wedge \ \ R  \ \ \neg P  \,\} \,)$. \qed

\end{itemize} 
\end{example}

 \begin{example} \label{Ex:morecomplex} We  now consider     
  both $\varphi$  in Example \ref{ex:constantsfree} (copied below) 
   and $\varphi'$ below too, which results from $\varphi$ by 
just switching  its  literal $\neg P  $  for $P   $: 
\begin{itemize}

\item $ \varphi=(\vee \ \ \neg P  \ \  \{\wedge \ \ (\vee \ \ \neg P   \ \ R  )  \ \
\{\wedge \ \ Q   \ \   (\vee \ \  P   \ \ \neg S  ) \, \} \, \}  )$

\item $\varphi'= (\vee \ \ \ P   \ \  \,\{\wedge \ \ (\vee \ \ \neg P 
   \ \ R   )  \ \
\{\wedge \ \ Q   \ \   (\vee \ \  P   \ \ \neg S  ) \, \} \, \}  )$ 

\end{itemize}
\noindent    
All     disjunctions of 
$\varphi$, i.e. $(\vee  \ \, \neg P   \ \, R   )$, 
$(\vee \  P    \  \neg S  )$ and   $\varphi$ itself (By Definition \ref{def:sub-for},  
$\varphi$ is  a sub-formula of   $\varphi$), have exactly one non-negative disjunct; 
so $\varphi$  is Horn-NC.
Yet, $\varphi' =(\vee \  P   \ \phi)$,
 $\phi$  being  non-negative, has two
non-negative disjuncts; thus  $\varphi'$ is not Horn-NC.  \qed
\end{example}

 \subsection{Detailed Definition of  $\mathcal{\overline{H}}$}

\noindent Before giving a  fine-grained definition of $\mathcal{\overline{H}}$,  we individually  
and inductively specify: 

\vspace{.2cm}
$\bullet$          Horn-NC conjunctions,    in Lemma \ref{def:HNCconjunc}, and

\vspace{.08cm}
$\bullet$     Horn-NC disjunctions,   in Lemma \ref{def:disjunHNC},

 \vspace{.2cm}
\noindent  and subsequently,   we {\em compactly}  specify   $\mathcal{\overline{H}}$   
 by  merging  both specifications.
 %    into an inductive function  
% given in Definition \ref{def:syntacticalNC}. 

\vspace{.15cm}
Just as conjunctions of Horn  clausal  formulas     are    Horn too, 
%      and   according to     Example \ref{ex:Conjuction},       
% and   a similar kind of {\em Horn-like compliance} also holds in NC,   viz.  
 likewise   conjunctions of Horn-NC formulas are Horn-NC too, which is straightforwardly formalized next.
  
  \begin{lemma} \label{def:HNCconjunc} 
  Conjunctions of   
      Horn-NC formulas  are   Horn-NC as well, formally:
  $$\{\wedge \  \varphi_1   \,\ldots\,  \varphi_{i} \ldots \,\varphi_k\} \in \mathcal{\overline{H}}
  \mbox{\em \ \ iff \ \ for }   1 \leq i   \leq  k, \ \varphi_i  \in \mathcal{\overline{H}}.$$
  \end{lemma}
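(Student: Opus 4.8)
The plan is to prove the biconditional by directly unwinding Definition \ref{theorem:visual}, which characterizes $\mathcal{\overline{H}}$ purely in terms of the disjunctions occurring in a formula. The key observation is that the set of disjunctions appearing as sub-formulas of a conjunction $\{\wedge \ \varphi_1 \ldots \varphi_k\}$ is exactly the union of the sets of disjunctions appearing in the individual conjuncts $\varphi_i$; adding a top-level $\wedge$-node introduces no new disjunction. This is the single structural fact that makes the statement almost immediate, and it follows directly from Definition \ref{def:sub-for}, since the sub-formulas of the conjunction are the conjunction itself (which is not a disjunction) together with the sub-formulas of each $\varphi_i$.

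First I would make the forward direction ($\Rightarrow$) precise. Assume $\{\wedge \ \varphi_1 \ldots \varphi_k\} \in \mathcal{\overline{H}}$. By Definition \ref{theorem:visual}, every disjunction occurring as a sub-formula has at most one non-negative disjunct. Fix any $i$ with $1 \leq i \leq k$. Every disjunction occurring in $\varphi_i$ is, by Definition \ref{def:sub-for}, also a disjunction occurring in the whole conjunction, hence satisfies the at-most-one-non-negative-disjunct condition. Therefore $\varphi_i \in \mathcal{\overline{H}}$. (This is in fact just the already-noted remark after Definition \ref{theorem:visual} that all sub-formulas of a Horn-NC formula are Horn-NC, specialized to the conjuncts.)

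For the reverse direction ($\Leftarrow$), suppose $\varphi_i \in \mathcal{\overline{H}}$ for every $i$. I would take an arbitrary disjunction $D$ occurring as a sub-formula of $\{\wedge \ \varphi_1 \ldots \varphi_k\}$ and show it has at most one non-negative disjunct. Since the top-level connective is $\wedge$, the node of $D$ lies strictly below the root, so by Definition \ref{def:sub-for} $D$ is a sub-formula of some conjunct $\varphi_i$. As $\varphi_i$ is Horn-NC, $D$ obeys the disjunction condition. Because $D$ was arbitrary, every disjunction of the conjunction satisfies the condition, and so by Definition \ref{theorem:visual} the conjunction is Horn-NC.

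I do not anticipate a genuine obstacle here, as the statement is essentially a bookkeeping fact; the only point requiring mild care is the justification that the set of disjunctions of the conjunction decomposes as claimed, i.e. that a top-level conjunction node contributes no new disjunction and that every proper sub-disjunction is inherited from exactly one conjunct. Both are read off directly from the inductive definition of sub-formulas in Definition \ref{def:sub-for}, so once that decomposition is stated the two implications are symmetric one-line arguments.
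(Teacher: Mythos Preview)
Your proposal is correct and follows exactly the same approach as the paper: both arguments appeal directly to Definition~\ref{theorem:visual} and use the fact that the disjunctions occurring in $\{\wedge\ \varphi_1 \ldots \varphi_k\}$ are precisely those occurring in the individual $\varphi_i$'s. The paper compresses this into a single sentence (``it is obvious that if all sub-formulas $\varphi_i$ individually verify Definition~\ref{theorem:visual} so does a conjunction thereof, and vice versa''), whereas you spell out the sub-formula decomposition via Definition~\ref{def:sub-for} more carefully, but the substance is identical.
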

  
\begin{niceproof} It is obvious that if all  sub-formulas $\varphi_{i}$ 
individually verify Definition \ref{theorem:visual}
so does a conjunction thereof, and vice versa.
\end{niceproof}

\begin{example}  \label{ex:Semant-Conjunt} If  $H$ is   Horn, 
 $\phi_1$ is $ \varphi_1 $ from Example \ref{exam:simple} and 
 $\phi_2$ is  $ \varphi $ from  Example \ref{Ex:morecomplex}, 
 i.e. $\phi_1$ and $\phi_2$ are  Horn-NC, then for instance  
$\varphi_1=\{\wedge \ H \ \phi_1 \    \phi_2  \}$ is Horn-NC. \qed
\end{example}

In order to give now a detailed  definition of  $ \mathcal{\overline{H}} $,  we    
    verify that  Definition \ref{theorem:visual}    can be 
  equivalently   reformulated in the next inductive manner:
 {\em ``an NC is Horn-NC if all its   disjunctive sub-formulas have any number 
 of negative disjuncts and  one disjunct is Horn-NC".}
 This leads  to the next
    formalization and  statement.

%\vspace{.1cm}
\begin{lemma} \label{def:disjunHNC}  %  The 
A NC disjunction $\varphi=(\vee \ \varphi_1 \ldots \varphi_i \ldots  \varphi_k)$ 
with  $k \geq    1$ disjuncts 
pertains to $ \mathcal{\overline{H}} $  iff  
$ \varphi $ has  $k-1$  negative   disjuncts and one Horn-NC disjunct,    
 formally 
$$\varphi=(\vee \ \varphi_1 \ldots \varphi_i \ldots \varphi_k) \in \mathcal{\overline{H}} \mbox{\em \ \ iff} 
  \quad \exists i \ \mbox{\em s.t.} \ \varphi_i \in \mathcal{\overline{H}} 
\ \ \mbox{\em and} \  \ \forall j \neq i,  \varphi_j \in \mathcal{N}^-.$$ 
\end{lemma}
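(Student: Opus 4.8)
The plan is to prove the biconditional by direct appeal to Definition \ref{theorem:visual}, which characterizes membership in $\mathcal{\overline{H}}$ by the requirement that every disjunction occurring in a formula carry at most one non-negative disjunct. Before treating the two directions I would record two auxiliary observations that carry the bulk of the argument. First, $\mathcal{N}^- \subseteq \mathcal{\overline{H}}$: a negative formula contains only negative literals, so every disjunction occurring in it has zero non-negative disjuncts and the condition of Definition \ref{theorem:visual} holds vacuously. Second, the already-noted fact that every sub-formula of a Horn-NC formula is itself Horn-NC, since each disjunction of a sub-formula is also a disjunction of the whole formula.

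For the ($\Leftarrow$) direction I would assume there is an index $i$ with $\varphi_i \in \mathcal{\overline{H}}$ and $\varphi_j \in \mathcal{N}^-$ for all $j \neq i$, and then verify that every disjunction of $\varphi$ meets Definition \ref{theorem:visual}. The top-level disjunction $\varphi$ has disjuncts $\varphi_1, \ldots, \varphi_k$, of which all but possibly $\varphi_i$ are negative; hence it has at most one non-negative disjunct. Any disjunction properly nested inside some $\varphi_j$ with $j \neq i$ has only negative disjuncts, because $\varphi_j$ is negative; and any disjunction nested inside $\varphi_i$ satisfies the condition because $\varphi_i \in \mathcal{\overline{H}}$. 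Collecting these cases yields $\varphi \in \mathcal{\overline{H}}$.

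For the ($\Rightarrow$) direction I would assume $\varphi \in \mathcal{\overline{H}}$ and apply Definition \ref{theorem:visual} to the top-level disjunction $\varphi$ itself, obtaining that at most one of $\varphi_1, \ldots, \varphi_k$ is non-negative. If exactly one, say $\varphi_i$, is non-negative, then all remaining disjuncts lie in $\mathcal{N}^-$, while $\varphi_i \in \mathcal{\overline{H}}$ by the sub-formula observation; this is precisely the claimed decomposition. If none of the disjuncts is non-negative, then all of them lie in $\mathcal{N}^- \subseteq \mathcal{\overline{H}}$, so choosing any index $i$ gives $\varphi_i \in \mathcal{\overline{H}}$ together with $\varphi_j \in \mathcal{N}^-$ for every $j \neq i$.

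I expect the only delicate point to be the boundary case in which $\varphi$ has no non-negative disjunct at all: the lemma's phrasing insists that one distinguished disjunct be Horn-NC, and this is reconciled with the ``at most one non-negative disjunct'' wording of Definition \ref{theorem:visual} precisely through the inclusion $\mathcal{N}^- \subseteq \mathcal{\overline{H}}$, which lets a negative disjunct play the role of the single Horn-NC disjunct. Establishing that inclusion cleanly is therefore the hinge of the proof; the remainder is a routine unfolding of the definition.
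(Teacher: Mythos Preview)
Your argument is correct and follows essentially the same route as the paper: both directions rest on unfolding Definition~\ref{theorem:visual} and checking the top-level disjunction together with the nested ones. The paper handles the ($\Rightarrow$) direction by contrapositive rather than directly, and is terser about the all-negative edge case; your explicit treatment of that case via $\mathcal{N}^- \subseteq \mathcal{\overline{H}}$ is a welcome clarification but not a different idea.
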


  \begin{niceproof} See Appendix.
  \end{niceproof}

%\vspace{.1cm}
\noindent The next claims follow trivially from Lemma \ref{def:disjunHNC}:
 
 \vspace{.2cm}           
  $\bullet$  Horn clauses are  non-recursive Horn-NC disjunctions.
  
  \vspace{.1cm}        
  $\bullet$ NC disjunctions with all negative disjuncts are Horn-NC.
 
 \vspace{.1cm}
  $\bullet$  NC disjunctions with $ k \geq    2$      non-negative disjuncts are not Horn-NC.  
 
\vspace{.2cm}
\noindent Next, we first reexamine, bearing Lemma \ref{def:disjunHNC} in mind,
 the  formulas from Example  \ref{exam:simple}, included in  Example \ref{exam:NF},  
 and  then those from Example \ref{Ex:morecomplex}, included in Example \ref{ex:disjunHNC}.

\begin{example} \label{exam:NF} Below we analyze   $\varphi_1$  and $\varphi_2$  from  Example \ref{exam:simple}.
%By Defin. \ref{def:disjunHNC},    we have:

\begin{itemize}

\item    $\varphi_1 = (\vee  \   \  \{\wedge \  \ \neg Q   \ \  \neg S  \} \  \ 
\{\wedge \  \ R   \   \  P   \,\} \,).$  

-- Clearly $\{\wedge \  \ \neg Q  \ \  \neg S  \} \in \mathcal{N}^-$.   

-- By Lemma \ref{def:HNCconjunc},
$\{\wedge \  \ R   \   \ P   \,\} \in \mathcal{\overline{H}}$.

-- By Lemma \ref{def:disjunHNC},  $\varphi_1 \in \mathcal{\overline{H}}$.

\item   $\varphi_2 = (\vee   \   \   \{\wedge  \ \  \neg Q   \   \ S   \} \  \ 
\{\wedge \  \ R   \  \ \neg P   \,\} \,)$.

-- Obviously $\{\wedge  \ \  \neg Q   \   \ S   \} \notin \mathcal{N}^-$ 
and \ $\{\wedge \  \ R   \  \ \neg P   \,\} \notin \mathcal{N}^-$.

-- According to Lemma \ref{def:disjunHNC}, 
  $\varphi_2 \notin \mathcal{\overline{H}}$. \qed

\end{itemize}
    
\end{example}

\begin{example} \label{ex:disjunHNC} Consider again     $\varphi$ 
 and $\varphi'$  from Example \ref{Ex:morecomplex}  and 
     recall that $\varphi'$ results from $\varphi$ by  
  just  switching  its literal  $\neg P $ for $P $.   
%
%$$\varphi= (\vee \ \, P_{\leq   }  .4 \ \,  \{\wedge  \  \, (\vee  \ \, P_{\leq   } .3 \ \, R^{\geq   } .8 )  \ \, 
%\{\wedge \ \, Q^{\geq   } .6 \ \,   (\vee \  P^{\geq   } .7  \  S_{\leq   } .1) \, \} \, \}  ) $$
%
 Below we    check  one-by-one
 whether or not the sub-formulas of both  $\varphi$ and  $\varphi'$ are in $\mathcal{\overline{H}}$. 
\begin{itemize}

\item By  Lemma \ref{def:disjunHNC}, $(\vee  \ \ \neg P   \ \ R   ) \in \mathcal{\overline{H}}$.

\item By  Lemma \ref{def:disjunHNC},  $(\vee \ \ P   \  \  \neg S )  \in \mathcal{\overline{H}}$.

\item  By  Lemma  \ref{def:HNCconjunc}, 
 $\{\wedge \ \ Q   \ \   (\vee \  \ P  \ \  \neg S  ) \, \}\in \mathcal{\overline{H}}$. 
 
\item  By  Lemma  \ref{def:HNCconjunc}, 
 $\phi=\{\wedge  \  \ (\vee  \ \ \neg P   \ \ R   )  \ \ 
\{\wedge \ \ Q   \ \   (\vee \ \ P   \ \  \neg S) \, \} \, \} \in \mathcal{\overline{H}}$.
 
% --   By Definition  \ref{def:HNCconjunc},     $\phi \in \mathcal{\overline{H}}$.  

\item Using previous formula $\phi$, we have $\varphi= (\vee \ \neg P   \ \,\phi \,)$. 

 \hspace{.5cm} -- Since $\neg P   \in \mathcal{N}^-$
and   $\phi \in \mathcal{\overline{H}}$, by Lemma \ref{def:disjunHNC}, $\varphi \in \mathcal{\overline{H}}$.
    
\item  The second formula in  Example \ref{Ex:morecomplex}  is 
$\varphi'=(\vee \   P   \ \phi \,)$. 

\hspace{.5cm} -- Since $P  ,\,\phi \notin \mathcal{N}^-$,   by Lemma \ref{def:disjunHNC},  $\varphi' \notin \mathcal{\overline{H}}$. \qed

\end{itemize} 
\end{example}

\noindent   By using   Lemmas  \ref{def:HNCconjunc}  and    \ref{def:disjunHNC}, the class  $\mathcal{\overline{H}}$ is syntactically, compactly and inductively defined as follows.

\begin{definition} \label{def:syntacticalNC}   
We  define the set    $\mathcal{\widehat{H}}$  
over the set of propositional variables $ \mathcal{P} $ as the smallest set such that the
conditions below hold,  where
 $k \geq    1$ and $\mathcal{L}$ is the set of   literals.
 % and $i,j \in \{1, \ldots, k\}$:
\begin{itemize}

\item [(1)] \ $\mathcal{L} \subset \mathcal{\widehat{H}}.$    \hspace{10.05cm}   

\item [(2)]  \  If  \ $\forall i, \,\varphi_i  \in \mathcal{\widehat{H}} \ \  \mbox{then} \  \
   \{\wedge \  \varphi_1   \,\ldots\, \varphi_{i} \ldots  \varphi_k\} 
   \in \mathcal{\widehat{H}}.$ \hspace{3.27cm}  

\item  [(3)]  \  If \  $\varphi_i \in \mathcal{\widehat{H}}$ \  and \  
 $\forall j \neq i$,  $\varphi_j \in \mathcal{N}^- \  \ \mbox{then} \ \ (\vee \ \varphi_1 \ldots \varphi_i \ldots   \,\varphi_k) 
 \in \mathcal{\widehat{H}}.$ \hspace{.24cm}  
 % \hspace{5.6cm} {\em ({3})}
%{\em has only negative literals}
% $$ \mbox{\em then \ } (\vee \ \varphi_1 \ldots \varphi_i \ldots   \,\varphi_k) 
% \in \mathcal{\overline{H}}.$$     
 
\end{itemize}

\end{definition}

\begin{theorem} \label{th:HNCequality}  We have that \ $\mathcal{\widehat{H}}=\mathcal{\overline{H}}$.
\end{theorem}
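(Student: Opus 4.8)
The plan is to prove the set equality $\mathcal{\widehat{H}}=\mathcal{\overline{H}}$ by double inclusion, in each direction by structural induction on NC formulas, leaning on the recursive characterizations of Horn-NC conjunctions and disjunctions already supplied by Lemmas \ref{def:HNCconjunc} and \ref{def:disjunHNC}. The crucial observation is that the three generating rules of $\mathcal{\widehat{H}}$ in Definition \ref{def:syntacticalNC} mirror exactly these lemmas together with a trivial base case: rule (1) corresponds to the fact that a bare literal is vacuously Horn-NC, as it contains no disjunction and hence satisfies Definition \ref{theorem:visual} by default; rule (2) is precisely Lemma \ref{def:HNCconjunc}; and rule (3) is precisely the biconditional of Lemma \ref{def:disjunHNC}. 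Thus the two definitions are two readings of the same recursion, one bottom-up and generative, the other top-down and analytic, and the induction has only to transport membership across this correspondence.

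For the inclusion $\mathcal{\widehat{H}}\subseteq\mathcal{\overline{H}}$ I would induct on the derivation witnessing $\varphi\in\mathcal{\widehat{H}}$. If $\varphi\in\mathcal{L}$ it has no disjunctive sub-formula, so it satisfies Definition \ref{theorem:visual} vacuously and $\varphi\in\mathcal{\overline{H}}$. If $\varphi=\{\wedge\ \varphi_1\ldots\varphi_k\}$ arises by rule (2), the inductive hypothesis gives each $\varphi_i\in\mathcal{\overline{H}}$, whence Lemma \ref{def:HNCconjunc} yields $\varphi\in\mathcal{\overline{H}}$. If $\varphi=(\vee\ \varphi_1\ldots\varphi_k)$ arises by rule (3), then some $\varphi_i\in\mathcal{\widehat{H}}$ and $\varphi_j\in\mathcal{N}^-$ for every $j\neq i$; the inductive hypothesis gives $\varphi_i\in\mathcal{\overline{H}}$, and the right-to-left direction of Lemma \ref{def:disjunHNC} yields $\varphi\in\mathcal{\overline{H}}$.

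For the converse inclusion $\mathcal{\overline{H}}\subseteq\mathcal{\widehat{H}}$ I would induct on the structure of $\varphi\in\mathcal{\overline{H}}$. A literal lies in $\mathcal{\widehat{H}}$ by rule (1). If $\varphi$ is a conjunction, Lemma \ref{def:HNCconjunc} forces every conjunct to be Horn-NC, the inductive hypothesis places each in $\mathcal{\widehat{H}}$, and rule (2) closes the case. If $\varphi$ is a disjunction, the left-to-right direction of Lemma \ref{def:disjunHNC} supplies an index $i$ with $\varphi_i\in\mathcal{\overline{H}}$ and $\varphi_j\in\mathcal{N}^-$ for every $j\neq i$; the inductive hypothesis gives $\varphi_i\in\mathcal{\widehat{H}}$, whereupon rule (3), which asks only that the distinguished disjunct be in $\mathcal{\widehat{H}}$ and the remaining ones be negative, places $\varphi$ in $\mathcal{\widehat{H}}$.

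Both inductions are well-founded because every sub-formula invoked is strictly smaller than $\varphi$, so no circularity arises. I do not expect a genuine obstacle: the only point demanding care is the bookkeeping in the disjunction step of the second inclusion, namely that rule (3) does not require the negative disjuncts $\varphi_j$ to belong to $\mathcal{\widehat{H}}$ but merely to $\mathcal{N}^-$, so the inductive hypothesis need only be invoked for the single Horn-NC disjunct $\varphi_i$. One may additionally note, as a free by-product settled by the same induction, that $\mathcal{N}^-\subseteq\mathcal{\widehat{H}}$, which confirms that this asymmetric treatment of the two kinds of disjunct is harmless.
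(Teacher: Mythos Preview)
Your proposal is correct and follows essentially the same approach as the paper: double inclusion proved by structural induction, with Lemmas \ref{def:HNCconjunc} and \ref{def:disjunHNC} carrying the conjunction and disjunction cases respectively. If anything, your write-up is more explicit than the paper's, particularly for the inclusion $\mathcal{\overline{H}}\subseteq\mathcal{\widehat{H}}$, and your remark that rule (3) requires only the distinguished disjunct to lie in $\mathcal{\widehat{H}}$ is a useful clarification that the paper leaves implicit.
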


\begin{niceproof} See Appendix.
\end{niceproof}

  Theorem \ref{th:HNCequality} below   states   that   $\mathcal{\widehat{H}}$ 
and  $\mathcal{\overline{H}}$ indeed coincide, namely  Definition \ref{def:syntacticalNC}  
is the   recursive and compact definition of the class 
$\mathcal{\overline{H}}$ of Horn-NC formulas. Besides, inspired  by    Definition  \ref{def:syntacticalNC},  in \cite{Imaz2021horn}
  a  linear algorithm  is designed 
  that recognizes whether a given  
 NC    $ \varphi $ is Horn-NC and,   such property is  inherited   
 by the possibilistic Horn-NC  formulas. 

%\vspace{.0cm}
\begin{example} \label{ex:ExamComplet} Viewed from Definition \ref{def:syntacticalNC}, we analyze   $\varphi$ and $\varphi'$ from Example \ref{ex:disjunHNC}: 
\begin{itemize}
\item By   ({3}),   $(\vee  \ \ \neg P   \ \ R   ) \in \mathcal{\overline{H}}$. 

\item By   ({3}), $(\vee \ \ P   \ \  \neg S  )  \in \mathcal{\overline{H}}$. 

\item  By    ({2}),  $\{\wedge \ \ Q   \ \ (\vee \  P  \ \  \neg S  ) \, \}
 \in \mathcal{\overline{H}}$. 

\item  By   ({2}), 
 $\phi=\{\wedge  \  \ (\vee  \ \ \neg P   \ \ R   )  \ \ 
\{\wedge \ \ Q   \ \   (\vee \ \ P  \ \  \neg S  ) \, \} \, \} 
\in \mathcal{\overline{H}}$.
 
\item By  ({3}), $\varphi= (\vee \ \ \neg P  \ \ \phi \,) \in \mathcal{\overline{H}}$

\item  By   ({3}),
$\varphi'=(\vee \ \ P  \ \ \phi \,) \notin \mathcal{\overline{H}}$. \qed

%Since $ \alpha \notin \mathcal{N}^-$ and   $\phi \notin \mathcal{N}^-$,
% by    Definition \ref{def:disjunHNC}, $\varphi'$  is not $\mathrm{\overline{HNC}}$.
\end{itemize}
\end{example}

\begin{example} \label{ex:nested} If we assume that 
 $\varphi_1$, $\varphi_2$ and  $\varphi_3$ are negative  and 
 $\varphi_4$ and $\varphi_5$ are  Horn-NC, then  according to Definition \ref{def:syntacticalNC},   
 four examples of  Horn-NC    formulas  follow. 
\begin{itemize}

 \item  By    ({3}), \ $\varphi_6=(\vee \ \  \varphi_1 \ \  \varphi_4 ) \in \mathcal{\overline{H}}$.

 \item  By    ({2}), \ $\varphi_7=\{\wedge \ \  \varphi_1 \ \  \varphi_5 \ \ \varphi_6 \} 
 \in \mathcal{\overline{H}}$.

 \item  By    ({3}), \ $\varphi_8=(\vee \ \ \varphi_1 \  \ \varphi_2 \ \  
   \varphi_7) \in \mathcal{\overline{H}}$.
   
\item By  ({2}), \ $\varphi_9=\{\wedge \ \  \varphi_6 \ \  \varphi_7 \ \ \varphi_8 \} 
 \in \mathcal{\overline{H}}$. \qed

\end{itemize}

\end{example}

\noindent Next, we analyze a more complete example, concretely $\varphi  $ from the Introduction.

\begin{example} \label{ex:introduction} Let us take $ \varphi $ below, wherein 
$ \phi_1, \phi_2$ and  $\phi_3$ are NC formulas:  

\vspace{.25cm}
$\varphi=\{\wedge \ \,   P   \ \,  (\vee \ \ \neg Q   \ \ \{\wedge \ \ 
(\vee \ \ \neg P  \ \ \neg Q  \ \ R \,) \ 
\ (\vee \ \ \phi_1 \ \ \{\wedge \ \ \phi_2  \ \  \neg P \,\} \, )
 \ \, Q \,\} \,) \ \  {\phi_3} \ \}.$

\vspace{.25cm}
%\noindent   We check that $ \varphi  \in \mathcal{\overline{H}} $ if at least one of
% $ \phi_1 $ and $ \phi_2 $ is negative and also if $ \varphi'  \in \mathcal{\overline{H}} $. 
\noindent The  disjunctions of $ \varphi $ and the proper $ \varphi $ can be rewritten as follows: 
\begin{itemize}
\item $\psi_1= (\vee \  \ \neg P  \  \ \neg Q  \ \  R \,) $.

\item   $\psi_2= (\vee \ \ \phi_1 \  \ \{\wedge \ \ \phi_2  \ \ \neg P \} \, ) $.

\item  $\psi_3=(\vee \ \ \neg Q   \ \ \{\wedge \ \ \psi_1 \ \ 
\psi_2 \ \, Q   \,\} \,) .$

\item  $ \varphi = \{\wedge \ \  P  \ \ \psi_3 \ \ \phi_3 \,\}.$

\end{itemize}

\noindent  We analyze one-by-one such disjunctions and finally the proper $ \varphi $:
\begin{itemize}

\item  $ \psi_1$: \ Trivially, $ \psi_1 $ is Horn, so $ \psi_1 \in  \mathcal{\overline{H}}$. 

\item  $ \psi_2 $: \
$ \psi_2 \in \mathcal{\overline{H}}$  \  if \ $ \phi_1, \phi_2 \in \mathcal{\overline{H}} $ 
\ and  if at least one of   $ \phi_1$ or $ \phi_2$ is negative.

\item  $ \psi_3 $: \ $ \psi_3 \in \mathcal{\overline{H}}$  
   \ if \    $\psi_2 \in \mathcal{\overline{H}}$ (as $ \psi_1 \in \mathcal{\overline{H}}$).
 
\item \ $ \varphi $: \   
 $ \varphi \in \mathcal{\overline{H}}$ \  if \ $ \psi_2, \phi_3 \in \mathcal{\overline{H}}$
 (as $ \psi_3 \in \mathcal{\overline{H}}$  \  
if  \   $\psi_2 \in \mathcal{\overline{H}}$).
%and if  $  \varphi' \in \mathcal{\overline{H}}$. 
\end{itemize}

\noindent Summarizing the conditions  on   $ \varphi $ and on $ \psi_2 $, we have that:
 
 \vspace{.2cm} 
\noindent \   $\bullet$ $ \varphi $ is Horn-NC  \underline{if}   $\phi_3$,  $\phi_1$ and $ \phi_2$  
 are  Horn-NC     \underline{and  if at least one of}   $ \phi_1 $   or $ \phi_2 $ is negative.

\vspace{.2cm} 
\noindent If we consider that $ \varphi $ implicitly verifies  Definition \ref{theorem:visual} (all sub-formulas of 
a Horn-NC  are Horn-NC),  then we 
  conclude that $ \varphi $ is Horn-NC   if at least 
one of $ \phi_1 $   or $ \phi_2 $ is negative. \qed

\end{example}

\subsection{Properties of the Class  $ \mathcal{\overline{H}} $  }

An important feature  of Horn-formulas  is the following:

\begin{theorem} \label{the:HNCtoHorn} Applying $ \vee/\wedge $-distributivity to a Horn-NC $ \varphi $ results in a Horn formula.
\end{theorem}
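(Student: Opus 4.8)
The plan is to prove this by structural induction on the Horn-NC formula $\varphi$, tracking what distributivity does to the syntactic Horn-NC property as disjunctions are pushed inward and conjunctions of disjunctions are produced. The key invariant I would maintain is the one captured by Definition \ref{theorem:visual}: every disjunction has at most one non-negative disjunct. First I would set up the induction on the tree structure of $\varphi$ given by Definition \ref{def:graphformula}, using the inductive characterization of $\mathcal{\overline{H}}$ from Definition \ref{def:syntacticalNC} as the three base/step cases. The base case, a literal, is trivially Horn (a unit clause with at most one positive literal). For the conjunction case, by case (2) of Definition \ref{def:syntacticalNC} each conjunct $\varphi_i$ is Horn-NC, so by the induction hypothesis each distributes to a Horn formula $H_i$; since distributivity of a conjunction is just the conjunction of the distributed conjuncts, the result is $\{\wedge \ H_1 \ldots H_k\}$, which is Horn because a conjunction of Horn clausal formulas is Horn (as noted after Lemma \ref{def:HNCconjunc}).

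The genuinely interesting case is the disjunction $\varphi=(\vee \ \varphi_1 \ldots \varphi_i \ldots \varphi_k)$, handled by case (3): exactly one disjunct, say $\varphi_i$, is Horn-NC and every other $\varphi_j$ with $j \neq i$ is negative. Here I would first apply the induction hypothesis to $\varphi_i$ to obtain an equivalent Horn clausal formula, which I can write as $\{\wedge \ C_1 \ldots C_m\}$ with each $C_t$ a Horn clause. For the negative disjuncts $\varphi_j$, I would observe that distributivity turns a negative NC formula into a negative clausal formula, i.e. a conjunction of clauses each containing only negative literals; let the full negative part, after collecting all the $\varphi_j$, distribute to $\{\wedge \ D_1 \ldots D_r\}$ where each $D_s$ is an all-negative clause. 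Now applying $\vee/\wedge$-distributivity across the disjunction $(\vee \ \{\wedge \ C_1 \ldots C_m\} \ \{\wedge \ D_1 \ldots D_r\})$ produces the conjunction over all pairs of the clauses $(\vee \ C_t \ D_s)$. The crucial point is that each such resolvent-free disjunction of clauses is itself Horn: $C_t$ contributes at most one positive literal and $D_s$ contributes only negative literals, so the union $C_t \cup D_s$ still has at most one positive literal. Iterating this over all negative disjuncts preserves the ``at most one positive literal'' bound, since negative disjuncts never add positive literals.

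The main obstacle I expect is bookkeeping the interaction between the recursive structure and the flat clausal output, namely making rigorous the claim that distributing a negative NC formula yields an all-negative clausal formula, and that combining one Horn clausal factor with arbitrarily many negative clausal factors by distributivity keeps every resulting clause Horn. I would isolate this as a small auxiliary observation, proved by induction using Definition \ref{def:negative}: since a negative formula has only negative literals and distributivity neither introduces new literals nor changes their polarity, its clausal form contains only negative clauses. With that lemma in hand, the disjunction case reduces to the elementary fact that the union of a clause with at most one positive literal and a clause with no positive literals has at most one positive literal, which is exactly the Horn condition. Finally I would remark that distributivity preserves logical equivalence, so the resulting Horn formula is equivalent to $\varphi$, completing the induction and establishing that the clausal form of any Horn-NC formula is Horn.
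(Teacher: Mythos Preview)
Your proposal is correct and follows essentially the same approach as the paper: structural induction along the inductive definition of $\mathcal{\overline{H}}$ (Definition \ref{def:syntacticalNC}), with the conjunction case immediate and the disjunction case handled by distributing one Horn clausal factor against conjunctions of all-negative clauses and observing that each resulting clause keeps at most one positive literal. The only organizational difference is that the paper isolates the disjunction case as a separate bidirectional lemma (Theorem \ref{th:kdisjuncts}), whose only-if direction is exactly your inlined argument and whose if-then direction is reused later for Theorem \ref{theo:relation-NC-H}; your auxiliary observation that a negative NC formula distributes to a conjunction of negative clauses is precisely what is embedded in that lemma's proof.
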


\begin{niceproof}
See Appendix.
\end{niceproof}

%\vspace{.1cm}
We  already saw that
syntactically  $ \mathcal{\overline{H}} $ subsumes  $\mathcal{{H}} $,  but besides, 
   $ \mathcal{\overline{H}}  $ is semantically  related
to  $ \mathcal{{H}}  $ as Theorem \ref{theo:relation-Horn-Horn} claims.

\begin{theorem} \label{theo:relation-Horn-Horn} 
    $ \mathcal{\overline{H}} $ and $\mathcal{{H}} $ are semantically equivalent: each formula in a class is logically equivalent to some formula in the other class.
\end{theorem}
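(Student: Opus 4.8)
The plan is to prove the claimed semantic equivalence as two separate inclusions, one of which is immediate and the other of which rests on the already-established distributivity result.

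For the easy direction I would show that every Horn clausal formula is equivalent to some Horn-NC formula. This is trivial because $\mathcal{H} \subseteq \mathcal{\overline{H}}$ syntactically: a Horn clause is a non-recursive Horn-NC disjunction by Lemma \ref{def:disjunHNC} (its at-most-one positive literal is the single Horn-NC disjunct, the rest being negative literals, hence members of $\mathcal{N}^-$), and a conjunction of such clauses is again Horn-NC by Lemma \ref{def:HNCconjunc}. Thus any $H \in \mathcal{H}$ is itself a member of $\mathcal{\overline{H}}$, and is logically equivalent to itself, which settles this direction.

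The substantive direction is to show that every Horn-NC formula $\varphi \in \mathcal{\overline{H}}$ is logically equivalent to some Horn clausal formula. Here I would invoke the standard clausal-form transformation: apply $\vee/\wedge$-distributivity repeatedly to $\varphi$, pushing disjunctions inside conjunctions, until no further distribution is possible and a clausal (CNF) formula $H$ is obtained. The two observations that finish the argument are: (i) each distributivity rewrite preserves logical equivalence, so by transitivity of $\equiv$ we have $\varphi \equiv H$; and (ii) by Theorem \ref{the:HNCtoHorn}, applying $\vee/\wedge$-distributivity to a Horn-NC formula yields a Horn formula, so $H \in \mathcal{H}$. Combining (i) and (ii) gives a Horn clausal $H$ with $\varphi \equiv H$, as required.

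The main obstacle is entirely shouldered by Theorem \ref{the:HNCtoHorn}, which guarantees that the Horn-NC shape is stable under distribution and collapses precisely to a Horn clausal formula; once that is available, the present proof is just the packaging of distributivity's soundness (equivalence preservation) with that structural result. The only points needing a word of care are that the exhaustive distribution process terminates in a clausal formula and that constant-free simplification (Definition \ref{def:simpl}) does not disturb the Horn property, but both are routine and the latter only removes or absorbs $\top,\bot$ without introducing new positive disjuncts. Hence no genuinely new semantic reasoning is needed beyond the earlier results.
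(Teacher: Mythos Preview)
Your proposal is correct and follows essentially the same approach as the paper: invoke Theorem \ref{the:HNCtoHorn} (plus the equivalence-preservation of distributivity) for the $\mathcal{\overline{H}}\to\mathcal{H}$ direction, and use the syntactic inclusion $\mathcal{H}\subset\mathcal{\overline{H}}$ for the converse. Your version simply spells out a few more details (why $\mathcal{H}\subset\mathcal{\overline{H}}$, termination of distribution, constants) that the paper leaves implicit.
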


\begin{niceproof} By Theorem \ref{the:HNCtoHorn}, for every 
$ \varphi \in \mathcal{\overline{H}}$ there exists $ H \in \mathcal{H} $ such that
$ \varphi \equiv H $. The converse follows from the fact that $ \mathcal{H} \subset \mathcal{\overline{H}}$.
\end{niceproof}

The next theorem make it explicit how the  classes Horn-NC and NC
are related.

\begin{theorem}  \label{theo:relation-NC-H}   
$ \mathcal{\overline{H}} $  contains the next  NC fragment:   if 
applying  $ \wedge /\vee $ distributivity 
 to an NC formula $ \varphi $ results in
a Horn formula, then 
 $ \varphi $ is in  $ \mathcal{\overline{H}} $. 
\end{theorem}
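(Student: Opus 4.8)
The plan is to prove the contrapositive, so that this theorem becomes the exact converse of Theorem \ref{the:HNCtoHorn}; the two together characterize $\mathcal{\overline{H}}$ as the set of NC formulas whose distributed clausal form is Horn. Throughout I would write $\mathrm{CF}(\varphi)$ for the clausal form obtained by exhaustively applying $\wedge/\vee$-distributivity, and rely on the recursion it obeys: $\mathrm{CF}(\ell)=\ell$ for a literal $\ell$; $\mathrm{CF}(\{\wedge\ \varphi_1 \ldots \varphi_k\})$ is the concatenation of the clause sets $\mathrm{CF}(\varphi_i)$; and $\mathrm{CF}((\vee\ \varphi_1 \ldots \varphi_k))$ is the conjunction of all clauses $(\vee\ C_1 \ldots C_k)$ formed by selecting one clause $C_i$ from each $\mathrm{CF}(\varphi_i)$. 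The only structural fact I exploit is that distributivity copies literals and disjoins clauses but never deletes a literal occurrence.

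First I would prove a tracking lemma by structural induction on $\psi \in \mathcal{NC}$: if $\psi$ is non-negative (that is, $\psi \notin \mathcal{N}^-$, so $\psi$ carries a positive literal), then some clause of $\mathrm{CF}(\psi)$ carries a positive literal. The literal and conjunction cases are immediate. In the disjunction case the positive literal lives in some disjunct $\psi_i$, which is therefore non-negative, so by the induction hypothesis a clause $C$ of $\mathrm{CF}(\psi_i)$ carries it; selecting $C$ in the cross-product keeps that positive literal inside a clause of $\mathrm{CF}(\psi)$.

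Next comes the contrapositive argument. Assume $\varphi \notin \mathcal{\overline{H}}$. By Definition \ref{theorem:visual} some disjunctive sub-formula $\psi=(\vee\ \psi_1 \ldots \psi_m)$ has two non-negative disjuncts $\psi_a,\psi_b$ with $a \neq b$. By the tracking lemma, $\mathrm{CF}(\psi_a)$ owns a clause $C_a$ and $\mathrm{CF}(\psi_b)$ a clause $C_b$, each carrying a positive literal. The clause in $\mathrm{CF}(\psi)$ that selects $C_a$ from $\psi_a$, $C_b$ from $\psi_b$, and an arbitrary clause from every remaining disjunct then carries two positive literal occurrences, drawn from the distinct disjuncts $\psi_a$ and $\psi_b$, and is thus non-Horn. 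I would then push this clause to the root by a short second induction: if $\mathrm{CF}(\psi)$ contains a non-Horn clause and $\psi$ is a sub-formula of $\varphi$, so does $\mathrm{CF}(\varphi)$, since a parent conjunction merely inherits the clause while a parent disjunction only disjoins further literals onto it, in neither case removing the two offending positive occurrences. Hence $\mathrm{CF}(\varphi)$ is not Horn, which is the desired contrapositive.

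Fourth, \emph{the main obstacle}, is the treatment of repeated positive literals. If $\psi_a$ and $\psi_b$ happen to contribute the same propositional variable, a clause-as-set reading would collapse the two occurrences and leave a Horn clause: for instance $(\vee\ P\ \{\wedge\ P\ \neg Q\})$ reduces, after collapsing, to the Horn formula $\{\wedge\ P\ (\vee\ P\ \neg Q)\}$. For the theorem to hold, Horn-ness must therefore be measured on literal \emph{occurrences} rather than on the reduced literal set, and I would make this convention explicit and check that it is exactly the one under which Theorem \ref{the:HNCtoHorn} was established, so that the two directions remain mutually consistent. Once occurrences are counted, the two positive slots produced above are genuinely distinct and the argument closes without further case analysis.
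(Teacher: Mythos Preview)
Your proof is correct and takes a genuinely different route from the paper's. The paper argues \emph{directly} by structural induction on a depth measure $d(\varphi)$: assuming $cl(\varphi)\in\mathcal{H}$, it handles conjunctions componentwise and handles disjunctions by invoking an auxiliary result (Theorem~\ref{th:kdisjuncts}) stating that $cl((\vee\ \varphi_1\ldots\varphi_k))\in\mathcal{H}$ forces $k-1$ disjuncts to be negative and the remaining one to have Horn clausal form; the induction hypothesis then yields membership in $\mathcal{\overline{H}}$ via Definition~\ref{def:syntacticalNC}. You instead prove the \emph{contrapositive}: locate a bad disjunctive sub-formula, use your tracking lemma to manufacture a clause with two positive literal occurrences, and propagate that clause to the root. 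Your argument is more elementary in that it does not rely on the separately proved Theorem~\ref{th:kdisjuncts}, and it makes the combinatorics of distributivity (cross-product of clauses) completely explicit. The paper's route, on the other hand, mirrors the inductive structure of Definition~\ref{def:syntacticalNC} and so dovetails neatly with the proof of the converse (Theorem~\ref{the:HNCtoHorn}). Your fourth paragraph is a genuine service: the repeated-literal issue is real, and the theorem holds only under the occurrence-counting reading of ``at most one positive literal''; the paper never states this explicitly, though its syntactic treatment of clauses as sequences (Definition~\ref{def:ECNF} applies distributivity without any idempotence simplification) is consistent with that reading.
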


\begin{niceproof}
See Appendix.
\end{niceproof}

The syntactical and semantical properties exhibited by   $ \mathcal{\overline{H}} $ 
affirmed  by the last three theorems suggest that   
$ \mathcal{\overline{H}} $ is a kind of NC
analogous of the standard Horn class $ \mathcal{{H}}  $.

\subsection{The Definition of  $\mathcal{\overline{H}}_\Sigma$}
\vspace{.1cm}
Finally, from $ \mathcal{\overline{H}} $, we straightforwardly define the class $\mathcal{\overline{H}}_\Sigma $ of possibilistic Horn-NC bases.

%by defining first
%the possibilistic Horn-NC  formulas. 
%We will use  "PH-NC" as a shorthand for "possibilistic Horn-NC". 

\begin{definition} \label{def:HNCKnowledgeBase}
A possibilistic Horn-NC  formula is a pair $ \langle\varphi : \alpha \rangle$, where  
 $ \varphi \in \mathcal{\overline{H}} $ and  $ \alpha \in (0 \ 1] $.
A  possibilistic Horn-NC  base  $ \Sigma $ 
is a set 
of  possibilistic Horn-NC formulas.   $ \mathcal{\overline{H}}_\Sigma $ 
 denotes the class of  possibilistic Horn-NC bases.
\end{definition}

\begin{example} We take the next Horn-NCs:   $ \varphi $ from Example  \ref{Ex:morecomplex}
and  $ \varphi_9 $ from Example \ref{ex:nested}. By $ \varphi' $ we denote   
 $ \varphi $  from Example \ref{ex:introduction} considering
that  the specified conditions 
 warranting that $ \varphi $  is Horn-NC are met.   An example of a possibilistic Horn-NC   base is:
$$\{\,\langle P \, : \, {\bf .8}\rangle, \  
\langle \varphi \, : \, {\bf .8} \rangle,  \  \langle \varphi_9 \, , \, {\bf .5} \rangle, \ 
\langle \varphi' \, , \, {\bf .9} \rangle, \ \langle \neg Q \, , \, {\bf .1} \rangle \,  \}. $$ 
\end{example}

\begin{corollary}    $ \mathcal{\overline{H}}_\Sigma $ and $\mathcal{{H}}_\Sigma$ are semantically equivalent: each formula in a class is  equivalent to some formula in the other class. 
\end{corollary}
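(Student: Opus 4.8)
The plan is to reduce this statement to the propositional-level equivalence already established in Theorem \ref{theo:relation-Horn-Horn}, lifting it through the possibilistic semantics. Since $\mathcal{\overline{H}}_\Sigma$ and $\mathcal{H}_\Sigma$ are classes of bases, I read the claim at the level of bases, calling two possibilistic bases semantically equivalent when they induce the same least specific distribution $\pi_\Sigma$ (equivalently, by Proposition \ref{propos:leastspecific}, when they admit exactly the same compatible distributions, and hence agree on every $\alpha$-cut and on $\mathrm{Inc}(\cdot)$); the same argument specializes verbatim to single possibilistic formulas $\langle\varphi:\alpha\rangle$, where equivalence means imposing the same necessity constraint $N(\cdot)\ge\alpha$.

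First I would record the key lifting observation: replacing a propositional formula by a logically equivalent one inside a possibilistic pair leaves $\pi_\Sigma$ untouched. Inspecting the definition of $\pi_\Sigma(\omega)$, its value depends on each pair $\langle\varphi_i:\alpha_i\rangle$ only through the truth value $\omega\models\varphi_i$ and the weight $\alpha_i$; so if $\varphi_i\equiv\psi_i$ then $\omega\models\varphi_i$ iff $\omega\models\psi_i$ for every $\omega$, and the two resulting definitions of $\pi$ coincide pointwise. Equivalently, one may argue through necessity degrees: the constraint attached to $\langle\varphi_i:\alpha_i\rangle$ is $N(\varphi_i)\ge\alpha_i$, and since $\varphi_i\equiv\psi_i$ gives $N(\varphi_i)=N(\psi_i)$ (recorded earlier in Section \ref{sec:refresher}), the constraint is literally unchanged.

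For the inclusion $\mathcal{\overline{H}}_\Sigma\to\mathcal{H}_\Sigma$, take any $\Sigma=\{\langle\varphi_i:\alpha_i\rangle\mid 1\le i\le k\}\in\mathcal{\overline{H}}_\Sigma$, so each $\varphi_i\in\mathcal{\overline{H}}$. By Theorem \ref{the:HNCtoHorn}, for each $i$ the Horn clausal formula $H_i$ obtained by exhaustively applying $\vee/\wedge$-distributivity to $\varphi_i$ satisfies $H_i\in\mathcal{H}$ and $\varphi_i\equiv H_i$. Setting $\Sigma'=\{\langle H_i:\alpha_i\rangle\mid 1\le i\le k\}$ yields a member of $\mathcal{H}_\Sigma$, and by the lifting observation applied one pair at a time, $\pi_{\Sigma'}=\pi_\Sigma$; hence $\Sigma$ is semantically equivalent to the Horn base $\Sigma'$. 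The converse inclusion $\mathcal{H}_\Sigma\to\mathcal{\overline{H}}_\Sigma$ is immediate, since $\mathcal{H}\subset\mathcal{\overline{H}}$ gives $\mathcal{H}_\Sigma\subseteq\mathcal{\overline{H}}_\Sigma$, so every Horn base already is a Horn-NC base and is trivially equivalent to itself.

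I do not expect a genuine obstacle: the corollary is essentially the possibilistic shadow of Theorem \ref{theo:relation-Horn-Horn}. The only point requiring care is definitional, namely pinning down what ``semantically equivalent'' means for two \emph{bases} rather than for two propositional formulas, and checking that this equivalence is preserved under the formula-by-formula translation. Once equivalence of bases is phrased through $\pi_\Sigma$ (or through the family of $\alpha$-cuts), the stability of $\pi_\Sigma$ under propositional equivalence of the constituent formulas makes the remainder routine.
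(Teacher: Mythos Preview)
Your proposal is correct and follows essentially the same approach as the paper: both reduce to the propositional-level equivalence of Theorem~\ref{theo:relation-Horn-Horn} (which in turn rests on Theorem~\ref{the:HNCtoHorn}), lifted componentwise through the definitions of $\mathcal{\overline{H}}_\Sigma$ and $\mathcal{H}_\Sigma$. The paper's proof is a single line citing those ingredients, whereas you additionally spell out what semantic equivalence of \emph{bases} should mean (via $\pi_\Sigma$) and verify that it is preserved under replacing each $\varphi_i$ by a logically equivalent formula---a clarification the paper leaves implicit.
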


\begin{niceproof} It follows  from the definitions of  $ \mathcal{\overline{H}}_\Sigma $ 
and $\mathcal{{H}}_\Sigma$ and  Theorem  \ref{theo:relation-Horn-Horn}.
\end{niceproof}

\begin{corollary} $ \mathcal{\overline{H}}_\Sigma $ is the next $ \mathcal{NC}_\Sigma $ fragment: if 
  $ \langle \varphi : \alpha \rangle \in \mathcal{NC}_\Sigma$ and 
  applying  $ \wedge /\vee $ distributivity 
 to    $ \varphi $ results in   a Horn  formula,
 then   $ \langle \varphi : \alpha \rangle 
 \in \mathcal{\overline{H}}_\Sigma$.
\end{corollary}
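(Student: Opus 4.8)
The plan is to observe that this corollary merely lifts the purely propositional Theorem~\ref{theo:relation-NC-H} to the possibilistic setting, and that the lifting is immediate from the definition of the possibilistic Horn-NC class. So I would proceed by unwinding definitions and reducing everything to the propositional result, with no genuine new argument needed.

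First I would unfold the hypothesis $\langle \varphi : \alpha \rangle \in \mathcal{NC}_\Sigma$. By the definition of the possibilistic NC class this records exactly two facts: the propositional component $\varphi$ is an NC formula, i.e. $\varphi \in \mathcal{NC}$, and the weight satisfies $\alpha \in (0,1]$. Next I would invoke the distributivity hypothesis: since $\varphi \in \mathcal{NC}$ and distributing its $\vee/\wedge$ connectives yields a Horn formula, Theorem~\ref{theo:relation-NC-H} applies verbatim and gives $\varphi \in \mathcal{\overline{H}}$, i.e. $\varphi$ is already propositionally Horn-NC. Finally I would close by applying Definition~\ref{def:HNCKnowledgeBase}: a pair $\langle \varphi : \alpha \rangle$ is a possibilistic Horn-NC formula precisely when $\varphi \in \mathcal{\overline{H}}$ and $\alpha \in (0,1]$. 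Both conditions are now in hand, hence $\langle \varphi : \alpha \rangle \in \mathcal{\overline{H}}_\Sigma$, as required.

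As for the main obstacle, there is essentially none: all the genuine content — that distributivity preserving Horn-ness forces membership in $\mathcal{\overline{H}}$ — is already discharged by Theorem~\ref{theo:relation-NC-H}, and the possibilistic layer contributes only the certainty weight $\alpha$, which passes through unchanged because membership in $\mathcal{\overline{H}}_\Sigma$ constrains only the propositional part of the pair. The single point worth a line of care is checking that the phrase ``applying $\wedge/\vee$ distributivity'' means the same thing in the corollary and in Theorem~\ref{theo:relation-NC-H}; this holds trivially since the corollary's $\varphi$ is exactly the propositional NC formula to which Theorem~\ref{theo:relation-NC-H} refers. I therefore expect the proof to be a two-line deduction chaining the definition of $\mathcal{NC}_\Sigma$, Theorem~\ref{theo:relation-NC-H}, and Definition~\ref{def:HNCKnowledgeBase}.
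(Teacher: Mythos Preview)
Your proposal is correct and matches the paper's own proof essentially verbatim: the paper simply states that the result follows from the definitions of $\mathcal{\overline{H}}_\Sigma$ and $\mathcal{NC}_\Sigma$ together with Theorem~\ref{theo:relation-NC-H}. Your unwinding of the definitions and invocation of Definition~\ref{def:HNCKnowledgeBase} just makes explicit what the paper leaves as a one-line remark.
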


\begin{proof}  It follows from the definitions of $ \mathcal{\overline{H}}_\Sigma $ and 
$ \mathcal{NC}_\Sigma $ and Theorem \ref{theo:relation-NC-H}.
\end{proof}

\noindent {\bf Remark.} Since $ \mathcal{\overline{H}} $ is the NC analogous
of $ \mathcal{{H}} $ so is $ \mathcal{\overline{H}}_\Sigma $ of $ \mathcal{{H}}_\Sigma $.

\section{Possibilistic NC Unit-Resolution {\em   $\mathcal{UR}_\Sigma$}} \label{sec:Non-Clausal-Unit-resolution}

%This section   proposes a calculus enabling to polynomially compute the inconsistency
%degree of    $ \mathcal{\overline{H}}_\Sigma $ members.
Possibilistic clausal resolution was defined in the  
1980s \cite{DuboisP87,DuboisP90} but
   possibilistic  non-clausal resolution has not been proposed yet.  
   This section is   a step forward towards its definition as we  define 
 Possibilistic Non-Clausal Unit-Resolution, denoted 
  $\mathcal{UR}_\Sigma$, which  
  is  an  extension of the  calculus presented  in \cite{Imaz2021horn}  for propositional logic.
%In future work,  we will focus on  possibilistic nested resolution. 
%We   prove both that    {\em   $\mathcal{UR}_\Sigma$} correctly
%computes the   inconsistency degree  of
%      $\mathcal{\overline{H}}_\Sigma $  and 
%      that  requires only a polynomial time for that.
%i.e.   INC-$ \mathcal{\overline{H}} $  is polynomial.
%
%\vspace{.1cm}
    The  main inference rule of {\em   $\mathcal{UR}_\Sigma$}
%    The first rule is propositional NC unit-resolution,   
%    applies inside a propositional Horn-NC   
%    and is called {\em UR$_P$}.  
%    which  is possibilistic NC unit-resolution,  
%  applies  to two possibilistic Horn-NCs   and is 
  is  called  {\em UR$_{\Sigma}$}, and   
    while the other rules in $\mathcal{UR}_\Sigma$ are simple,  
     {\em UR$_\Sigma$} is somewhat involved and so is presented  progressively as follows: 

\vspace{.15cm}
$\bullet$  for  quasi-clausal    Horn-NC    bases 
in Subsection \ref{subsec:Quasi-ClausalHNCs};  and

\vspace{.1cm} 
$\bullet$ for   nested  Horn-NC    bases in Subsection \ref{subsec:GeneralHNCs}.

\vspace{.15cm}
Afterwards, \underline{Subsection  \ref{subsec:TheCalculus}} describes  
 $\mathcal{UR}_\Sigma$, which besides {\em UR$_{\Sigma}$},
comprises: (a) the propositional  rule
  {\em UR$_P$},  which is  {\em UR$_{\Sigma}$}  adapted to propositional logic,
(b) the propositional simplification rules, and (c) the possibilistic rules {\em MinD}
and {\em MaxN}.
\underline{Subsection \ref{subsect:find}} gives the algorithm to obtain 
$ \mbox{Inc}(\Sigma) $ which combines {\em $\mathcal{UR}_{\Sigma}$} with   $ \alpha $-cuts
of the input $ \Sigma $. To  end this section,  \underline{Subsection \ref{subsub:2inferencerules}} gives   two further inferences rules, 
 not needed for warranting the completeness of $\mathcal{UR}_\Sigma$.   
%Propositional  Non-Clausal Local Unit-Resolution and
%Possibilistic Non-Clausal Hyper Unit-Resolution.
 We recall  that 
  $ \bot $ and   $ (\vee) $  are equivalent (see Definition 
  \ref{def:interpretation}).
%\vspace{.2cm}
%\noindent {\bf Remark.}  By Definition \ref{theorem:visual}, a 
%\underline{propositional disjunctive    Horn-NC} formula $  \varphi$ 
%with more than one  disjunct has  at least one negative disjunct,
% and so,   assigning 0 to all   propositions
%satisfies $  \varphi$. Hence,  a single propositional  disjunctive  Horn-NC  may be trivially consistent, 
%and thus, we   assume that the  
%input possibilistic base  $ \Sigma $ has at least two   formulas.

\subsection{ Quasi-Clausal NC Unit-Resolution} \label{subsec:Quasi-ClausalHNCs}

\vspace{.05cm}
We   start our presentation with propositional formulas and then
 switch to possibilistic bases. Assume  propositional formulas 
with the  quasi-clausal pattern  below  in which 
 $ {\textcolor{red} {\ell}} $ and 
 $ {\textcolor{blue} {\neg {\ell}}} $ are any literal
 and its negated one,  and the $ \varphi $'s and the $ \phi $'s are formulas:  
% \begin{equation} \label{eq:Almost-clausal}
$$\{{\textcolor{red} {\wedge}}  \ \varphi_1  \, \ldots  \, \varphi_{l-1} 
 \ {\textcolor{red} {\ell}} \  \varphi_{l+1}  \, \ldots  \, \varphi_{i-1} \
({\textcolor{blue} {\vee}} \ \, \phi_1 \, \ldots \, \phi_{j-1} \ 
{\textcolor{blue} {\neg {\ell}}} \ \phi_{j+1} \, \ldots \, \phi_k) \ \varphi_{i+1} \, \ldots \, \varphi_n \}$$
%\end{equation}

\noindent We say that    these formulas  are  quasi-clausal because   
if the $\varphi$'s and   $\phi$'s were  clauses and  literals, respectively, then 
such formulas would  be  clausal.  It is not hard to see that  
a quasi-clausal formula is equivalent to a formula of the kind:  
%\begin{equation} \label{eq:Unit-clausal}
$$\{{\textcolor{red} {\wedge}}  \ \varphi_1  \, \ldots  \, \varphi_{l-1} 
 \ {\textcolor{red} {\ell}} \  \varphi_{l+1}  \, \ldots  \, \varphi_{i-1} \ 
({\textcolor{blue} {\vee}} \  \phi_1 \, \ldots \, \phi_j  
\, \phi_{j+1} \, \ldots \, \phi_k) \ \varphi_{i+1} \, \ldots \, \varphi_n \}$$
%\end{equation}

\noindent and thus, one can derive   the next   simple  
 inference rule for propositional formulas:  
%which is  close to   clausal Unit Resolution can be derived, 
\begin{equation} \label{eq:simpleinf}
\frac{{\textcolor{red} {\ell}}     \ {\textcolor{red} {\wedge}} \
({\textcolor{blue} {\vee}} \ \ \phi_1 \, \ldots \, \phi_j \ 
{\textcolor{blue} {\neg {\ell}}} \ \phi_{j+1} \, \ldots \, \phi_k) }
{({\textcolor{blue} {\vee}} \ \ \phi_1 \, \ldots \, \phi_j  
 \ \phi_{j+1} \, \ldots \, \phi_k) }
\end{equation}

 Notice that for clausal formulas, 
  {\em Rule (\ref{eq:simpleinf}) coincides with  
clausal unit-resolution.} 

\vspace{.15cm}
Now let us switch to   possibilistic   bases. The    setting in which
NC unit-resolution is applicable is  when 
      $ \Sigma $   has two Horn-NC formulas such that one is a unit clause
$\langle {\textcolor{red} {\ell}}   : \alpha\rangle$  and the other   has the   
 pattern:
 $\langle \, \{\wedge \  \varphi_{1} \, \ldots \, \varphi_i \ ({\textcolor{blue} {\vee}} \ \, \phi_1 \, \ldots \, \phi_{j-1} \ 
{\textcolor{blue} {\neg {\ell}}} \ \phi_{j+1} \, \ldots \, \phi_k) \ \varphi_{i+1} \, \ldots \, \varphi_n \} : \beta \, \rangle$.
Namely, as $ \Sigma $ is an implicit conjunction of its formulas, then $ \Sigma $ contains a conjunction:
\begin{equation}\label{eq:quasi-clausal-possi-unit}
\langle {\textcolor{red} {\ell}}   : \alpha\rangle 
\wedge \langle \, \{\wedge \  \varphi_{1} \, \ldots \, \varphi_{i-1} \ ({\textcolor{blue} {\vee}} \ \, \phi_1 \, \ldots \, \phi_{j-1} \ 
{\textcolor{blue} {\neg {\ell}}} \ \phi_{j+1} \, \ldots \, \phi_k) \ \varphi_{i+1} \, \ldots \, \varphi_n \} : \beta \, \rangle
\end{equation}

In this setting and   by using   Min-Decomposability, i.e.   $ N(\varphi \wedge \psi) = \mathrm{min}(N(\varphi), N(\psi))  $ (Definition \ref{def:inderences}),  
one can easily derive the next   possibilistic inference:
 
\begin{equation} \label{eq:firstposs}
\frac{ \langle {\textcolor{red} {\ell}} \, : \, \alpha\rangle  \  {\textcolor{red} {\wedge}}  \
\langle\,({\textcolor{blue} {\vee}} \ \ \phi_1 \, \ldots \, \phi_j \ 
{\textcolor{blue} {\neg {\ell}}} \ \phi_{j+1} \, \ldots \, \phi_k)\, : \,\beta\rangle }
{\langle\,({\textcolor{blue} {\vee}} \ \ \phi_1 \, \ldots \, \phi_j  
 \ \phi_{j+1} \, \ldots \, \phi_k) \, : \, \mathrm{\bf min}\{\alpha,  \beta\}  \,\rangle }
\end{equation} 

%This means that \underline{the deduced formula in the denominator can be added to the base} $ \Sigma $.
The soundness of   (\ref{eq:firstposs})  follows immediately 
from the property   Min-Decomposability.
If  $\mathcal{D}({\textcolor{blue} {\neg {\ell}}})$
stands for     
$(\vee \  \phi_1 \, \ldots \, \phi_j \, \phi_{j+1} \, \ldots  \, \phi_n)$,
then the previous  rule    can be concisely rewritten as: 
 
\begin{equation} \label{eq:without-conjun}
\frac{ \langle{\textcolor{red} {\ell}} \, : \, \alpha\rangle \ \, {\textcolor{red}\wedge} \ \, 
\langle \,({\textcolor{blue} {\vee}} \ \, {\textcolor{blue} {\neg {\ell}}} \ \, \mathcal{D}({\textcolor{blue} {\neg {\ell}}}) \,)\, : \beta \rangle }
{  \langle\mathcal{D}({\textcolor{blue} {\neg {\ell}}}) \, :
\, \mathrm{\bf min}\{\alpha,\beta\} \,\rangle}{{\mbox{\,}}}
\end{equation}

 Notice that   the previous rule amounts to substituting the formula 
 referred to by the right conjunct in the numerator with the formula in the denominator, 
 and in practice, 
to just  eliminate $ {\textcolor{blue} {\neg {\ell}}} $ and update the necessity weight. 
 Let us illustrate these notions.

\begin{example} \label{ex:withoutconj} Let $ \Sigma $  be  a  
base  including $ \varphi_1 $ and $ \varphi_2 $ below,  where $ \phi $
is a   formula:
\begin{itemize}
\item  $  \varphi_1=\langle     {\textcolor{red} P}  : {\bf .8} \rangle$ \   \quad 
 
\item  $\varphi_2=  \langle \, \{{\textcolor{red} {\wedge}}   \ \phi  \     
(\vee   \ \  \neg {R} \ \  {\textcolor{blue} {\neg {P}}} \ \ S \,) \ \ 
(\vee \ \ S \ \   \{\wedge \ \ \neg {Q}  \ \ \neg {P} \,\} \, ) \ \ R \,\} \, : \, {\bf .6} \rangle.$

\end{itemize}
Taking  $ {\textcolor{red} P} $ in $ \varphi_1 $ and the 
 left-most  ${\textcolor{blue} {\neg {P}}}$ in $\varphi_2$,
we have    $\mathcal{D}({\textcolor{blue} {\neg {P}}})=(\vee \  \neg {R}  \ S )$, 
 and by applying   
$$\Sigma \leftarrow \Sigma \, \cup \, \langle\, 
\{{\textcolor{black} {\wedge}} \ \, \phi  \ \,    
(\vee   \ \  \neg {R}  \ \ S \,) \ \ 
(\vee \ \ S \ \   \{\wedge \ \ \neg {Q}  \ \ \neg {P} \,\} \, ) \ \ R \,\} 
\, : \, {\bf .6} \, \rangle.$$ 
Rule (\ref{eq:without-conjun}) to $ \varphi_2 $,  
the above  formula is deduced and added to the base $ \Sigma $. \qed 
\end{example}

  We now extend our analysis from  formulas with pattern 
$\langle \, ({\textcolor{blue} {\vee}} \ \, {\textcolor{blue} {\neg {\ell}}} \ \, 
\mathcal{D}({\textcolor{blue} {\neg {\ell}}}) \,) : \beta \rangle$ to formulas    
with pattern  $\langle \, ({\textcolor{blue} {\vee}} \  \ \mathcal{C}({\textcolor{blue} {\neg {\ell}}})\ \ 
 \mathcal{D}({\textcolor{blue} {\neg {\ell}}})\,  ) : \beta \, \rangle$ 
wherein  $\mathcal{C}({\textcolor{blue} {\neg {\ell}}})$
is  the maximal sub-formula  that becomes false when 
$ {\textcolor{blue} {\neg {\ell}}} $ is false, namely,    $\mathcal{C}({\textcolor{blue} {\neg {\ell}}})$ is:   $(i)$
the maximal  sub-formula, and   $(ii)$ equivalent to  a conjunction of the kind 
 ${\textcolor{blue} {\neg {\ell}}} \,\wedge \, \psi $. In other words,    $\mathcal{C}({\textcolor{blue} {\neg {\ell}}})$ is the maximal sub-formula "conjunctively linked" to  ${\textcolor{blue} {\neg {\ell}}}$.

\vspace{.15cm} 
\noindent For instance
 If the input base   $ \Sigma $ contains  
  $\langle {\textcolor{red} {{\ell}}} : \alpha \rangle$ and another formula of the kind:
 $$ \langle \ (\vee \ \varphi_1 \quad  \{\wedge \ \phi_ 1 \ \{\wedge \ {\textcolor{blue} {\neg {\ell}}} \  (\vee \ \ \phi_2 \ \neg {P} \,)\} \  
 \phi_3\} \quad \varphi_2 ): \beta \ \rangle$$
 
 \vspace{.1cm}
\noindent then  $\mathcal{C}({\textcolor{blue} {\neg \mathcal{\ell}}})=\{\wedge \ \phi_ 1 \ \{\wedge \ {\textcolor{blue} {\neg {\ell}}} \ 
 (\vee \ \ \phi_2 \ \neg {P} \,)\} \ \phi_3\,\}$ because: 

%\vspace{.15cm} 
% $(ii)$ \quad \ \ $\mathcal{C}({\textcolor{blue} {\neg \mathcal{\ell}}})$ contains $ {\textcolor{blue} {\neg \mathcal{\ell}}} $;  
 
 \vspace{.15cm} 
 $(ii)$ \quad \
$\mathcal{C}({\textcolor{blue} {\neg \mathcal{\ell}}})$  is equivalent to 
${\textcolor{blue} {\neg {\ell}}} \wedge \psi= 
{\textcolor{blue} {\neg {\ell}}} \wedge \{\wedge \ \phi_ 1 \   
 (\vee \ \phi_2 \ \neg {P} \,) \  \phi_3\}$; and 
 
 \ $(i)$ \quad \,  no  sub-formula $\mathcal{C}'({\textcolor{blue} {\neg \mathcal{\ell}}}) $ bigger than   $\mathcal{C}({\textcolor{blue} {\neg \mathcal{\ell}}})$ verifies $\mathcal{C}'({\textcolor{blue} {\neg \mathcal{\ell}}})\equiv {\textcolor{blue} {\neg {\ell}}} \wedge \psi'$. 
 
 \vspace{.15cm}
\noindent  Clearly, if $ {\textcolor{blue} {\neg {\ell}}} $
 becomes false so does  $\mathcal{C}({\textcolor{blue} {\neg \mathcal{\ell}}})=\{\wedge \ \phi_ 1 \ \{\wedge \ {\textcolor{blue} {\neg {\ell}}} \ 
 (\vee \ \ \phi_2 \ \neg {P} \,)\} \ \phi_3\,\}$.

\vspace{.1cm}
\noindent {\bf Remark. }  $\mathcal{C}({\textcolor{blue} {\neg {\ell}}})$ contains  ${\textcolor{blue} {\neg {\ell}}}$
but  $\mathcal{D}({\textcolor{blue} {\neg {\ell}}})$ excludes  it.

\begin{example}\label{ex:with-conjuction}  The   formula given below is an extension of 
 $\varphi_2  $  
from Example \ref{ex:withoutconj}, in which,
by clarity, its previous sub-formula 
$ (\vee \  S \    \{\wedge \  \neg {Q}  \  \neg {P} \,\} \, ) $ 
is denoted $ \phi_1 $ and the previous literal $ {\textcolor{blue} {\neg {P}}} $
 is now extended to the formula
  $ \{\wedge \  {\textcolor{blue} {\neg {P}}} \  (\vee \  S  \  \neg {R} ) \, \} $
   including $ {\textcolor{blue} {\neg {P}}} $:
$$ \varphi = \langle\, \{{\textcolor{red} {\wedge}} \  \phi  \     
(\vee   \   \neg {R} \  \{\wedge \  {\textcolor{blue} {\neg {P}}} \  (\vee \  S  \  \neg {R} ) \, \} \ \ S \,) \ \ 
\phi_1 \ \ R \,\} \, : \, {\bf .6} \,\rangle$$
Taking the left-most
 ${\textcolor{blue} {\neg {P}}}$ ($ \phi_1 $ has also another literal ${\neg {P}}$),
  $\varphi$ has a sub-formula with pattern  
   $ ({\textcolor{blue} {\vee}} \   
\mathcal{C}({\textcolor{blue} {\neg {P}}}) \ \mathcal{D}({\textcolor{blue} {\neg {P}}})\, )$, in which  
$\mathcal{C}({\textcolor{blue} {\neg {P}}})=\{\wedge \  {\textcolor{blue} {\neg {P}}} \ \ (\vee \ S \ \neg {R} ) \, \}$ and $\mathcal{D}({\textcolor{blue} {\neg {P}}})= 
(\vee \ \, \neg {R} \ S).$ \qed
\end{example} 
 
% \vspace{.1cm}
 Regarding the inference rule, we have that when 
   $ \Sigma $ has both a unitary clause 
 $ \langle{\textcolor{red} \ell} \, : \, \alpha \rangle $ and another formula  
  $\langle \varphi : \beta \rangle $ such that $ \varphi $ has 
  the pattern
$  ({\textcolor{blue} {\vee}} \ \, \mathcal{C}({\textcolor{blue} {\neg {\ell}}}) \ \, \mathcal{D}({\textcolor{blue}{\neg {\ell}}}) \,) $, then
the \underline{possibilistic} NC unit-resolution rule  is easily obtained by extending   Rule (\ref{eq:without-conjun})  as follows: 

\begin{equation} \label{eq:with-conjunct}
\frac{ \langle {\textcolor{red} {\ell}} \, : \, \alpha \rangle \ \, {\textcolor{red}\wedge} \ \, 
\langle\,({\textcolor{blue} {\vee}} \ \, \mathcal{C}({\textcolor{blue} {\neg {\ell}}}) \ \, \mathcal{D} ({\textcolor{blue} {\neg {\ell}}}) \,)\, : \beta \,\rangle }
{ \langle \, \mathcal{D}({\textcolor{blue} {\neg {\ell}}} \, ) \, :
\, \mathrm{\bf min}(\alpha,\beta ) \,\rangle}{{\mbox{\,}}}
\end{equation}

The  soundness of   (\ref{eq:with-conjunct}) follows from 
${\textcolor{red} {\ell}} \wedge  \mathcal{C}({\textcolor{blue} {\neg {\ell}}}) \equiv \bot$ and
its proof  is given in Section \ref{sect:ProofsProperties}.
   {\bf Fig. 3} depicts  Rule  (\ref{eq:with-conjunct}) where the left and right trees  represent, respectively,
 the numerator and denominator   of    (\ref{eq:with-conjunct}).
\begin{center}
\begin{adjustbox}{valign=t}

\begin{tikzpicture} [every node/.append style = {text=black}, sibling distance=1.5cm ]
\node {{\textcolor{red}{$\wedge$}}}
    child[line width=.2mm] {node  {$\langle{\textcolor{red} \ell} : \alpha \rangle \ \ $}}
    child[line width=.2mm] {node {$\langle {\textcolor{blue}{\vee}} : \beta \rangle$}
        child[line width=.2mm] {node {$\mathcal{C}({\textcolor{blue}{\neg {\ell}}})$}}
        child[line width=.2mm] {node {$\mathcal{D}({\textcolor{blue}{\neg {\ell}}})$}}
        } 
            ;
\end{tikzpicture}

\end{adjustbox} 
\begin{adjustbox}{valign=t} \hspace{1.5cm} 

\begin{tikzpicture} [every node/.append style = {text=black}, sibling distance=1.8cm ]
\node {{\textcolor{red}{$\wedge$}}}
    child[line width=.2mm] {node  {$\langle{\textcolor{red} \ell} : \alpha \rangle \ \ $}}
    child[line width=.2mm] {node {$\ \ \langle{\textcolor{blue}{\vee}} : 
     \mathrm{\bf min}(\alpha,\beta)\rangle$}
        child[line width=.2mm] {node {$\mathcal{D}({\textcolor{blue}{\neg {\ell}}})$}}
        } 
            ;
\end{tikzpicture}
\end{adjustbox}

\vspace{.2cm}
{\small {\bf Fig. 3.} Depicting  Rule (\ref{eq:with-conjunct}}).

\end{center}

\begin{example}   Rule (\ref{eq:with-conjunct})  
with $  \varphi_1=\langle     {\textcolor{red} P}  : {\bf .3} \rangle$ and with
$ \varphi  $ from Example \ref{ex:with-conjuction} derives:
$$\Sigma \leftarrow \Sigma \, \cup \, \langle\,\{{\textcolor{black} {\wedge}} \ \,  \phi \  \,    
(\vee   \ \,  \neg {R}  \ S \,) \ \ 
\phi_1 \ \ R \,\} 
\, : \, {\bf .3} \,\rangle.$$
\vspace{-.6cm} 
\end{example}

%\vspace{.3cm}

\subsection{Nested  NC Unit-Resolution} \label{subsec:GeneralHNCs}

%As in the previous subsection, we start by    \underline{propositional formulas} 
%and then switch to \underline{possibilistic bases}.  

%First of all, we must test the consistency of the propositional
%Horn-NC $ \varphi $ of each possibilistic Horn-NC $ \langle \varphi : \alpha \rangle $ 
%in the input $ \Sigma$. 
Coming back to the    almost-clausal  formulas 
expressed in   (\ref{eq:quasi-clausal-possi-unit}) and extending its literal
$ {\textcolor{blue} {\neg {\ell}}} $ to $ \mathcal{C}({\textcolor{blue} {\neg {\ell}}}) $, we 
now rewrite them   compactly as indicated below,  where $ \Pi $ and $ \Pi' $  denote
   a concatenation of formulas,  namely $ \Pi=\varphi_1 \ldots  \varphi_{i-1}$ 
 and 
 $ \Pi'=\varphi_{i+1} \ldots \varphi_n $:
 $$\langle{\textcolor{red} {\ell}} \, : \, \alpha\rangle \wedge   \langle \ \{{\textcolor{red} {\wedge}}  \ \, \Pi
  \ \,
({\textcolor{blue} {\vee}} \   \mathcal{C}({\textcolor{blue} {\neg {\ell}}}) \  
 \mathcal{D}({\textcolor{blue} {\neg {\ell}}}) \,) \ \, \Pi' \} : \beta  \ \rangle$$
 
% Without loss of generality, one can consider that $\Pi_0=\Pi_{0,1} \cup \Pi_{0,2}$ 
% and rewrite   the previous expression  as follows: 
%  $\langle \ \{{\textcolor{red} {\wedge}}  \ \, \Pi_{0}
% \  \, {\textcolor{red} {\ell}} \ \,  
%({\textcolor{blue} {\vee}} \   \mathcal{C}({\textcolor{blue} {\neg {\ell}}}) \  
% \mathcal{D}({\textcolor{blue} {\neg {\ell}}}) \,) \ \, \Pi'_{0} \} :  \beta \ \rangle$. 
 We  now analyze the      \underline{nested} Horn-NC bases $ \Sigma $ to which 
 NC unit-resolution  can  be indeed  applied.
 That is,   $ \Sigma $ must have a 
   unit-clause 
 $\langle {\textcolor{red} {\ell}} \, : \, \alpha\rangle  $ 
 and a possibilistic nested   Horn-NC  formula, denoted $\langle \Pi : \beta \rangle$,
 with  a syntactical pattern of the next 
 kind\footnote{The notation $ [ \odot \ \varphi_1 \ldots \varphi_k ] $ was introduced 
 in Definition  \ref{def:NCformulas}, bottom.}:
 $$ \langle  \ [ \odot_1 \ \, \Pi_1 \ \ldots  \ \ 
 [ \odot_k \ \ \Pi_{k} \ \,  
 ({\textcolor{blue} {\vee}} \   \mathcal{C}({\textcolor{blue} {\neg {\ell}}}) \  
 \mathcal{D}({\textcolor{blue} {\neg {\ell}}}) \,) \ \,  \Pi'_k \, ] \ \ 
    \ldots  \  \,  \Pi'_1 \,]   : \beta \ \rangle $$
    
    \noindent where all  the $\Pi_j$'s and $\Pi_j'$'s are concatenations of  formulas, e.g. for the
nesting level $ j, 1 \leq j \leq k$, we have
$\Pi_j=\varphi_{j_1} \ldots \varphi_{j_{i-1}}$  and 
$\Pi'_j=\varphi_{j_{i+1}} \ldots \varphi_{j_{n_j}}$. Since the presence of formulas
in the base $ \Sigma $ means that they are conjunctively linked, then one has:

 \begin{equation} \label{eq:formulaexpresiongeneral}  
 \langle {\textcolor{red} {\ell}} \, : \, \alpha\rangle \wedge \ 
 \langle  \ [ \odot_1 \ \, \Pi_1 \ \ldots  \ \ 
 [ \odot_k \ \ \Pi_{k} \ \,  
 ({\textcolor{blue} {\vee}} \   \mathcal{C}({\textcolor{blue} {\neg {\ell}}}) \  
 \mathcal{D}({\textcolor{blue} {\neg {\ell}}}) \,) \ \,  \Pi'_k \, ] \ \ 
    \ldots  \  \,  \Pi'_1 \,]   : \beta \ \rangle 
 \end{equation}

 By following the same principle that led us to
    Rule   (\ref{eq:with-conjunct}) and taking into account that
 $ N(\varphi_1 \wedge \varphi_2)=\mbox{min}\{N(\varphi_1),N(\varphi_1)\} $,   
one   obtains the nested NC unit-resolution rule:
 
 \vspace{.05cm}
 \begin{equation} \label{eq:generalPN-Horn}
 \frac{ \langle {\textcolor{red} {\ell}} \, : \, \alpha\rangle \wedge \  \langle \, [ \odot_1 \ \Pi_1    \ldots  [ \odot_k  
 {\textcolor{red}  \ \ \Pi_{k} \ \  
 ({\textcolor{blue} {\vee}} \ \  \mathcal{C}({\textcolor{blue} {\neg {\ell}}}) \ \ \mathcal{D}({\textcolor{blue} {\neg {\ell}}}) \,)  
 \ \ \Pi'_k\,] } \ldots  \, \Pi'_1] : \beta \ \rangle}
 { \langle \  [ \odot_1 \ \Pi_1   \ldots  [ \odot_k  
 {\textcolor{blue}  \ \ \Pi_{k} \ \  
  \mathcal{D}({\textcolor{blue} {\neg {\ell}}})  \ \ \Pi'_k\,] }
  \ldots  \, \Pi'_1 ] : \mbox{\bf min}\{\alpha,\beta\} \ \rangle } 
 {\ \mbox{}}
 \end{equation}
  
  \vspace{.05cm}
Recapitulating, Rule (\ref{eq:generalPN-Horn}) indicates that if the Horn-NC    
$ \Sigma $ has two formulas such that one is a unit clause $ \langle {\textcolor{red} {\ell}} \, : \, \alpha\rangle $ and the other  $\langle \Pi: \beta \rangle $ has the pattern of the right conjunct in the  numerator, 
then $ \Pi $ can be {\bf replaced} with the formula in the
 denominator. In practice, applying (\ref{eq:generalPN-Horn})  amounts to just 
   removing
  $\mathcal{C}({\textcolor{blue} {\neg {\ell}}})$  from $ \Pi $ and updating the necessity weight.

\vspace{.15cm}
   We now  denote  $ \Pi $  the right conjunct in the numerator of (\ref{eq:generalPN-Horn})
   and by 
$\Pi \succ ({\textcolor{blue} {\vee}} \  \mathcal{C}({\textcolor{blue} {\neg {\ell}}}) \  \mathcal{D}({\textcolor{blue} {\neg {\ell}}})  \,)$ denote  that $({\textcolor{blue} {\vee}} \ \mathcal{C}({\textcolor{blue} {\neg {\ell}}}) \  \mathcal{D}({\textcolor{blue} {\neg {\ell}}}) \,)$ 
 is a sub-formula of $\Pi$.  
 Rule  (\ref{eq:generalPN-Horn}) above
can be  compacted,  giving rise to a more concise formulation of  
{\em UR$_{\Sigma}$}:

$$\tcboxmath{\frac{\langle \textcolor{red} \ell : \alpha \rangle \ {\textcolor{red} {\wedge}} 
\ \langle \ \Pi \succ ({\textcolor{blue} {\vee}} \ 
 \ \mathcal{C}({\textcolor{blue} {\neg \ell}}) \ \ 
 \mathcal{D}({\textcolor{blue} {\neg \ell}})  \,) : \beta \ \rangle  }
{
\langle \ \Pi \succ  \mathcal{D}({\textcolor{blue} {\neg \ell}})  
: \mathrm{\bf min}\{\alpha,\beta\}  \ \rangle }{\mbox{\,{\em UR$_{\Sigma}$}}}}$$

\vspace{.2cm}
 The soundness of the rule {\em UR$_{\Sigma}$} follows   from    
 $ {\textcolor{red} \ell} \wedge  \mathcal{C}({\textcolor{blue} {\neg {\ell}}}) \equiv \bot$ and is proven in Section \ref{sect:ProofsProperties}. Two simple examples illustrating  how  
{\em UR$_{\Sigma}$} works  are Examples \ref{ex:firstexample} and \ref{ex:secondexample}.
Two  more complete
examples are Examples \ref{ex:BasePossib} and \ref{ex:basepossib-cont} but they employ   other inferences and mechanisms relative to $ \mathcal{UR}_	\Sigma $
given in the remaining  of this section.

\vspace{.1cm}
{\bf Remark.} It is not difficult to check that, for clausal formulas, {\em    UR$_{\Sigma}$}  
{\em coincides  with the} {\em standard possibilistic clausal 
unit-resolution \cite{DuboisP94, DuboisP14}.}
This clausal-like formulation of NC unit-resolution contrasts with the functional-like  one
 of classical NC resolution  handled until now 
 in the literature and presented in  \cite{Murray82} 
 (see also \cite{BachmairG01}).  We believe that our version, as previously said, 
 is more suitable to understand,  implement  and formally  analyze.

\subsection{The Calculus  $\mathcal{UR}_\Sigma$} \label{subsec:TheCalculus}

 Besides {\em UR$_\Sigma$,} the calculus
$\mathcal{UR}_\Sigma$     also 
includes the rules:
 (a) propositional NC unit-resolution, or {\em UR$_P$}, which is   
 {\em UR$_\Sigma$} but applied inside propositional formulas, 
  (b)  the rules   to simplify propositional formulas, and
 (c) the possibilistic rules Min-Decomposability (MinD) and Max-Necessity (MaxN) 
  mentioned in Definition \ref{def:inderences}.

\subsubsection{Propositional NC Unit-Resolution}

A major difference between computing the inconsistency degree of 
   clausal and   non-clausal 
bases is that the unity members of the former, i.e.  clauses, are always consistent, 
while a non-clausal formula can  itself   be inconsistent. 
%Checking the inconsistency of a NC formula
%is   $\mathcal{NP}$-complete. 
That is, the input
  $ \Sigma $ can contain a formula $ \langle \Pi : \alpha \rangle $ where
$ \Pi $ is inconsistent, and if so,  $ \langle \Pi : \alpha \rangle $ is equivalent to $ \langle \bot : \alpha \rangle $, 
which brings to:

\begin{proposition} \label{prop:input-propo-form} If 
$ \langle \Pi : \alpha \rangle \in \Sigma$ 
and $ \Pi $ is inconsistent then 
$ \mathrm{Inc}(\Sigma) \geq \alpha $. 
\end{proposition}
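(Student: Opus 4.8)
The plan is to reason directly from the characterization of the inconsistency degree through $\alpha$-cuts, so no inference machinery is needed. First I would recall that, by definition,
$$\mathrm{Inc}(\Sigma)=\mathrm{max}\{\alpha' \mid \Sigma_{\geq \alpha'} \ \mathrm{is \ inconsistent}\},$$
where the cut $\Sigma_{\geq \alpha'}=\{\varphi \mid \langle \varphi : \beta \rangle \in \Sigma, \ \beta \geq \alpha'\}$ is an ordinary propositional base.

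The key observation is that $\Pi$ itself sits inside the $\alpha$-cut of $\Sigma$. Indeed, since $\langle \Pi : \alpha \rangle \in \Sigma$ and trivially $\alpha \geq \alpha$, the formula $\Pi$ satisfies the membership condition defining $\Sigma_{\geq \alpha}$, hence $\Pi \in \Sigma_{\geq \alpha}$. Here I would be slightly careful to note that the inconsistency of $\Pi$ is the propositional notion of Definition \ref{def:model} (it has no model), and that this coincides with the inconsistency of the singleton base $\{\Pi\}$; by monotonicity of classical propositional logic, any base that contains an inconsistent formula is itself inconsistent. Consequently $\Sigma_{\geq \alpha}$, which contains $\Pi$, is inconsistent.

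Finally I would close the argument by the definition of the maximum: the weight $\alpha$ is one of the values $\alpha'$ for which $\Sigma_{\geq \alpha'}$ is inconsistent, so $\alpha$ cannot exceed the largest such weight, i.e. $\mathrm{Inc}(\Sigma) \geq \alpha$. There is essentially no obstacle in this proof: the statement is a direct unfolding of the $\alpha$-cut definition of $\mathrm{Inc}$ combined with the monotonicity of classical entailment. The only point worth stating explicitly is the identification of the two inconsistency notions (propositional inconsistency of the single formula $\Pi$ versus inconsistency of the classical base $\Sigma_{\geq \alpha}$), which is immediate.
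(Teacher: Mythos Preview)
Your proof is correct and follows essentially the same route as the paper: both unfold the $\alpha$-cut definition of $\mathrm{Inc}(\Sigma)$, observe that the inconsistent $\Pi$ lies in $\Sigma_{\geq \alpha}$ so this cut is inconsistent, and conclude by the definition of the maximum. Your version is slightly more explicit about the monotonicity step, but the argument is the same.
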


\begin{niceproof} By definition 
$ \mathrm{Inc}(\Sigma) = \mathrm{max}\{\beta \,\vert \,
\Sigma_{\geq \beta} \mbox{ is inconsistent} \} $. Trivially if $ \Pi $ is inconsistent 
then $ \bot \in \Sigma_{\geq \alpha} $,
and thus, $\Sigma_{\geq \alpha} $ is inconsistent. So  $ \mathrm{Inc}(\Sigma) \geq \alpha $.
\end{niceproof}

Hence, first of all, the propositional formula  $ \Pi$ of each $ \langle \Pi : \alpha \rangle \in \Sigma$ must be checked for consistency. If $ \Pi $ is   inconsistent, 
then, by definition,  $ \mathrm{Inc}(\Sigma) $ is the maximum of $ \alpha $ and the inconsistency 
degree of the 
strict $ \alpha $-cut of $ \Sigma $. Thus, one can remove from $ \Sigma $ all formulas $ \langle \Pi : \beta \rangle $ such that $ \beta \leq \alpha $
and search whether $ \mathrm{Inc}(\Sigma_{> \alpha}) > 0$.

 \vspace{.1cm}
 The inference rule  {\em  UR$_P$}   testing the consistency of a propositional  $ \Pi $, where
 $ \langle \Pi : \alpha \rangle \in \Sigma$, is easily derived from {\em  UR$_\Sigma$} 
by considering that the conjunction of a unit clause  
${\textcolor{red} {{\ell}}} $ and of 
a   formula  $ \Pi $ containing $ \mathcal{C}({\textcolor{blue} {\neg {\ell}}}) $ happens inside
 $ \Pi $. Thus {\em  UR$_P$} is as follows:
  
  \begin{equation} \label{eq:Propo-UR}
 \tcboxmath{\frac{\langle \ {\textcolor{red} \ell} \ {\textcolor{red} {\wedge}} \ 
\Pi \succ ({\textcolor{blue} {\vee}} \ 
 \ \mathcal{C}({\textcolor{blue} {\neg {\ell}}}) 
 \ \ \mathcal{D}({\textcolor{blue} {\neg {\ell}}})  \,) : \alpha \ \rangle }
{\langle \ \Pi \succ  \mathcal{D}({\textcolor{blue} {\neg {\ell}}}) 
: \alpha \rangle   }{\mbox{\em UR$_P$}}}
\end{equation}

A complete example through which we show  how {\em  UR$_P$} proceeds  testing the consistency
of propositional NC formulas is Example \ref{ex:FormulaPropo} in the next section,  and
Example \ref{ex:combination-example} illustrates the effects of applying Proposition \ref{prop:input-propo-form}.

\subsubsection{Simplification Formulas Rules}

Each application of the previous  {\em  UR$_\Sigma$}  and {\em  UR$_P$}  demands
the subsequent application of trivial logical simplifications of propositional formulas.
For instance, $ (\vee \ \varphi \ (\vee \ P \ (\vee \ \neg R \ \phi) )) $
and $ (\vee \ P \ \{\wedge  \ (\vee) \ \varphi \} ) $ can be obviously substituted
by  $ (\vee \ \varphi \ P \ \neg R \ \phi) $ and $ P $, respectively.
   Next we formalize  such kind of  simplification   rules.
 Being $ \Pi $  the propositional formula of a possibilistic formula 
 $ \langle \Pi : \alpha \rangle $ in a   base $  \Sigma $, the first two rules below 
  simplify  formulas by (upwards) propagating 
   $(\vee)$   from sub-formulas to formulas:
%  $ \varphi $ is the input Horn-NC, and as before,  $ \varphi \cdot \phi $ means that  $ \phi $ is a sub-formula of $ \varphi $:
  
$$\frac{ \langle \ \Pi \succ   \, ({\textcolor{black} {\vee}} \  \ \phi_1 \ldots   \phi_{i-1} 
\, (\vee) \, \phi_{i+1} \ldots \phi_k  \,)  : \alpha \rangle }
{ \langle \ \Pi \succ  ({\textcolor{black} {\vee}} \  \ \phi_1 \ldots 
 \phi_{i-1}  \, \phi_{i+1} \ldots \phi_k  \,) : \alpha \ \rangle}{{\bf \bot}\vee}$$

$$\frac{\langle \  \Pi \succ \{{\textcolor{black} {\wedge}} \  \ \varphi_1 \ldots   \varphi_{i-1} 
\, (\vee) \, \varphi_{i+1} \ldots \varphi_k  \,\} : \alpha \rangle }
{ \langle \  \Pi \succ \, (\vee)  : \alpha \ \rangle  }{{\bf \bot}\wedge}$$

\noindent The next two rules  remove redundant connectives. The first one removes a connective $ \odot $ if it is applied to a single formula,
i.e. $ [ \odot  \  \phi_1 ] $,
and the second one removes a connective  if it is inside
another equal connective, i.e. applies to sub-formulas with the pattern 
$  [ {\textcolor{black} {\odot_1}} \  \ \varphi_1 \ldots   \varphi_{i-1} 
\, [ \odot_2 \  \phi_1 \ldots \phi_n \,] \, \varphi_{i+1} \ldots \varphi_k  \,], \odot_1=\odot_2  $. So the formal  rules are:

$$\frac{ \langle \ \Pi \succ [ {\textcolor{black} {\odot_1}} \  \ \varphi_1 \ldots   \varphi_{i-1} 
\, [\, \odot_2 \  \phi_1 \,] \, \varphi_{i+1} \ldots \varphi_k  \,] : \alpha \rangle }
{\langle \  \Pi \succ [ {\textcolor{black} {\odot_1}} \  \ \varphi_1 \ldots   \varphi_{i-1} 
\, \phi_1  \, \varphi_{i+1} \ldots \varphi_k  \,] : \alpha \rangle }{\, \odot \phi }$$
 
$$\frac{\langle \  \Pi \succ [ {\textcolor{black} {\odot_1}} \  \ \varphi_1 \ldots   \varphi_{i-1} 
\, [ \odot_2 \  \phi_1 \ldots \phi_n \,] \, \varphi_{i+1} \ldots \varphi_k  \,] : \alpha \ \rangle, \odot_1=\odot_2  }
{\langle \ \Pi \succ [ {\textcolor{black} {\odot_1}} \  \ \varphi_1 \ldots   \varphi_{i-1} 
\,    \phi_1 \ldots \phi_n  \, \varphi_{i+1} \ldots \varphi_k  \,] : \alpha \ \rangle }
{\,\odot \odot}$$

\subsubsection{Possibilistic Rules}

We recall the possibilistic rules in Definition \ref{def:inderences}.
We pay attention to the case in which a conjunction 
$ \langle  \, \{\wedge \ \ \varphi_1 \ldots \varphi_i \ldots \varphi_k\} 
 : \alpha \rangle $ is deduced. It is clear that, in this case, we can deduce 
that  the necessity weight of   each individual conjunct $ \varphi $  is $ \alpha $:

 $$ \frac{\langle \, \{\wedge \ \ \varphi_1 \ldots \varphi_i \ldots \varphi_k\}
 : \alpha \rangle }
 {\{\langle \varphi_1 : \alpha \rangle, \ldots, 
\langle \varphi_i : \alpha\rangle,
 \ldots, \langle \varphi_k : \alpha \rangle \} } 
 {\mbox{\em \  \underline{MinD}}}$$

\vspace{.15cm}
The last needed rule to be included in $ \mathcal{UR}_\Sigma $ is {\em MaxN}:
 $$ \langle \varphi : \alpha \rangle, 
\langle \varphi : \beta \rangle \vdash  
\langle \varphi : \mbox{max}\{\alpha, \beta \} \rangle \quad 
\mbox{\em \underline{MaxN}}$$

 \subsubsection{The   Calculus $\mathcal{UR}_\Sigma$  }
 \label{subsec:Calculus}

The calculus    $\mathcal{UR}_\Sigma$  is  composed of all the above inference rules:

\begin{definition} We  define    $\mathcal{UR}_\Sigma$ as the  calculus 
 formed by     {\em UR$_{\Sigma}$}, {\em UR$_P$}, 
  the rules  {\em MinD}   and {\em MaxN}, and the    simplification rules,  namely  
   {\em  $$\mathcal{UR}_\Sigma=\{\mbox{UR}_{\Sigma},
   \mbox{UR}_P,\mbox{MinD}, \mbox{MaxN}, 
   {\bf \bot} \vee, {\bf \bot} \wedge,  \odot \,\phi, \odot \odot \, \}.$$ }
\end{definition}

   Examples \ref{ex:firstexample} and \ref{ex:secondexample}
   are simple examples of how $\mathcal{UR}_\Sigma$ proceeds.
   Example \ref{ex:BasePossib},  
   and its continuation Example \ref{ex:basepossib-cont}, are 
   quite complete examples.  Example \ref{ex:BasePossib} illustrates how $ \mathcal{UR}_\Sigma $  searches for just one empty clause $ \langle \bot : \alpha \rangle $ and Example \ref{ex:basepossib-cont} determines $ \mbox{Inc}(\Sigma) $.
%Section  \ref{sect:ProofsProperties} proves the soundness of $ \mathcal{UR}_\Sigma $.  
%As already said,  $\mathcal{UR}_\Sigma$ must also encompasses several 
%inferences rules to simplify formulas
%which are intuitive and so are formally specified  later.
 
 \vspace{.15cm}
  {\bf Remark.} Having established  
  possibilistic NC unit-resolution, the procedure 
NC unit-propagation    for possibilistic NC formulas can 
  be designed, and  on top of it,   the possibilistic NC   DPLL scheme
  can be defined.

\subsection{Finding $\mbox{Inc}(\Sigma)$} \label{subsect:find}

 The   calculus  
$ \mathcal{UR}_\Sigma $  determines {\bf just one} sub-set of contradictory formulas
along with  its inconsistency degree. Yet, a given $\Sigma  $
can typically contain  many contradictory subsets, each of them induces
the deduction of one empty clause $ \langle \bot : \alpha \rangle $.  By definition of
$ \mbox{Inc}(\Sigma)=\mbox{max}\{\alpha: \Sigma_{\geq \alpha} 
\mbox{is inconsistent} \}$ and   by Proposition \ref{pro:iff-iff-iff}, we have that:
\begin{equation} \label{eq:INCSigma}
 \mbox{Inc}(\Sigma)=\mbox{max}\{\alpha: \Sigma_{\geq \alpha} 
\mbox{is inconsistent} \}=\mbox{max}\{\alpha \,\vert \,
\Sigma \vdash \langle \bot : \alpha \rangle\}.
\end{equation}

Our simple strategy to find $ \mbox{Inc}(\Sigma)$ is as follows.
Firstly, using $\mathcal{UR}_\Sigma$, we determine one inconsistent
subset  $ \Sigma_1 \subseteq \Sigma $   
and its $ \mbox{Inc}(\Sigma_1)=\alpha $, which according to (\ref{eq:INCSigma}),
amounts to deducing   $\langle \bot: \alpha \rangle$.
 In the future, we are only interested in knowing whether $ \mbox{Inc}(\Sigma)>\alpha $ 
 and so, for that, we require only the strict $ \alpha $-cut of $ \Sigma $, i.e. $ \Sigma_{>\alpha}  $. Again using $\mathcal{UR}_\Sigma$, we attempt to deduce 
 $\langle \bot: \beta \rangle$ and, if it is obtained,   continue with $ \Sigma_{>\beta}  $.
These operations are recursively performed until getting a consistent base, i.e.  
  the empty clause is no longer  deduced.  Then the  $ \alpha $-cut 
of the last inconsistent base   
is  the   sought $ \mbox{Inc}(\Sigma)$. This process is algorithmically described below.
{\bf Find} should be  called with {\bf Inc} = 0.
% so that if $\Sigma  $ is consistent, then   
% {\bf Inc}=0 is returned.

%\noindent Hence we should exhaustively obtain the inconsistency degree of
%all inconsistent   subsets of $ \Sigma $, which
%implies to deduce all derivable $\langle \bot : \alpha \rangle$  from  $ \Sigma $, towards determining  their maximum.  
% Such a process is performed by the next procedure 
% with inputs   $\Sigma$  
%and   {\em $ \mbox{INC}= 0 $.}

\vspace{.4cm}
\noindent \ \  {\bf Find($ \Sigma$, Inc) }

\vspace{-.1cm}
{\em 
\begin{enumerate}

\item [(1)] 
%If $ \Sigma=\emptyset $ then returns INC else 
Apply \, $ \mathcal{UR}_\Sigma $ to $\Sigma  $ and 
if  $\langle \bot : \alpha \rangle$ is derived 
  then go to (2) else Return  {\bf Inc}.

\item [(2)]  We search  whether there exists
$ \beta  > \alpha $ such that $ \Sigma_{> \beta} $ is inconsistent.
Thus, we update 
$ \Sigma \leftarrow \{\langle \varphi : \beta \rangle \ \vert \
\langle \varphi : \beta \rangle \in \Sigma, \ \beta > \alpha\} $   and
     $ {\bf Inc} \leftarrow \alpha  $;  
     and  call \,{\bf Find($ \Sigma $, Inc)}.

\end{enumerate}
}
\noindent The value  {\bf Inc} returned by {\bf Find} is   $\mbox{Inc}(\Sigma) $,
which is proven in Section \ref{sect:ProofsProperties}. If {\bf Inc} = 0 then the input $ \Sigma $ is consistent. 
   In Example \ref{ex:basepossib-cont},
we illustrate the  algorithmic strategy of {\bf Find}.

\subsection{Further Inferences Rules} \label{subsub:2inferencerules}
%\vspace{.15cm}
This last subsection  presents  two further inferences rules  no required
to ensure completeness but, since they allow shorter proofs, their appropriate management can yield 
significant speed-ups. These two rules are:
Propositional NC Local Unit-Resolution and Possibilistic NC Hyper Unit-Resolution.

%$$\tcboxmath{\frac{\Sigma \succ 
%( \, \langle \textcolor{red} \ell : \alpha \rangle \ {\textcolor{red} {\wedge}} \ 
%\langle\Pi \succ \, ({\textcolor{blue} {\vee}} \ 
% \ \mathcal{C}({\textcolor{blue} {\neg \ell}}) \ \ 
% \mathcal{D}({\textcolor{blue} {\neg \ell}})  \,) : \beta \rangle \, ) }
%{\Sigma \succ  \langle \Pi \succ  \mathcal{D}({\textcolor{blue} {\neg \ell}})  
%: \mathrm{\bf min}\{\alpha,\beta\}  \, \rangle  }{\mbox{\,UR}}}$$

%\vspace{.25cm}
\subsubsection{Propositional NC Local-Unit-Resolution} 

% So far we have defined $\mathcal{UR}_\Sigma$ 
% for checking  the inconsistency degree of Horn-NC formulas
%but 
{\em $\mbox{UR}_{P} $} could also apply to propositional sub-formulas 
    and  can be  used in the general framework of  non-Horn-NC bases.
 The {\em $\mbox{UR}_{P} $} local application   
  means that  applying {\em $\mbox{UR}_{P} $} to
    sub-formulas $ \varphi $ 
   of any formula $ \Pi $, where  
    $\langle \Pi : \alpha \rangle \in \Sigma $, 
    such that $ \varphi $ has the 
 {\em $\mbox{UR}_{P} $}  numerator pattern,
     should be authorized. Namely,
 applying {\em $\mbox{UR}_P $}  to   sub-formulas   with pattern   
 $\varphi={\textcolor{red} {\ell}} \ {\textcolor{red} {\wedge}} \ 
\Pi \cdot ({\textcolor{blue} {\vee}} \  
 \mathcal{C}({\textcolor{blue} {\neg \ell}}) \ 
  \mathcal{D}({\textcolor{blue} {\neg \ell}})  \,)$   should be permitted
 and so, $ \varphi $ could be substituted  with 
${\textcolor{red} {\ell}} \ {\textcolor{red} {\wedge}} \ 
\Pi \cdot   \mathcal{D}({\textcolor{blue} {\neg \ell}}) $.  
Hence,     the formal specification of the  Propositional NC
 Local-Unit-Resolution 
rule, {\em $\mbox{LUR}$}, for any non-Horn-NC 
     $ \varphi $ is:

$$\tcboxmath{\frac{\langle \ \Pi 
\succ (\,{\textcolor{red} {\ell}} \ \ {\textcolor{red} {\wedge}} \ \ 
\varphi \succ ({\textcolor{blue} {\vee}} \ \ 
 \mathcal{C}({\textcolor{blue} {\neg \ell}}) \ \
  \mathcal{D}({\textcolor{blue} {\neg \ell}})  \,) \ \ ) : \alpha \ \rangle}
{\langle \ \Pi \succ (\,{\textcolor{red} {\ell}} \ {\textcolor{red} {\wedge}} \ 
\varphi \succ    \mathcal{D}({\textcolor{blue} {\neg \ell}})  \,) 
: \alpha \ \rangle }{{\mbox{\em \ \underline{LUR}}}}}$$

This inference rule should be read: if    $\langle \Pi : \alpha \rangle  \in \Sigma $ 
and $ \Pi $ has a conjunctive {\bf sub-formula}   with a literal 
$  \textcolor{red} \ell   $  conjunctively linked to a sub-formula
$ \varphi $ having   pattern $ ({\textcolor{blue} {\vee}} \ 
 \ \mathcal{C}({\textcolor{blue} {\neg \ell}}) \ \ 
 \mathcal{D}({\textcolor{blue} {\neg \ell}})  \,) $, then its component 
 $\mathcal{C}({\textcolor{blue} {\neg \ell}})$ can be eliminated.
 
 \vspace{.1cm}
 An example illustrating the functioning of the previous
 rule is Example \ref{ex:LUR}.

% Observe that the introduction of this new GRUR rule applicable to 
% certain sub-formulas habilitates new sequences of  inferences. 

\vspace{.1cm}
{\bf Remark.} The introduction of this new  rule {\em $\mbox{LUR}$} applicable to certain
sub-formulas habilitates new sequences of inferences, and so, shorter proofs are now available. %Hence their suitable management  
%can enhance the overall deductive efficiency. 

\begin{proposition} Let  $ \langle\Pi, \alpha \rangle \in \Sigma $. If applying
{\em ${\mbox{\em LUR}}$} to $ \Pi $ results in
$ \Pi' $, then $ \Pi $  and  $ \Pi' $ are logically equivalent.
\end{proposition}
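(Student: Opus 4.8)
The plan is to reduce the claim to the single rewriting that \emph{LUR} performs and to show that this rewriting preserves truth value under every interpretation. Concretely, \emph{LUR} replaces a conjunctive subformula of $\Pi$ of the form $\Phi = \ell \wedge \varphi$, where $\varphi$ contains the disjunction $(\vee\ \mathcal{C}(\neg\ell)\ \mathcal{D}(\neg\ell))$ as a subformula, by $\Phi' = \ell \wedge \varphi'$, where $\varphi'$ is obtained from $\varphi$ by substituting $\mathcal{D}(\neg\ell)$ for that disjunction. So $\Pi'$ is exactly $\Pi$ with the occurrence of $\Phi$ replaced by $\Phi'$. First I would prove the \emph{local} equivalence $\Phi \equiv \Phi'$, and then lift it to $\Pi \equiv \Pi'$ by a congruence argument.

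For the local equivalence I would fix an arbitrary interpretation $\omega$ and split on the value of $\ell$. If $\omega(\ell)=0$, then $\ell$ occurs as a conjunct of both $\Phi$ and $\Phi'$, so by the $\min$-rule for conjunctions (Definition~\ref{def:interpretation}) both evaluate to $0$, hence $\omega(\Phi)=\omega(\Phi')=0$. If $\omega(\ell)=1$, then $\omega(\neg\ell)=0$, and here I would invoke the defining property of $\mathcal{C}(\neg\ell)$, namely that it is conjunctively linked to $\neg\ell$, so $\mathcal{C}(\neg\ell)\equiv \neg\ell \wedge \psi$ for some $\psi$. This gives $\omega(\mathcal{C}(\neg\ell))=\min(\omega(\neg\ell),\omega(\psi))=0$, and therefore $\omega((\vee\ \mathcal{C}(\neg\ell)\ \mathcal{D}(\neg\ell)))=\max(0,\omega(\mathcal{D}(\neg\ell)))=\omega(\mathcal{D}(\neg\ell))$. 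Thus the replaced disjunction and its replacement $\mathcal{D}(\neg\ell)$ take the \emph{same} truth value under this $\omega$.

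The step that finishes both parts is a single pointwise congruence lemma: for any NC context $C[\cdot]$ and any $\omega$, if $\omega(A)=\omega(B)$ then $\omega(C[A])=\omega(C[B])$. This follows by an easy induction on the connective structure of $C$ above the distinguished position, using only that $\min$ and $\max$ are functions of their arguments (Definition~\ref{def:interpretation}). Applying it to the occurrence of the disjunction inside $\varphi$ yields $\omega(\varphi)=\omega(\varphi')$ in the case $\omega(\ell)=1$, whence $\omega(\Phi)=\min(1,\omega(\varphi))=\omega(\varphi')=\omega(\Phi')$. Combining the two cases gives $\omega(\Phi)=\omega(\Phi')$ for every $\omega$, i.e.\ $\Phi\equiv\Phi'$. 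Applying the same congruence lemma once more, now to the occurrence of $\Phi$ inside $\Pi$ and using the equivalence $\Phi\equiv\Phi'$ just established, gives $\omega(\Pi)=\omega(\Pi')$ for all $\omega$, that is $\Pi\equiv\Pi'$.

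I expect the main obstacle to be conceptual rather than computational: the substitution $(\vee\ \mathcal{C}(\neg\ell)\ \mathcal{D}(\neg\ell))\rightarrow \mathcal{D}(\neg\ell)$ is \emph{not} equivalence-preserving in isolation, and the whole argument hinges on the fact that $\ell$ is a conjunct governing the position at which the rewrite is made. The care required is therefore in (i) pinning down which literal $\ell$ licenses the rewrite and using the characterization $\mathcal{C}(\neg\ell)\equiv\neg\ell\wedge\psi$ exactly where $\ell$ is forced true, and (ii) stating the congruence lemma so that it can be applied at the two nested levels (inside $\varphi$, and then inside $\Pi$) without circularity.
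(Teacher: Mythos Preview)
Your proof is correct. The paper's own proof is a one-line deferral: it simply says that the soundness of \emph{LUR} follows from that of \emph{UR$_P$} (established in Lemma~\ref{lem:propo-complet}), since \emph{LUR} is nothing more than \emph{UR$_P$} applied to a conjunctive subformula rather than to the whole formula, and replacing a subformula by a logically equivalent one preserves equivalence of the ambient formula.

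What you do differently is unpack this entirely: you give a self-contained semantic argument by case analysis on $\omega(\ell)$, explicitly using the characterization $\mathcal{C}(\neg\ell)\equiv\neg\ell\wedge\psi$ to show the local rewrite is value-preserving under the constraint $\omega(\ell)=1$, and then invoke an explicit congruence lemma twice. This buys you a proof that does not depend on having already established the soundness of \emph{UR$_P$}; in fact your argument is essentially the detailed soundness proof that the paper leaves implicit. The paper's route is shorter because it reuses prior machinery, but yours is more transparent and makes precise the point you correctly flag as the crux: the rewrite $(\vee\ \mathcal{C}(\neg\ell)\ \mathcal{D}(\neg\ell))\to\mathcal{D}(\neg\ell)$ is only sound in a context where $\ell$ is conjunctively forced.
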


\begin{niceproof} The   soundness of ${\mbox{\em LUR}}$
follows from that of {\em UR$_P$} proved in Lemma \ref{lem:propo-complet}.
\end{niceproof}

\noindent {\bf Remark.} The  obtaining  of the simplification rules 
for their local application to sub-formulas  is similarly obtained.

%\vspace{.2cm}

\subsubsection{Possibilistic NC Hyper-Unit-Resolution}
  
The given definition of possibilistic  NC  unit-resolution, or  
{\em UR$_\Sigma$}, can be  extended in order to 
obtain Possibilistic  NC Hyper-Unit-Resolution
 (${\mbox{\em HUR}}$). Then assume that the possibilistic base 
 has a unit-clause $ \langle \textcolor{red} \ell : \alpha \rangle  $ and two 
 sub-formulas $ ({\textcolor{blue} {\vee}} \ 
 \ \mathcal{C}({\textcolor{blue} {\neg \ell^1}}) \ \ 
 \mathcal{D}({\textcolor{blue} {\neg \ell^1}})  \,) $ and
 $ ({\textcolor{blue} {\vee}} \ 
 \ \mathcal{C}({\textcolor{blue} {\neg \ell^2}}) \ \ 
 \mathcal{D}({\textcolor{blue} {\neg \ell^2}})  \,) $,
 where $\neg \ell^i $ denotes a specific occurrence
 of $ \neg \ell $. The simultaneous application of NC unit-resolution with two sub-formulas
 is formally expressed as follows:
$$\frac{ \langle \textcolor{red} \ell : \alpha \rangle 
\ {\textcolor{red} {\wedge}} \ 
\langle\Pi^1 \succ \, ({\textcolor{blue} {\vee}} \ 
 \ \mathcal{C}({\textcolor{blue} {\neg \ell^1}}) \ \ 
 \mathcal{D}({\textcolor{blue} {\neg \ell^1}})  \,) : \beta^1 \rangle  \ 
 {\textcolor{red} {\wedge}} \   
 \langle\Pi^2 \succ \, ({\textcolor{blue} {\vee}} \ 
 \ \mathcal{C}({\textcolor{blue} {\neg \ell^2}}) \ \ 
 \mathcal{D}({\textcolor{blue} {\neg \ell^2}})  \,) : \beta^2 \rangle}
{ \langle \Pi^1 \succ  \mathcal{D}({\textcolor{blue} {\neg \ell^1}})  
: \mathrm{\bf min}\{\alpha,\beta^1\}  \, \rangle \, \wedge \,
\langle \Pi^2 \succ  \mathcal{D}({\textcolor{blue} {\neg \ell^2}})  
: \mathrm{\bf min}\{\alpha, \beta^2\}  \, \rangle }{\mbox{\em }}$$

\noindent If   the sub-formula
 $\langle \Pi^i \succ ({\textcolor{black} {\vee}} \ 
  \mathcal{C}({\textcolor{black} {\neg \ell^i}}) 
 \  \mathcal{D}({\textcolor{black} {\neg \ell^i}}) ) : \beta^i \rangle$ 
 is denoted $\langle\Pi, \mathcal{CD}({\textcolor{black} {\neg \ell}}), \beta \rangle^i$,
 then 
 ${\mbox{\em HUR}}$  for $ k $ sub-formulas is formally expressed  below,
 where, for  $ i, 1 \leq i \leq k $, $\beta'^i  =min\{\alpha,\beta^i\}$. 
 
$$\tcboxmath{\frac{ 
\langle {\textcolor{red} {\ell }} : \alpha \rangle \ {\textcolor{red} {\wedge}} \ 
\langle \Pi, \mathcal{CD}({\textcolor{blue} {\neg \ell}}), 
\beta \rangle^1
{\textcolor{red} {\wedge}} \ldots {\textcolor{red} {\wedge}} \ 
\langle \Pi, \mathcal{CD}({\textcolor{blue} {\neg \ell}}), 
\beta  \rangle^i \
 {\textcolor{red} {\wedge}} \,\ldots  {\textcolor{red} {\wedge}}
\ \langle \Pi , \mathcal{CD}({\textcolor{blue} {\neg \ell}}), \beta \rangle^k  }
{ \langle \Pi, \mathcal{D}({\textcolor{blue} {\ell}}), \beta' \rangle^1 \ 
{\textcolor{red} {\wedge}} \ \ldots \ {\textcolor{red} {\wedge}} \
\langle \Pi, \mathcal{D}({\textcolor{blue} {\ell}}), 
\beta' \rangle^i \ {\textcolor{red} {\wedge}} \
\ldots \ {\textcolor{red} {\wedge}} \
\langle \Pi, \mathcal{D}({\textcolor{blue} {\neg \ell}}), \beta' \rangle^k  } 
{\mbox{\em \ \underline{HUR}}}}$$

Since
 $ {\textcolor{black} {\neg \ell^i }}, 1 \leq i\leq k $, are 
   literal occurrences that are pairwise different, so are the   sub-formulas 
 $ \mathcal{CD}^i $ (and so $ \mathcal{D}^i $)
in the numerator and denominator of ${\mbox{\em HUR}}$. However, the formulas $ \Pi^i $
are not necessarily different; see {\bf Example \ref{ex:HUR}} for this concrete question and in general
for checking the working of the rule ${\mbox{\em HUR}}$. 

\vspace{.1cm}
{\bf Remark.} An NC hyper unit-resolution rule, more general than ${\mbox{\em HUR}}$,
 can be devised to include simultaneously several unit-clauses so that for each unit-clause 
$ \langle \ell : \alpha \rangle $ 
  several sub-formulas $\langle \Pi^i \succ ({\textcolor{black} {\vee}} \ 
  \mathcal{C}({\textcolor{black} {\neg \ell^i}}) 
 \  \mathcal{D}({\textcolor{black} {\neg \ell^i}}) ) : \beta^i \rangle$  can be considered. 
 In other words, one can consider simultaneously $ k \geq 2 $ 
  unit clauses and so simultaneously apply $ k \geq 2 $  {\em HUR} rules.

\section{Illustrative Examples} \label{sec:illustrativeexamples}

This section gives examples illustrating  the notions  presented in the previous sections.
Concretely, we provide   the next examples:

\vspace{.15cm}
 -- Example \ref{ex:firstexample}: a simple  inconsistent possibilistic  Horn-NC base. 
 
 \vspace{.05cm}
 -- Example  \ref{ex:secondexample}: a simple consistent possibilistic Horn-NC base.
  
  \vspace{.05cm}
 -- Example \ref{ex:FormulaPropo}:  a   complete propositional Horn-NC formula.
 
 \vspace{.05cm}
 -- Example  \ref{ex:combination-example}:  an input base with an inconsistent propositional formula.

 \vspace{.05cm}
 -- Example \ref{ex:BasePossib}:  a   complete possibilistic Horn-NC base.
 
 \vspace{.05cm}
 -- Example \ref{ex:basepossib-cont}: an example showing the strategy of   {\bf Find}.
 
 \vspace{.05cm}
 --   Example \ref{ex:LUR}:  an example of NC Local Unit-Resolution.
 
 \vspace{.05cm}
 -- Example \ref{ex:HUR}:  an example of NC Hyper Unit-Resolution. 

\vspace{.15cm}
\noindent Among such examples, we highlight Examples \ref{ex:BasePossib} 
and \ref{ex:basepossib-cont}, which  contain
 a rather complete Horn-NC base, whose inconsistency
degree is obtained  in two phases.  The first one is provided in Example \ref{ex:BasePossib}, 
and the second   in Example  \ref{ex:basepossib-cont}.
All inference rules of $\mathcal{UR}_{\Sigma}$ are needed
as well as   their combination   with the strict $ \alpha $-cuts of the input $ \Sigma $.

\begin{example}   \label{ex:firstexample}
 Let us assume that $ \Sigma  $ is the next possibilistic Horn-NC base:
$$\Sigma= \{ \, \langle P  : {\bf .8} \rangle,   \ \,
\langle \, \Pi_1=(\vee \ \ \{\wedge \ \ \neg P \ \ \neg Q\} \ \   Q ) : {\bf .6} \rangle, \ \,  
 \langle \, (\vee \ \neg P \ \neg Q ) : {\bf .7} \rangle \}$$

\begin{itemize}
\item {\em UR$_\Sigma$} with  $\langle P  : {\bf .8} \rangle$ and $ \Pi_1 $ gives rise to the next matchings:

-- $   \ \Pi=\Pi_1 $ 

  --  $   \ ({\textcolor{blue} {\vee}} \ 
 \ \mathcal{C}({\textcolor{blue} {\neg P}}) \  \mathcal{D}({\textcolor{blue} {\neg P}})  \,)
 =\Pi_1$  
 
-- $ \ \mathcal{C}({\textcolor{blue} {\neg P}})= \{\wedge \ \ \neg P \ \ \neg Q\}$
   
-- $  \ \mathcal{D}({\textcolor{blue} {\neg P}})=  Q $

 \item  Hence,  {\em UR$_{\Sigma}$}  adds:
 \quad $\Sigma \leftarrow \Sigma \ \cup \ \langle \, (\vee  \ \,   Q ) : {\bf .6} \rangle$ 

\item Applying simplifications to the last formula: \quad $\Sigma \leftarrow \Sigma \ \cup \ \langle       Q   : {\bf .6} \rangle$ 

\item {\em UR$_\Sigma$} with $\langle       Q   : {\bf .6} \rangle$ and with the last formula in the initial $ \Sigma $ gives:

-- $   \ \Pi=(\vee \ \neg P \ \neg Q ) $ 

  --  $   \ ({\textcolor{blue} {\vee}} \ 
 \ \mathcal{C}({\textcolor{blue} {\neg Q}}) \  \mathcal{D}({\textcolor{blue} {\neg Q}})  \,)
 =\Pi$  
 
-- $ \ \mathcal{C}({\textcolor{blue} {\neg Q}})= {\textcolor{blue} {\neg Q}}$
   
-- $  \ \mathcal{D}({\textcolor{blue} {\neg Q}})=  \neg P $

\item  Hence,  {\em UR$_{\Sigma}$}  adds: \ $ \Sigma \leftarrow \Sigma \ \cup \ \langle \, (\vee \ \neg P  ) : {\bf .6} \rangle$

\item  Resolving $\langle P  : {\bf .8} \rangle$ in  the input $ \Sigma $ 
with the last formula:   $\Sigma \leftarrow \Sigma \ \cup   \langle     \bot : {\bf .6} \rangle$
   
 \item  Therefore  {\em $\mathcal{UR}_{\Sigma}$}  obtains $ \mbox{Inc}(\Sigma)=  {\bf .6}$ \qed
\end{itemize}

\end{example}

\begin{example}   \label{ex:secondexample}
 Let us assume that $ \Sigma  $ is the next possibilistic Horn-NC base:
$$\Sigma= \{ \, \langle Q  : {\bf .8} \rangle,   \ \,
\langle \, \Pi_1=(\vee \ \ \neg Q \ \ \{\wedge \ \ R \ \ 
(\vee \ \ {\textcolor{blue} {\neg Q}} \ \ \{\wedge \ \ S \ \ \neg P \} 
\,)  \, \} \,) : {\bf .6} \rangle, \ \,  
\langle \, (\vee \ \neg P \ \neg Q ) : {\bf .7} \rangle  \}$$

\begin{itemize}
\item {\em UR$_{\Sigma}$} with $\langle Q  : {\bf .8} \rangle$ and with the 
rightest $ {\textcolor{blue} {\neg Q}} $ in $ \Pi_1 $ gives the next matchings:

-- $   \ \Pi=\Pi_1 $ 

  --  $   \ ({\textcolor{blue} {\vee}} \ 
 \ \mathcal{C}({\textcolor{blue} {\neg Q}}) \  \mathcal{D}({\textcolor{blue} {\neg Q}})  \,)
 =(\vee \ \ {\textcolor{blue} {\neg Q}} \ \ \{\wedge \ \ S \ \ \neg P \} 
\,)$  
 
-- $ \ \mathcal{C}({\textcolor{blue} {\neg Q}})=  {\textcolor{blue} {\neg Q}}$
   
-- $  \ \mathcal{D}({\textcolor{blue} {\neg Q}})=  \{\wedge \ \ S \ \ \neg P \}  $

 \item  Hence,  {\em UR$_{\Sigma}$}  adds:
 \quad $\Sigma \leftarrow \Sigma \ \cup \ \langle \, (\vee \ \ \neg Q \ \ \{\wedge \ \ R \ \ 
(\vee \ \   \{\wedge \ \ S \ \ \neg P \} 
\,)  \, \} \,) : {\bf .6} \rangle$ 

\item After simplifications: \  $\Sigma \leftarrow \Sigma \ \cup \ \langle \, 
(\vee \ \ {\textcolor{blue} {\neg Q}} \ \ \{\wedge \ \ R \ \     S \ \ \neg P    \, \} \,) : {\bf .6} \rangle$

\item Using again $\langle Q  : {\bf .8} \rangle$ and the last formula:
 
-- $   \ \Pi=(\vee \ \ {\textcolor{blue} {\neg Q}} \ \ \{\wedge \ \ R \ \     S \ \ \neg P    \, \} \,) $ 

  --  $   \ ({\textcolor{blue} {\vee}} \ 
 \ \mathcal{C}({\textcolor{blue} {\neg Q}}) \  \mathcal{D}({\textcolor{blue} {\neg Q}})  \,)
 =\Pi$  
 
-- $ \ \mathcal{C}({\textcolor{blue} {\neg Q}})=  {\textcolor{blue} {\neg Q}}$
   
-- $  \ \mathcal{D}({\textcolor{blue} {\neg Q}})=  \{\wedge \ \ R \ \     S \ \ \neg P    \, \} $

\item Hence,  {\em UR$_{\Sigma}$}  adds: \  $\Sigma \leftarrow \Sigma \ \cup \ \langle \, (\vee \ \   \{\wedge \ \ R \ \     S \ \ \neg P    \, \} \,) : {\bf .6} \rangle$

\item After simplifications: \  $\Sigma \leftarrow \Sigma \ \cup \ 
\langle \, \{\wedge \ \ R \ \ S \ \ \neg P \, \} \ : {\bf .6} \rangle$

\item Applying the rule {\em MinD}: \ 
$\Sigma \leftarrow \Sigma \ \cup 
\{\langle R : {\bf .6} \rangle,  \langle S : {\bf .6} \rangle, 
\langle \neg P : {\bf .6} \rangle \}$.

\item Using the first and last formulas in the initial $ \Sigma $: \ $\Sigma \leftarrow \Sigma \ \cup \ \langle \, (\vee \ \neg P ) : {\bf .7} \rangle$

\item Applying {\em MaxN} with $\langle \neg P : {\bf .6} \rangle$ and  $ \langle \, (\vee \ \neg P ) : {\bf .7} \rangle$ the former is eliminated.

\item No more resolvents apply.   

\item The propositional component  of $ \Sigma $ is consistent, so   $ \mbox{Inc}(\Sigma)=0 $. \qed
\end{itemize}
\end{example}

Next, we give a rather elaborated \underline{propositional} formula and show
how the propositional NC unit-resolution, or {\em  UR$_P$}, together 
with the simplification rules, detect
its inconsistency.

\begin{example} \label{ex:FormulaPropo} Let us assume that the input $ \Sigma $ 
has a possibilistic Horn-NC $ \langle \varphi: \alpha \rangle$ given below, where
$ \phi_1 $ and $ \phi_2 $ are assumed to be Horn-NC formulas.
$$ \langle \ \varphi=\{\wedge \ \ (\vee \ \ R \ \phi_1) \ \ (\vee \ \ \neg {P} \ \ \{\wedge \  \   
(\vee   \ \   \neg {P} \  \ \neg {R} \,) 
\quad (\vee \ \ \phi_2 \ \   \{\wedge \ \ \neg {Q}  \ \ {\textcolor{blue} {\neg {P}}} \,\} \, ) \ \ R \, \} \, ) \ {\textcolor{red} P} \,\} : \alpha \ \rangle$$ 

\noindent   The    tree associated with $ \varphi $ is depicted in {\bf Fig. 4}, left. 

\begin{center}
\begin{adjustbox}{valign=t}
\begin{tikzpicture}[]
\Tree
[.{$\wedge :  \alpha$}   [.$\vee$ [.R\\ ][.$\phi_1$\\ ] ] 
             [.$\vee$  [.{\color{black}$\neg {P}$}\\ ] 
                       [.$\wedge$  [.$\vee$ [.$\neg {P}$\\ ] [.$\neg {R}$\\ ]  ] 
                                   [.$\vee$ [.$\phi_2$\\ ] 
                                              [.$\wedge$ [.$\neg {Q}$\\ ] [.{\color{blue}$\neg {P}$}\\ ] ]
                                              ]
                                   [.R\\ ] 
                       ]
              ]
             [.{\color{red} P}\\ ]
        ]
\end{tikzpicture}
\end{adjustbox} 
\begin{adjustbox}{valign=t} \hspace{.5cm}
\begin{tikzpicture}[]
\Tree
[.{$\wedge :  \alpha$}   [.$\vee$ [.R\\ ][.$\phi_1$\\ ] ] 
             [.$\vee$  [.{\color{blue}$\neg {P}$}\\ ] 
                       [.$\wedge$  [.$\vee$ [.$\neg {P}$\\ ] [.$\neg {R}$\\ ]  ] 
                                   [.$\vee$ [.$\phi_2$\\ ] 
                                              ]
                                   [.R\\ ] 
                       ]
              ]
             [.{\color{red} P}\\ ]
        ]
\end{tikzpicture}
\end{adjustbox}

\vspace{-.3cm}
{\small {\bf Fig.  4.} {\em Formulas $ \varphi $ (left) and  $ \varphi' $ (right)}}
\end{center}

%\vspace{-.6cm}
%\hspace{1.8cm}{\footnotesize{\bf Fig.  5.} {\em Formula $ \varphi $}} 
%\hspace{3.3cm} 
\noindent Thus, before computing
the inconsistency degree of $ \Sigma $, one needs to check whether its 
propositional formulas are inconsistent.  We show below how  {\em UR$_P$} 
checks the inconsistency of $ \varphi $.
%\space{.2cm}
\noindent {\em UR$_P$} with  ${\textcolor{red} P}$ and the right-most 
${\textcolor{blue} {\neg {P}}}$ yields the next matchings in the {\em UR$_P$} numerator:

\begin{itemize}

\item $\Pi 
=(\vee \ \ \neg {P} \ \ \ \{\wedge \  \   
(\vee   \ \   \neg {P} \  \ \neg {R} \,)  
\ \ \,(\vee \ \ \phi_2 \ \   \{\wedge \ \ \neg {Q}  \ \ {\color{blue}\neg {P}} \,\} \, ) \,  \ \ R \, \} \, )$

\item $ (\vee  \ \mathcal{C}({\color{blue}\neg {P}}) \ \mathcal{D}({\color{blue}\neg {P}}))
=(\vee \ \ \phi_2 \ \   \{\wedge \ \ \neg {Q}  \ \ {\color{blue}\neg {P}} \,\} \, ).$

\item $ \mathcal{C}({\color{blue}\neg {P}})=\{\wedge \ \ \neg {Q}  \ \ {\color{blue}\neg {P}} \,\} $

\item $ \mathcal{D}({\color{blue}\neg {P}})= \phi_2  $
\end{itemize}
\noindent Applying   {\em UR$_P$} to $ \varphi $ yields:    
$$\varphi'=\langle  \ \{\wedge \ \ (\vee \ \ R \ \phi_1) \ \ (\vee \ \ \neg {P} \ \ \{\wedge \  \   
(\vee   \ \   \neg {P} \  \ \neg {R} \,) 
\quad (\vee \ \ \phi_2  \, ) \ \ R \, \} \, )  \,\} : \alpha \ \rangle$$

\noindent  The resulting tree is   the right  one in  {\bf Fig. 4}. 
% By continuing with   Example  \ref{ex:FormulaPropo},
%we now show how    URpr in (\ref{eq:Propo-UR}) finds that $\varphi'  $ is  unsatisfiable.
Assume that we proceed now   with a second NC unit-resolution step
by picking the same  ${\textcolor{red} P}$ and the left-most  ${\color{blue}\neg {P}}$ 
(colored blue  in {\bf Fig. 4}, right). Then, 
the right conjunct of the numerator of {\em UR$_P$} is as follows:  

\begin{itemize}

\item  $\Pi  =
(\vee \ {\color{blue}\neg {P}} \ (\vee \ \{\wedge \  \   
(\vee   \    \neg {P} \  \ \neg {R} \,) 
\ \  (\vee \  \phi_2  \, ) \ \ R \, \} \, )) $

\item $ (\vee  \ \mathcal{C}({\color{blue}\neg {P}}) \ \mathcal{D}({\color{blue}\neg {P}}))
= \Pi$

\item $\mathcal{C}({\color{blue}\neg {P}})={\color{blue}\neg {P}}$  

\item  $\mathcal{D}({\color{blue}\neg {P}})=(\vee \ \{\wedge \  \   
(\vee   \    \neg {P} \  \ \neg {R} \,) 
\ \  (\vee \  \phi_2  \, ) \ \ R \, \} \, )$
\end{itemize}
\noindent By applying  {\em UR$_P$} to $ \varphi' $,   the obtained formula 
 is depicted  in    {\bf Fig. 5}, left: 
\begin{center}
\begin{adjustbox}{valign=t}
\begin{tikzpicture}[]
\Tree
[.{$\wedge :  \alpha$}   [.$\vee$ [.R\\ ][.$\phi_1$\\ ] ] 
             [.$\vee$   
                       [.$\wedge$  [.$\vee$ [.$\neg {P}$\\ ] [.$\neg {R}$\\ ]  ] 
                                   [.$\vee$ [.$\phi_2$\\ ] 
                                              ]
                                   [.R\\ ] 
                       ]
              ]
             [.{\color{red} P}\\ ]
        ]
\end{tikzpicture}
\end{adjustbox}
\begin{adjustbox}{valign=t} \hspace{2.cm}
\begin{tikzpicture}[]
\Tree
[.{$\wedge :  \alpha$}   [.$\vee$ [.R\\ ][.$\phi_1$\\ ] ]   
                        [.$\vee$ [.{\textcolor{blue}{$\neg {P}$}}\\ ] 
                        [.{\textcolor{blue}{$\neg {R}$}}\\ ]  ] 
                                    [.$\phi_2$\\ ] 
                                   [.{\textcolor{red} R}\\ ] 
             [.{\textcolor{red} P}\\ ]
        ]
\end{tikzpicture}
\end{adjustbox}

\vspace*{-.4cm}
{\small {\bf Fig. 5.} {\em Example \ref{ex:FormulaPropo} continued.}} 
\end{center}

\noindent After three  simplification steps, one gets the formula 
associated with the  right tree in  {\bf Fig. 5.}
Finally,  
 two  applications of  {\em UR$_P$}  to the two  pairs {\textcolor{red} {$R$}} and {\textcolor{blue}{$\neg {R}$}}, 
and  {\textcolor{red} {$P$}} and {\textcolor{blue}{$\neg {P}$}}, lead 
  the calculus   to  derive $\langle (\vee) : \alpha\rangle$. \qed
\end{example}

In the next example, we illustrate the effects of finding  that  one of the propositional
formulas of the input base is inconsistent.

%In the previous examples, we required to employ only either 
%the propositional {\em $\mbox{UR}_P$} or the possibilistic {\em $\mbox{UR}_{\Sigma}$} to determine the consistency of
%$ \Sigma $,
% below we provide an example wherein both 
%{\em $\mbox{UR}_\Sigma$}  and {\em $\mbox{UR}_{P}$} are required.

\begin{example} \label{ex:combination-example} Let  $ \varphi $ be the formula
from Example \ref{ex:FormulaPropo} and  $\Sigma_1 $ be $ \Sigma $ 
from Example \ref{ex:firstexample} 
and   let us analyze the base 
$ \Sigma = \Sigma_1   \cup   \{\langle \varphi : {\bf .6} \rangle \}$.
Then, firstly  the propositional rules of  {\em $\mathcal{UR}_\Sigma$} are applied to
each propositional Horn-NC in $ \Sigma $, and in particular, 
to $ \langle \varphi : {\bf .6} \rangle $, 
which, according to Example \ref{ex:FormulaPropo},  yields 
$  \langle (\vee) : {\bf .6} \rangle $. Then before calling {\bf Find},
$ \Sigma_1 $ is reduced to 
$ \Sigma_1=\{\langle P : {\bf .8} \rangle,  
\   \langle \, (\vee \ \neg P \ \neg Q ) : {\bf .7} \rangle\} $ and then 
{\bf Find} is called with such $ \Sigma_1 $. Since $ \Sigma_1$ is consistent,  
one can conclude that $ \mathrm{Inc}(\Sigma) = {\bf .6} $.

\end{example}

We next give a complete formula and illustrate how   $\mathcal{UR}_\Sigma$ determines just one inconsistent
 subset $ \Sigma_1 $ of  a  Horn-NC  base $ \Sigma $ and its degree 
 $ \mathrm{Inc}(\Sigma_1) $. By now, we are not
 concerned with finding the maximum inconsistency degree, but just in finding
 one inconsistent subset. Later, in Example \ref{ex:basepossib-cont}, we will illustrate the process performed by {\bf Find} to obtain
 the   inconsistency degree  $ \mathrm{Inc}(\Sigma) $. 

\begin{example}  \label{ex:BasePossib} 
 Let us assume that $ \Sigma  $ is the next possibilistic Horn-NC base:
$$\Sigma= \{ \, \langle P  : {\bf .8} \rangle,   \ \,
\langle \, \Pi_1 : {\bf .6} \rangle, \ \,  
\langle \, \Pi_2 : {\bf .5} \rangle, \ \, \langle \, \{\wedge \ \neg P \ \neg Q \} : {\bf .7} \rangle \}$$

\noindent wherein the propositional formulas $ \Pi_1 $ and $ \Pi_2 $, both individually consistent, are as follows:

\begin{itemize}
\item $  \Pi_1 = (\vee \ \ \{\wedge \ \ \neg P \ \ \neg Q\} \ \ \{\wedge \ \, Q \ \ P \}) $

%\vspace{.1cm}
\item $ \Pi_2 = (\vee \ \ \neg Q \ \ \{\wedge \ \ R \ \ 
(\vee \ \ \neg Q \ \ \{\wedge \ \ S \ \ \neg P \} 
\,)  \, \} \,) $
\end{itemize} 

\noindent The input base $ \Sigma $ is inconsistent and below, we step-by-step provide  the inferences carried 
out by the calculus {\em   $\mathcal{UR}_\Sigma$} to derive one
empty formula $ \langle \bot : \alpha \rangle $.

\begin{itemize}

\item  We apply {\em UR$_{\Sigma}$} with $ \langle P  : {\bf .8} \rangle $ 
and $ \langle \, \Pi_1 : {\bf .6} \rangle $ and the next matchings:

-- $\ \Pi=\Pi_1 $

 -- $ \ ({\textcolor{blue} {\vee}} \ 
 \ \mathcal{C}({\textcolor{blue} {\neg P}}) \ \ \mathcal{D}({\textcolor{blue} {\neg P}})  \,)=\Pi_1 $
 
--   $\ \mathcal{C}({\textcolor{blue} {\neg P}})=\{\wedge \ \neg P \ \neg Q\} $ 
  
-- $\ \mathcal{D}({\textcolor{blue} {\neg P}})=\{\wedge \ Q \  P\} $

 \item  Hence,  {\em UR$_{\Sigma}$}  adds:
 \quad $\Sigma \leftarrow \Sigma \ \cup \ \langle \, (\vee  \ \, \{\wedge \ Q \ P \}) : {\bf .6} \rangle$ 

\item Simplifying the last formula: \quad $\Sigma \leftarrow \Sigma \ \cup \ \langle    \, \{\wedge \ Q \ P \} : {\bf .6} \rangle$ 

\item Applying   {\em MinD}  to the last formula:
\  $\Sigma \leftarrow \Sigma \ \cup  \ \langle    \,    Q \  : {\bf .6} \rangle \cup \langle P  : {\bf .6} \rangle$  

\item Since   $\langle P  : {\bf .8} \rangle, \langle P  : {\bf .6} \rangle \in \Sigma$,
by {\em MaxN}:
   \  $ \Sigma \leftarrow \Sigma /\langle P  : {\bf .6}\rangle   $

\item Applying  {\em UR$_{\Sigma}$} with $ \langle    \,    Q \  : {\bf .6} \rangle $ 
  and the rightest $ \neg Q $   of $ \langle \, \Pi_2 : {\bf .5} \rangle $:

 -- $  \ \Pi=\Pi_2 $   

 -- $ \ ({\textcolor{blue} {\vee}} \ 
  \mathcal{C}({\textcolor{blue} {\neg Q}}) \  \mathcal{D}({\textcolor{blue} {\neg Q}})  \,)
 =(\vee \  {\textcolor{blue} {\neg Q}} \  \{\wedge \  S \  \neg P \} \,) $
    
 -- $ \ \mathcal{C}({\textcolor{blue} {\neg Q}})= {\textcolor{blue} {\neg Q}}$   
 
 -- $ \ \mathcal{D}({\textcolor{blue} {\neg Q}})=\{\wedge \ S \  \neg P\} $

 \item   Thus {\em UR$_{\Sigma}$}  adds: \quad $ \Sigma \leftarrow \Sigma \ \cup \
\langle \, (\vee \ \, \neg Q \ \, 
\{\wedge \ \, R \ \, (\vee \ \,  \{\wedge \ \, S \ \, \neg P \} 
\,)  \, \} \,) : {\bf .5} \rangle$

  -- We denote $ \langle \Pi_3 : {\bf .5} \rangle $ the last added formula.
   
\item Applying  {\em UR$_{\Sigma}$} with again $ \langle \, Q : {\bf .6} \rangle $  and the last formula   $ \langle \Pi_3 : {\bf .5} \rangle $:

-- $   \ \Pi=\Pi_3 $ 

  --  $   \ ({\textcolor{blue} {\vee}} \ 
 \ \mathcal{C}({\textcolor{blue} {\neg Q}}) \  \mathcal{D}({\textcolor{blue} {\neg Q}})  \,)
 =\Pi_3$  
 
-- $ \ \mathcal{C}({\textcolor{blue} {\neg Q}})= {\textcolor{blue} {\neg Q}}$
   
-- $  \ \mathcal{D}({\textcolor{blue} {\neg Q}})=\{\wedge \ \ R \ \ (\vee \ \  \{\wedge \ \ S \ \ \neg P \} 
\,)  \, \} $

\item   Hence  {\em UR$_{\Sigma}$} adds: \quad  $\Sigma \leftarrow \Sigma \ \cup \ \langle \, (\vee  \ \, 
\{\wedge \ \, R \ \ (\vee \ \,  \{\wedge \ \, S \ \, \neg P \} 
\,)  \, \} \,) : {\bf .5} \rangle$

\item Simplifying  the last formula: \quad
 $ \Sigma \leftarrow \Sigma \ \cup \ \langle \,  \{\wedge \ \ R  \ \ S \ \ \neg P \}    : {\bf .5} \rangle $

\item Using the rule {\em InvMinD:}  

 \ $ \Sigma \leftarrow \Sigma \ \cup \ \{ \langle \,  R    : {\bf .5} \rangle,  \langle \,  S     : {\bf .5} \rangle, \langle \,  \neg P : {\bf .5} \rangle \} $

\item From $\langle \,   \neg P     : {\bf .5} \rangle$ and the initial 
   $\langle P  : {\bf .8} \rangle$: \quad $\Sigma \leftarrow \Sigma \cup \langle \,  (\vee)    : {\bf .5} \rangle$.

\item So the (first) inconsistency degree found is {\bf .5}. \qed
\end{itemize}

\end{example}

\noindent Next example  continues   the previous one towards,  this time, computing   the proper
$ \mbox{Inc}(\Sigma) $. This example illustrates the strategy of {\bf Find} to do so. 

\begin{example} \label{ex:basepossib-cont}  Let us continue with Example \ref{ex:BasePossib}.
Since $\langle \, (\vee) : {\bf .5} \rangle$ 
was found, for checking whether $ \mbox{Inc}(\Sigma) >   {\bf .5}$,   all possibilistic formulas
whose necessity weight is not bigger than   ${\bf .5}$ are useless, that is, one can obtain the strict {\bf .5}-cut of $ \Sigma $.
Thus,   the new base is 
$ \Sigma_{>{\bf .5}}= \Sigma_1 \cup \Sigma_2 $,  where $ \Sigma_1 $ and
$ \Sigma_2 $  are  the strict {\bf .5}-cut of the \underline{initial formulas} 
 and of the
 \underline{deduced formulas},   respectively, and which are given below:

$$\Sigma_1= \{ \, \langle P  : {\bf .8} \rangle, \ \,  
\langle \, \Pi_1 : {\bf .6} \rangle, \ \,  
 \langle \, \{\wedge \ \neg P \ \neg Q \} : {\bf .7} \rangle \, \}$$ 
$$\Sigma_2=\{\langle \, (\vee  \ \, \{\wedge \ Q \ P \}) : {\bf .6} \rangle,
\ \ \langle    \, \{\wedge \ Q \ P \} : {\bf .6} \rangle, \ 
\ \langle \, Q \  : {\bf .6} \rangle,
  \}
$$

One can check that
 the only non-subsumed formula in $ \Sigma_2 $ is
 $\langle \, Q \  : {\bf .6} \rangle$.
 So, $\Sigma_2  $ is reduced to  $\Sigma_2 = \{ \langle \, Q \  : {\bf .6} \rangle\}$.
  Now,  {\bf Find} newly launches  the process to compute  the inconsistency   of
 $ \Sigma= \Sigma_1 \cup  \{ \langle \, Q \  : {\bf .6} \rangle\}$ with {\em ${\bf Inc}={\bf .5}$}  
   and follows  the next steps:
 
\begin{itemize}
%\item At this stage, the system must backtrack  
%to find another  $\langle (\vee) : \alpha \rangle$ s.t. $\alpha > {\bf .5}$.

\item   Using     $ \langle \, Q \  : {\bf .6} \rangle$  and  
 right-most formula in $ \Sigma_1 $ yields: \ \  $\langle \,  (\vee)    : {\bf .6} \rangle$.

\item The new $ \Sigma $ is  
$ \Sigma= \{  \langle P  : {\bf .8} \rangle,  \langle \, \{\wedge \ \neg P \ \neg Q \} : {\bf .7} \rangle \}$ 
and the new {\bf  Inc} is {\bf .6}.

 \item   $\mathcal{UR}_\Sigma $ is relaunched and finds
 $\langle \,  (\vee)    : {\bf .7} \rangle$.

\item The new $ \Sigma$ is $\{ \, \langle P  : {\bf .8} \rangle \, \}$ 
and the new   {\bf  Inc} is {\bf .7}.

\item  $\mathcal{UR}_\Sigma $   finds  $ \Sigma $ is consistent.

\item Hence {\bf Find} returns {\bf Inc = .7.}
 
\end{itemize}
\end{example}

%\vspace{.15cm}
Next example illustrates the application of  NC Local Unit-resolution, or {\em LUR.}

\begin{example}  \label{ex:LUR} Consider again the formula $ \varphi $ from Example \ref{ex:FormulaPropo}:
 $$\langle \ \varphi=\{\wedge \ \ (\vee \ \ R \ \phi_1) \ \ (\vee \ \ \neg {P} \ \ \{\wedge \  \   
(\vee   \ \   \neg {P} \  \ \neg {R} \,) 
\quad (\vee \ \ \phi_2 \ \   \{\wedge \ \ \neg {Q}  \ \ {\textcolor{black} {\neg {P}}} \,\} \, ) \ \ R \, \} \, ) \ {\textcolor{black} P} \,\} : \alpha \ \rangle.$$

\noindent One can check that   its sub-formula
$$ \phi=(\vee \ \ \neg P   \ \ \{\wedge \ \ (\vee \ \ \neg P  \ \ {\textcolor{blue} {\neg R}}) 
\ \ (\vee \ \ \phi_2 \ \ \{\wedge \ \ \neg {Q}  
\ \ {\textcolor{black} {\neg P  }} \}  ) \ \ {\textcolor{red} R} \} \,)$$ 
has the  pattern of the 
${\mbox{\em LUR}}$  numerator regarding  $ {\textcolor{blue} {\neg R}} $
and $ {\textcolor{red} R} $. Thus
 ${\mbox{\em LUR}}$ can be applied   and so $ \phi $ be replaced,
after simplifications,  with
$ (\vee \ \ \neg P   \ \ \{\wedge \ \   \neg P   \ \ (\vee \ \ \phi_2 \ \ \{\wedge \ \ \neg Q  \ \ {\textcolor{black} {\neg P  }} \}  ) \ \ R \} )$ in the   formula $ \varphi $. 
%yielding  the solution {\bf F}. 
In this specific example, only one literal is removed, but in a general case,
big sub-formulas may be eliminated. \qed
\end{example}

The next example is devoted to the rule of NC hyper unit-resolution, or {\em HUR.}
 
\begin{example} \label{ex:HUR}   Let us reconsider also the formula in previous Example \ref{ex:FormulaPropo} (recall that the $i$ superscript in $ {\textcolor{blue} {\neg {P}^i}} $ denotes an specific literal occurrence
of $ {\textcolor{blue} {\neg {P}}} $):
$$ \langle \  \{\wedge \ \ (\vee \ \ R \ \phi_1) \ \ 
(\vee \ \ {\textcolor{blue} {\neg {P}^1}} \ \ \{\wedge \  \   
(\vee   \ \   {\textcolor{blue} {\neg {P}^2}} \  \ \neg {R} \,) 
\quad (\vee \ \ \phi_2 \ \   \{\wedge \ \ \neg {Q}  \ \ {\textcolor{blue} {\neg {P}^3}} \,\} \, ) \ \ R \, \} \, ) \ {\textcolor{red} P} \,\} : \alpha \ \rangle$$ 

\noindent One can apply NC Hyper Unit-Resolution with $ {\textcolor{red} P}  $ and the three literals ${\textcolor{blue} {\neg {P}^i}}$. The formula $ \Pi $ in the numerator
of {\em HUR} is   the same for the three literals, so it is noted $ \Pi^{1,2,3}$, but the
formulas $ (\vee \ \ \mathcal{C}(\neg {P}^i) \ \ \mathcal{D}(\neg {P}^i))$
 are different and are given below:

\vspace{.2cm}
-- $ \Pi^{1,2,3}= (\vee \ \ \neg {P} \ \ \{\wedge \  \   
(\vee   \ \   \neg {P} \  \ \neg {R} \,) 
\quad (\vee \ \ \phi_2 \ \   \{\wedge \ \ \neg {Q}  \ \ {\textcolor{black} {\neg {P}}} \,\} \, ) \ \ R \, \} \, )$

\vspace{.15cm}
-- $ (\vee \ \ \mathcal{C}(\neg {P}^1) \ \ \mathcal{D}(\neg {P}^1))=\Pi^{1,2,3}$

\vspace{.15cm}
-- $ (\vee \ \ \mathcal{C}(\neg {P}^2) \ \ \mathcal{D}(\neg {P}^2))= (\vee   \ \   {\textcolor{blue} {\neg {P}^2}} \  \ \neg {R} \,) $

\vspace{.15cm}
-- $ (\vee \ \ \mathcal{C}(\neg {P}^3) \ \ \mathcal{D}(\neg {P}^3))= (\vee \ \ \phi_2 \ \   \{\wedge \ \ \neg {Q}  \ \ {\textcolor{blue} {\neg {P}^3}} \,\} \, )$

\vspace{.25cm}
By applying NC Hyper Unit-Resolution, one gets:
$$ \langle \  \{\wedge \ \ (\vee \ \ R \ \phi_1) \ \ 
(\vee \ \   \{\wedge \  \   
(\vee   \ \     \neg {R} \,) 
\quad (\vee \ \ \phi_2     \, ) \ \ R \, \} \, ) \ {\textcolor{red} P} \,\} : \alpha \ \rangle$$

After simplifying:
$$ \langle \  \{\wedge \ \ (\vee \ \ R \ \phi_1) \ \ 
       \neg {R} \ \  \phi_2    \ \ R   \,\ {\textcolor{red} P} \,\} : \alpha \ \rangle$$

 \vspace{.15cm} 
Clearly,   a simple NC unit-resolution deduces $ \langle (\vee) : \alpha \rangle $.
Altogether, in this particular example, the rule {\em HUR} accelerates considerably
the proof of  inconsistency.
\end{example}

\section{Correctness of $\mathcal{UR}_{\Sigma}$ and Polynomiality of $ \mathcal{\overline{H}}_\Sigma$} \label{sect:ProofsProperties}

This section provides the proofs of both, the correctness of $\mathcal{UR}_{\Sigma}$
to determine the  inconsistency degree of $ \mathcal{\overline{H}}_\Sigma$
 and of the tractability
of   $ \mathcal{\overline{H}}_\Sigma$ (i.e. of Horn-NC-INC, Definition \ref{def:complexitiesofclassesofbases}).
Some proofs, as those of soundness of the simplification rules, are simple and intuitive
and so are omitted.
The following formal proofs are provided:

\vspace{.15cm}
-- soundness  of   quasi-clausal NC unit-resolution (Rule (\ref{eq:with-conjunct}));

\vspace{.05cm}
--  soundness of   nested  NC unit-resolution  {\em UR$_{\Sigma}$}; 

\vspace{.05cm}
-- correctness of the propositional rules  of    $\mathcal{UR}_{\Sigma}$; 
 
 \vspace{.05cm} 
  -- correctness  of the complete  {\em $\mathcal{UR}_{\Sigma}$};

\vspace{.05cm}  
  -- correctness of the algorithm {\bf Find}; and
 
 \vspace{.05cm} 
   --  polynomial complexity    of $ \mathcal{\overline{H}}_\Sigma$. 

%\vspace{.2cm}
%The last claim means that the problem Horn-NC-INC  (Definition \ref{def:complexitiesofclassesofbases})   is polynomial.

\begin{proposition}  Rule  (\ref{eq:with-conjunct}) is sound:
$$\langle{\textcolor{black} {\ell}} \, : \, \alpha \rangle  \, {\textcolor{black}\wedge} 
 \, \langle \,({\textcolor{black} {\vee}} \ \, \mathcal{C}({\textcolor{black} {\neg {\ell}}}) \ \, \mathcal{D}({\textcolor{black} {\neg {\ell}}}) )\, : \beta \rangle \  \models \  \langle \, \mathcal{D}({\textcolor{black} {\neg {\ell}}})  : \mathrm{\bf min}\{\alpha,\beta\} \,\rangle .$$
\end{proposition}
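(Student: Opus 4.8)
The plan is to reduce the claim to a routine application of Min-Decomposability once the key equivalence $\ell \wedge \mathcal{C}(\neg\ell) \equiv \bot$ has been established. First I would record the defining property of $\mathcal{C}(\neg\ell)$: by construction it is the \emph{maximal} sub-formula conjunctively linked to $\neg\ell$, which means precisely that $\mathcal{C}(\neg\ell) \equiv \neg\ell \wedge \psi$ for some formula $\psi$. Consequently $\ell \wedge \mathcal{C}(\neg\ell) \equiv \ell \wedge \neg\ell \wedge \psi \equiv \bot$, since a literal and its negation are jointly unsatisfiable (Definition \ref{def:interpretation}).

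Next I would use this to simplify the conjunction occurring in the numerator. Writing the disjunction as $\mathcal{C}(\neg\ell) \vee \mathcal{D}(\neg\ell)$ and distributing $\wedge$ over $\vee$,
$$\ell \wedge (\mathcal{C}(\neg\ell) \vee \mathcal{D}(\neg\ell)) \equiv (\ell \wedge \mathcal{C}(\neg\ell)) \vee (\ell \wedge \mathcal{D}(\neg\ell)) \equiv \bot \vee (\ell \wedge \mathcal{D}(\neg\ell)) \equiv \ell \wedge \mathcal{D}(\neg\ell).$$
This is the algebraic fact underlying the rule: resolving $\ell$ against the disjunct containing $\neg\ell$ annihilates $\mathcal{C}(\neg\ell)$ while leaving $\mathcal{D}(\neg\ell)$ intact.

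For the necessity bound I would argue semantically, interpreting each possibilistic formula as its constraint on $N$. The two premises assert $N(\ell) \geq \alpha$ and $N(\mathcal{C}(\neg\ell) \vee \mathcal{D}(\neg\ell)) \geq \beta$. Using Min-Decomposability together with the invariance of $N$ under logical equivalence (both recalled in Section \ref{sec:refresher}), the previous step gives
$$N(\ell \wedge \mathcal{D}(\neg\ell)) = N(\ell \wedge (\mathcal{C}(\neg\ell) \vee \mathcal{D}(\neg\ell))) = \mathrm{min}\{N(\ell),\, N(\mathcal{C}(\neg\ell) \vee \mathcal{D}(\neg\ell))\} \geq \mathrm{min}\{\alpha, \beta\}.$$
Since $\ell \wedge \mathcal{D}(\neg\ell) \models \mathcal{D}(\neg\ell)$, the monotonicity property $N(\varphi) \leq N(\psi)$ whenever $\varphi \models \psi$ then yields $N(\mathcal{D}(\neg\ell)) \geq N(\ell \wedge \mathcal{D}(\neg\ell)) \geq \mathrm{min}\{\alpha, \beta\}$, which is exactly the constraint encoded by $\langle \mathcal{D}(\neg\ell) : \mathrm{min}\{\alpha,\beta\} \rangle$, establishing the entailment.

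The only genuinely delicate point is the first step; everything downstream is mechanical. I expect the main obstacle to be justifying rigorously that $\mathcal{C}(\neg\ell)$ really is equivalent to a conjunction $\neg\ell \wedge \psi$, i.e. that the informal notion of being ``conjunctively linked'' to $\neg\ell$ is captured exactly by this equivalence. I would discharge this by a structural induction on the chain of nested conjunctions that witnesses the maximality of $\mathcal{C}(\neg\ell)$, peeling off one $\{\wedge \ldots\}$ layer at a time and collecting the sibling conjuncts into $\psi$.
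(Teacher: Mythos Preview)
Your proof is correct and follows essentially the same route as the paper: apply Min-Decomposability to combine the two premises, distribute $\ell$ over the disjunction, use $\ell \wedge \mathcal{C}(\neg\ell) \equiv \bot$ to kill one disjunct, and then extract $\mathcal{D}(\neg\ell)$ (the paper does this last step via \emph{MinD} in the decomposition direction, you via monotonicity of $N$; these are interchangeable). One remark: the ``delicate point'' you flag is not actually something you need to prove---the equivalence $\mathcal{C}(\neg\ell) \equiv \neg\ell \wedge \psi$ is part of the \emph{definition} of $\mathcal{C}(\neg\ell)$ given in the text just before Rule~(\ref{eq:with-conjunct}), so no structural induction is required.
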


\begin{niceproof}
Denoting $ \mathcal{F}=\langle{\textcolor{black} {\ell}} \, : \, \alpha 
\rangle  \, {\textcolor{black}\wedge} 
 \, \langle \,({\textcolor{black} {\vee}} \ \, \mathcal{C}({\textcolor{black} {\neg {\ell}}}) \ \, \mathcal{D}({\textcolor{black} {\neg {\ell}}}) )\, : \beta \rangle$, we have:

\vspace{.2cm} 
- By {\em MinD}, \ $\mathcal{F} \models \ \langle \, ({\vee} \ \ {\ell} \wedge \mathcal{C}(  {\neg {\ell}}) \ \ 
{\ell} \wedge \mathcal{D}(  {\neg {\ell}})) : \mathrm{\bf min}\{\alpha,\beta\} \,\rangle $.

\vspace{.2cm}
- Since  $ {\ell}\wedge\mathcal{C}(  {\neg {\ell}} \,) \equiv \bot $ then: 
$  \ \mathcal{F} \models \  \langle\,  {\ell} \wedge \mathcal{D}(  {\neg {\ell}} \,)  
 : \mathrm{\bf min}\{\alpha,\beta\} \,\rangle $.
 
\vspace{.2cm}
- By {\em MinD}, \   $  \mathcal{F} \models \  \langle\,  \mathcal{D}(  {\neg {\ell}} \,)  
 : \mathrm{\bf min}\{\alpha,\beta\} \,\rangle $.

\end{niceproof}

\begin{proposition}  \label{prop:soundnessgeneral} Te rule  UR$_\Sigma  $  is sound:
$$\langle{\textcolor{black} {\ell}} \, : \, \alpha \rangle  \, 
{\textcolor{black}\wedge} 
 \, \langle \,\Pi \succ ({\textcolor{black} {\vee}} \ \, \mathcal{C}({\textcolor{black} {\neg {\ell}}}) \ \, \mathcal{D}({\textcolor{black} {\neg {\ell}}}) )\, : \beta \rangle \  \models \  \langle \, \mathcal{D}({\textcolor{black} {\neg {\ell}}})  : \mathrm{\bf min}\{\alpha,\beta\} \,\rangle .$$
\end{proposition}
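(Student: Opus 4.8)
The plan is to lift the three-line argument that established the flat rule (\ref{eq:with-conjunct}) to an arbitrarily deep occurrence of the distinguished disjunction inside $\Pi$. I would argue by induction on the nesting depth $k$ of $(\vee\ \mathcal{C}(\neg\ell)\ \mathcal{D}(\neg\ell))$ within $\Pi$, i.e. on the number of connective layers $\odot_1,\ldots,\odot_k$ appearing in (\ref{eq:formulaexpresiongeneral}). The base case $k=0$, where $\Pi$ is the disjunction itself, is precisely the preceding Proposition for Rule (\ref{eq:with-conjunct}); thus the whole content of the proof lies in transporting that conclusion outward, one connective layer at a time, until the distinguished sub-formula is reduced to $\mathcal{D}(\neg\ell)$.

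The semantic engine of the argument is the defining property of $\mathcal{C}(\neg\ell)$. Since it is the \emph{maximal} sub-formula conjunctively linked to $\neg\ell$, it is equivalent to $\neg\ell\wedge\psi$ for some $\psi$, so that $\ell\wedge\mathcal{C}(\neg\ell)\equiv\bot$. Hence, by Definition \ref{def:interpretation}, every interpretation $\omega$ with $\omega\models\ell$ satisfies $\omega(\mathcal{C}(\neg\ell))=0$ and therefore $\omega\bigl((\vee\ \mathcal{C}(\neg\ell)\ \mathcal{D}(\neg\ell))\bigr)=\omega(\mathcal{D}(\neg\ell))$: relative to the hypothesis $\ell$, the disjunction and $\mathcal{D}(\neg\ell)$ are interchangeable.

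From here I would mirror the flat proof. By \emph{MinD} the premise first entails the single necessity-weighted conjunction of $\ell$ with $\Pi$ at level $\min\{\alpha,\beta\}$; because $N$ is invariant under logical equivalence and monotone under $\models$ (Section \ref{sec:refresher}), the interchange established above — propagated from the sub-formula up to the outermost connective using the congruence of $\wedge$ and $\vee$ under replacement of equivalents — reduces the occurrence of the disjunction to $\mathcal{D}(\neg\ell)$, and a final application of \emph{MinD} then delivers $\langle\mathcal{D}(\neg\ell):\min\{\alpha,\beta\}\rangle$. A fully equivalent route is to pass first to classical logic through Theorem \ref{theo:soundn+complete}, observing that the $\min\{\alpha,\beta\}$-cut of the premise contains both $\ell$ and $\Pi$, and then carry out the collapse purely propositionally.

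I expect the main obstacle to be the inductive step across layers $\odot_j$ that are disjunctions rather than conjunctions: I must verify that the rewriting ``$(\vee\ \mathcal{C}(\neg\ell)\ \mathcal{D}(\neg\ell))$ becomes $\mathcal{D}(\neg\ell)$ under $\ell$'' commutes with each surrounding $\wedge$ and $\vee$. This is exactly where the monotonicity and congruence of the NC connectives, together with the maximality of $\mathcal{C}(\neg\ell)$ (which guarantees that no part of $\mathcal{D}(\neg\ell)$ is itself forced false by $\ell$), do the essential work; discharging this uniformly for every connective layer, and not only for the conjunctive ones, is the delicate point that the structural induction is designed to handle.
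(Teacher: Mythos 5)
Your proof is correct and follows essentially the same route as the paper's own argument: \emph{MinD} to merge the two weighted premises, the key fact $\ell \wedge \mathcal{C}(\neg\ell) \equiv \bot$ to collapse the distinguished disjunction to $\mathcal{D}(\neg\ell)$ under the hypothesis $\ell$, propagation of this equivalence through the surrounding NNF context (which the paper performs implicitly in its two ``Then'' steps rather than by your explicit induction --- and since interpretations evaluate compositionally via $\min/\max$, the replacement commutes with every layer, disjunctive or conjunctive alike, so the step you flag as delicate is in fact automatic), and a final \emph{MinD} step to discharge $\ell$. One small correction: the \emph{maximality} of $\mathcal{C}(\neg\ell)$ plays no role in soundness --- the only property used is $\ell \wedge \mathcal{C}(\neg\ell) \equiv \bot$, which holds for any sub-formula equivalent to $\neg\ell \wedge \psi$, and whether parts of $\mathcal{D}(\neg\ell)$ are also falsified by $\ell$ is irrelevant to the entailment.
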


\begin{niceproof} By denoting 
$ \mathcal{F}=\langle{\textcolor{black} {\ell}} \, : \, \alpha 
\rangle  \, {\textcolor{black}\wedge} 
 \, \langle \,\Pi \succ ({\textcolor{black} {\vee}} \ \, \mathcal{C}({\textcolor{black} {\neg {\ell}}}) \ \, \mathcal{D}({\textcolor{black} {\neg {\ell}}}) )\, : \beta \rangle$, we have:
 
 \begin{enumerate}[$-$]
 \item By {\em MinD}, \   
  $ \mathcal{F} \models  
 \langle \ell \wedge \Pi \succ  ({\textcolor{black} {\vee}} \ \, \mathcal{C}({\textcolor{black} {\neg {\ell}}}) \ \, \mathcal{D}({\textcolor{black} {\neg {\ell}}}) )\, 
 :\mathrm{\bf min}\{\alpha,\beta\} \rangle$. 
 
\item  Then, \ $   \mathcal{F} \models  
 \langle \ell \wedge \Pi \succ  ({\textcolor{black} {\vee}} \ \, 
 \ell \wedge \mathcal{C}({\textcolor{black} {\neg {\ell}}}) \ \, 
 \ell \wedge \mathcal{D}({\textcolor{black} {\neg {\ell}}}) )\, 
 :\mathrm{\bf min}\{\alpha,\beta\} \rangle $.

\item Since  $ {\ell}\wedge\mathcal{C}(  {\neg {\ell}} \,) \models \bot $ then, 
\ $  \ \mathcal{F} \models \langle  \ell \wedge \Pi \succ   {\ell}\wedge\mathcal{D}(  {\neg {\ell}} \,)  
 : \mathrm{\bf min}\{\alpha,\beta\} \,\rangle $.
 
\item Then, \ $ \mathcal{F} \models \langle  \ell \wedge \Pi \succ    \mathcal{D}(  {\neg {\ell}} \,)  
 : \mathrm{\bf min}\{\alpha,\beta\} \,\rangle $.
 
\item By {\em MinD}, \  $ \mathcal{F} \models  \langle  \Pi \succ    \mathcal{D}(  {\neg {\ell}} \,)  
 : \mathrm{\bf min}\{\alpha,\beta\} \,\rangle $.
 \end{enumerate}
 \vspace{-.7cm}
\end{niceproof}

The next lemma states the correctness of the propositional rules of $ \mathcal{UR}_\Sigma $.

\begin{lemma} \label{lem:propo-complet} Let
  $ \mathcal{UR}_P $ be the subset 
 $\{\mbox{UR}_P, {\bf \bot} \vee, {\bf \bot} \wedge, \odot \phi, \odot \odot \} 
 \subset \mathcal{UR}_\Sigma$. 
   A propositional Horn-NC formula   $ \varphi $ is inconsistent iff  $ \mathcal{UR}_P $ 
    with input $ \langle\varphi : \alpha\rangle$ derives some
    $ \langle \bot : \alpha\rangle$.
\end{lemma}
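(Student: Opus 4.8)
The statement is an equivalence, so I would split it into the soundness direction ($\Leftarrow$: deriving $\langle\bot:\alpha\rangle$ forces inconsistency) and the completeness direction ($\Rightarrow$: inconsistency forces a derivation of $\langle\bot:\alpha\rangle$). The soundness direction is routine: the necessity weight $\alpha$ is never modified by any rule in $\mathcal{UR}_P$, so it suffices to check that every rule preserves logical equivalence of the underlying propositional formula. For the simplification rules ${\bf\bot}\vee$, ${\bf\bot}\wedge$, $\odot\phi$, $\odot\odot$ this is immediate from the semantics of $\vee/\wedge$ in Definition \ref{def:interpretation}. For \emph{UR$_P$} it is the propositional instance of the argument already given in Proposition \ref{prop:soundnessgeneral}: since $\ell$ is conjunctively linked to $\mathcal{C}(\neg\ell)$ one has $\ell\wedge\mathcal{C}(\neg\ell)\equiv\bot$, whence replacing $(\vee\ \mathcal{C}(\neg\ell)\ \mathcal{D}(\neg\ell))$ by $\mathcal{D}(\neg\ell)$ within the scope of $\ell$ is equivalence-preserving. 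Consequently any formula derived from $\varphi$ stays equivalent to $\varphi$, so if $\langle\bot:\alpha\rangle$ is obtained then $\varphi\equiv\bot$, i.e. $\varphi$ is inconsistent.

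For completeness I would argue by a saturation-plus-model construction. First, I would observe that $\mathcal{UR}_P$ is terminating: each simplification strictly reduces the number of symbols, and each \emph{UR$_P$} step deletes the nonempty subformula $\mathcal{C}(\neg\ell)$, so the formula size strictly decreases. Hence from $\langle\varphi:\alpha\rangle$ one reaches, in finitely many steps, a $\mathcal{UR}_P$-normal form $\langle\varphi^\ast:\alpha\rangle$ to which no rule applies; by the soundness direction $\varphi^\ast\equiv\varphi$, and $\varphi^\ast$ is still Horn-NC, since no rule creates a new non-negative disjunct. The key lemma I would then prove is: \emph{a constant-free Horn-NC formula in $\mathcal{UR}_P$-normal form is either $\bot$ or consistent.} Granting this, completeness follows at once: if $\varphi$ is inconsistent then so is $\varphi^\ast$, which by the key lemma must be $\bot$, i.e. $\langle\bot:\alpha\rangle$ is derived.

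To prove the key lemma I would first note, using Lemma \ref{def:disjunHNC} and Definition \ref{def:negative}, that the all-false interpretation satisfies every negative formula; hence a top-level Horn-NC \emph{disjunction} in normal form (which has $k\geq2$ disjuncts, singletons having been collapsed by $\odot\phi$) contains a negative disjunct and is therefore satisfiable. So an inconsistent normal form, other than $\bot$ itself, must be a \emph{conjunction} $\{\wedge\ \varphi_1\ldots\varphi_k\}$, already flattened by $\odot\odot$ and $\bot$-free by ${\bf\bot}\wedge$. I would then read off a candidate model by forward chaining: set to true exactly the positive unit literals occurring as conjuncts (the ``facts'') together with all further positive units that become conjuncts, and set everything else false. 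In a normal form this fact set is already saturated, for any positive unit $\ell$ with an available conjunctively-linked occurrence $\mathcal{C}(\neg\ell)$ would trigger \emph{UR$_P$}, contradicting normality. The remaining work --- the main obstacle --- is to verify, by structural induction on the nested subformulas, that this interpretation satisfies each conjunct: every Horn-NC disjunction encountered either has its unique non-negative disjunct made true by a propagated fact, or else has all disjuncts negative and is satisfied by the default all-false assignment on the unforced variables. An alternative, more indirect route would be to invoke Theorem \ref{the:HNCtoHorn} to pass to an equivalent Horn clausal $H$, apply the classical completeness of Horn unit-resolution to $H$, and simulate each clausal step by a \emph{UR$_P$} step on $\varphi$; I expect the direct saturation argument to be cleaner, since the simulation must contend both with the exponential gap between $\varphi$ and $H$ and with matching the $\mathcal{C}/\mathcal{D}$ decompositions to clausal resolvents.
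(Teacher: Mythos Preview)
Your overall architecture matches the paper's: both directions rest on showing that a Horn-NC formula on which no rule of $\mathcal{UR}_P$ fires is either $\bot$ or satisfiable, and both exploit that after simplification every remaining disjunction has at least one negative disjunct. The paper packages this as an induction on the number of literal occurrences (each $\mbox{UR}_P$ step removes at least one), whereas you run to a saturated normal form and then build a model; these are the same argument phrased two ways. Your model is actually more careful than the paper's: the paper simply asserts that the all-zero interpretation works, glossing over the positive unit conjuncts, while your ``facts true, rest false'' assignment handles them explicitly (and, as you correctly note, is already saturated in normal form, so no genuine chaining happens). However, your verification step is aimed at the wrong disjunct. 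You do \emph{not} need the unique non-negative disjunct to be made true by a propagated fact; what makes each disjunction true is its \emph{negative} disjunct: every literal $\neg X$ inside it is true because $X$ is not one of your facts---if $X$ were a top-level positive unit conjunct while $\neg X$ sits inside a disjunction under the same top conjunction, $\mbox{UR}_P$ would apply, contradicting normality. That single observation replaces the structural induction you flag as the main obstacle, and it is exactly the point the paper is gesturing at with its ``assign value~$0$'' remark.
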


\begin{niceproof} We analyze below both directions of the lemma.

\vspace{.1cm}
$\bullet \ \Rightarrow $ Let us assume that $\varphi$ is inconsistent. 
Then $\varphi$ must have
a sub-formula verifying the {\em $\mbox{UR}_P  $}  numerator; otherwise, all 
complementary pairs of literals 
$\ell$ and  $\neg \ell$   are 
included in disjunctions. In this case, since all disjunctions of $ \varphi $, 
by definition of Horn-NC formula, have
at least one negative literal, $ \varphi $ would be satisfied by assigning to all propositions
 the value $0$,
which contradicts the initial hypothesis. 
Therefore, {\em $\mbox{UR}_P  $} is applied to $ \varphi $ with 
two complementary  literals $\ell$ and  $\neg \ell$ and the resulting formula 
is simplified. The new  formula is equivalent to $ \varphi $ 
and has at least one literal less than $ \varphi $. Hence, by induction on the number of literals
of $ \varphi $, we  obtain that  $ \mathcal{UR}_P$   ends only
 when $ \langle (\vee ) : \alpha \rangle$ is derived.

 $ \bullet \Leftarrow$   Let us assume that   
 {\em $ \mathcal{UR}_P $}  has been iteratively applied until
 a formula $ \varphi' $ different from $ \langle (\vee ) : \alpha \rangle$ is obtained. Clearly, 
 if the {\em $\mbox{UR}_P  $} numerator is not applicable then 
there is not a conjunction of a literal $\ell$ with  a disjunction 
 including   $\neg \ell$.
 Then we have, firstly, since {\em $ \mathcal{UR}_P $} is sound, that $ \varphi $ and   $ \varphi' $ are equivalent.
Secondly, if $\varphi'$ has   complementary literals, then they are integrated in
 disjunctions. Thus   $\varphi'$ is satisfied by assigning the value 0 to all its unassigned propositions, since,  by definition of Horn-NC formula, 
 all  disjunctions have at least one negative  disjunct. 
 Therefore, since $\varphi'$ is consistent so is  $\varphi$.
\end{niceproof}

The next lemma claims the correctness of $ \mathcal{UR}_\Sigma $.

\begin{lemma} \label{lem:correctnessURSigma}  Let    $ \Sigma $ be a 
possibilistic Horn-NC base.    
%$ \mathcal{UR}_\Sigma $ derives 
%some $ \langle (\vee) : \alpha \rangle $ iff $ \Sigma $ is inconsistent.
  $ \mathcal{UR}_\Sigma $ derives some empty clause
$ \langle (\vee) : \alpha \rangle $ iff $ \Sigma $ is inconsistent, and 
if  $ \mathcal{UR}_\Sigma $ derives 
$ \langle (\vee) : \alpha \rangle $ then $ \mbox{Inc}(\Sigma) \geq \alpha $.
\end{lemma}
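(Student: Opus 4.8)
The plan is to split the biconditional into its two directions and to treat the necessity-weight bound together with the ``only if'' direction, since both flow from soundness. Throughout I identify the empty clause $\langle(\vee):\alpha\rangle$ with $\langle\bot:\alpha\rangle$ (Definition \ref{def:interpretation}), and I write $\Sigma^\ast$ for the classical base underlying $\Sigma$.

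First I would establish soundness of the whole calculus: every rule of $\mathcal{UR}_\Sigma$ preserves possibilistic consequence. For $\mathrm{UR}_\Sigma$ this is Proposition \ref{prop:soundnessgeneral}; $\mathrm{UR}_P$ is merely $\mathrm{UR}_\Sigma$ fired inside a single propositional formula and so inherits its soundness; MinD and MaxN are sound by their very meaning (Definition \ref{def:inderences}); and the four simplification rules only replace sub-formulas by logically equivalent ones, so they preserve consequence as well. Consequently, whenever $\mathcal{UR}_\Sigma$ derives $\langle\bot:\alpha\rangle$ we have $\Sigma\models\langle\bot:\alpha\rangle$. Applying Proposition \ref{pro:iff-iff-iff} (equivalently the level-cut form of Theorem \ref{theo:soundn+complete}) with $\varphi=\bot$ yields $\Sigma_{\geq\alpha}\vdash\bot$, i.e. $\Sigma_{\geq\alpha}$ is inconsistent, so by the definition $\mathrm{Inc}(\Sigma)=\max\{\beta\mid\Sigma_{\geq\beta}\text{ inconsistent}\}$ we conclude $\mathrm{Inc}(\Sigma)\geq\alpha$. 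This is precisely the second assertion of the lemma; and since necessity weights lie in $(0,1]$ we have $\alpha>0$, whence $\mathrm{Inc}(\Sigma)>0$ and $\Sigma$ is inconsistent, giving the ``only if'' direction.

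For the ``if'' direction I would reduce base-level inconsistency to the already-proved propositional completeness. Suppose $\Sigma$ is inconsistent; then $\Sigma^\ast$ is an inconsistent set of propositional Horn-NC formulas, so the conjunction $\Phi=\{\wedge\ \varphi_1\ldots\varphi_m\}$ of all its members is inconsistent, and it is again Horn-NC by Lemma \ref{def:HNCconjunc}. Using MinD repeatedly inside $\mathcal{UR}_\Sigma$, I can derive $\langle\Phi:\gamma\rangle$ with $\gamma=\min_i\alpha_i>0$, the $\alpha_i$ being the strictly positive weights attached to the $\varphi_i$. Since $\mathcal{UR}_P\subseteq\mathcal{UR}_\Sigma$, Lemma \ref{lem:propo-complet} applies to the inconsistent Horn-NC formula $\langle\Phi:\gamma\rangle$ and produces a derivation of $\langle(\vee):\gamma\rangle$, which is exactly the required empty clause with $\gamma>0$.

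The step I expect to be the main obstacle is the weight bookkeeping that couples the purely propositional completeness result (Lemma \ref{lem:propo-complet}, which disregards weights) to the possibilistic $\mathrm{Inc}$ on the soundness side: one must verify that the $\langle\bot:\alpha\rangle$ produced genuinely certifies $\Sigma_{\geq\alpha}\models\bot$, which is what the level-cut formulation of Theorem \ref{theo:soundn+complete} and Proposition \ref{pro:iff-iff-iff} supply. I would stress that this lemma only claims the existence of \emph{some} empty clause of positive weight, not that its weight equals $\mathrm{Inc}(\Sigma)$; pinning down the exact inconsistency degree is deferred to the correctness proof of \textbf{Find}, where the strict $\alpha$-cuts are iterated.
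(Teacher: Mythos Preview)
Your proposal is correct and follows essentially the same approach as the paper: propositional completeness (Lemma~\ref{lem:propo-complet}) for the ``if'' direction and soundness (Proposition~\ref{prop:soundnessgeneral}) for the ``only if'' direction together with the $\mathrm{Inc}(\Sigma)\geq\alpha$ bound. Your version is slightly more explicit---you spell out the MinD-conjunction step to reduce to a single Horn-NC formula and invoke the level-cut characterization directly, whereas the paper argues via an inconsistent subset $\Sigma_1\subseteq\Sigma$ with $\mathrm{Inc}(\Sigma_1)\leq\mathrm{Inc}(\Sigma)$---but these are cosmetic differences in the same argument.
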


\begin{niceproof} The propositional component of possibilistic NC unit-resolution verifies
  Lemma \ref{lem:propo-complet}, and hence,
\ $ \mathcal{UR}_\Sigma $ derives an empty formula 
$ \langle (\vee) : \alpha \rangle$ iff the conjunction of the propositional
formulas in the base $\Sigma  $ is inconsistent, 
namely if  $\Sigma^*  $ is inconsistent.
%
%If $ \mathcal{UR}_\Sigma $  does not derive any $ \langle (\vee) : \alpha \rangle $ 
%then by   Lemma \ref{lem:propo-complet},  $ \Sigma $ is consistent. 
%
If $ \mathcal{UR}_\Sigma $  derives 
$ \langle (\vee) : \alpha \rangle$, then by Lemma \ref{lem:propo-complet},
$ \mathcal{UR}_\Sigma $ detects a subset $ \Sigma_1 \subseteq \Sigma $ which is indeed inconsistent.
Then by Proposition \ref{prop:soundnessgeneral},  
    the degree $ \alpha$  found by $ \mathcal{UR}_\Sigma $ corresponds to  
    $\mbox{Inc}(\Sigma_1) $. Since obviously $\mbox{Inc}(\Sigma_1) \leq \mbox{Inc}(\Sigma)$, the lemma holds. 
\end{niceproof}

The next lemma states the correctness of the algorithm {\bf Find}.

\begin{lemma} \label{lem:correctFind} If \,{\bf Find($ \Sigma, 0$)} returns $ \alpha $
then $ \mathrm{Inc}(\Sigma)=\alpha $.
\end{lemma}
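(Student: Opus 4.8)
The plan is to prove the claim by induction on the number of distinct necessity weights occurring in the base, exploiting the recursive structure of \textbf{Find} together with Lemma \ref{lem:correctnessURSigma}. The first ingredient I would isolate is a \emph{peeling identity}: whenever $\alpha\leq\mathrm{Inc}(\Sigma)$ and $\Sigma_{\geq\alpha}$ is inconsistent,
\[
\mathrm{Inc}(\Sigma)=\max\{\alpha,\ \mathrm{Inc}(\Sigma_{>\alpha})\}.
\]
To establish it I would start from the definition $\mathrm{Inc}(\Sigma)=\max\{\beta : \Sigma_{\geq\beta}\text{ is inconsistent}\}$ recalled in (\ref{eq:INCSigma}) and observe that for every $\beta>\alpha$ the cut $(\Sigma_{>\alpha})_{\geq\beta}$ coincides with $\Sigma_{\geq\beta}$. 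A short case split then yields the identity: if $\alpha=\mathrm{Inc}(\Sigma)$ every cut of $\Sigma_{>\alpha}$ is consistent, so $\mathrm{Inc}(\Sigma_{>\alpha})=0$ and the right-hand side is $\alpha$; if $\alpha<\mathrm{Inc}(\Sigma)$ the witnessing level $M=\mathrm{Inc}(\Sigma)>\alpha$ survives in $\Sigma_{>\alpha}$ and equals $\mathrm{Inc}(\Sigma_{>\alpha})$, so the right-hand side is $M$.

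Second, I would record a soundness invariant that lets me disregard the formulas \textbf{Find} appends to the base during step (1). By Proposition \ref{prop:soundnessgeneral} and the (routine) soundness of the remaining rules of $\mathcal{UR}_\Sigma$, every deduced $\langle\psi:\gamma\rangle$ satisfies $\Sigma_{\geq\gamma}\models\psi$. Hence, for each level $\delta$, the cut $(\Sigma\cup\mathrm{Ded})_{\geq\delta}$ is consistent iff $\Sigma_{\geq\delta}$ is, since every added $\psi$ with $\gamma\geq\delta$ is already entailed by $\Sigma_{\geq\delta}$. Enriching the base with derived formulas therefore changes neither the inconsistency degree nor any strict cut of it, so the strict $\alpha$-cut formed in step (2) of \textbf{Find} has the same inconsistency degree whether the deductions are kept or not.

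Third, I would run the induction on the number of distinct weights, proving the strengthened invariant $\textbf{Find}(\Gamma,c)=\max\{c,\mathrm{Inc}(\Gamma)\}$, of which the lemma is the case $c=0$. I would carry along the side condition that every weight in $\Gamma$ strictly exceeds $c$ (true initially since weights lie in $(0,1]$ and $c=0$, and preserved by step (2), which keeps only weights above the freshly found level). For the base case, if step (1) derives no empty clause then Lemma \ref{lem:correctnessURSigma} gives $\Gamma$ consistent, so $\mathrm{Inc}(\Gamma)=0$ and \textbf{Find} returns $c=\max\{c,0\}$. For the inductive step, step (1) derives some $\langle(\vee):\alpha\rangle$; because $\mathcal{UR}_\Sigma$ uses only formulas of $\Gamma$, whose weights exceed $c$, the weight $\alpha$ (a minimum of such weights) satisfies $\alpha>c$, and Lemma \ref{lem:correctnessURSigma} forces $\Gamma_{\geq\alpha}$ inconsistent with $\mathrm{Inc}(\Gamma)\geq\alpha$. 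The strict cut strictly decreases the number of distinct weights, guaranteeing termination and licensing the induction hypothesis on $\Gamma_{>\alpha}$ with accumulator $\alpha$. Combining $\textbf{Find}(\Gamma_{>\alpha},\alpha)=\max\{\alpha,\mathrm{Inc}(\Gamma_{>\alpha})\}$ with the peeling identity gives $\mathrm{Inc}(\Gamma)$, and since $\mathrm{Inc}(\Gamma)>c$ this equals $\max\{c,\mathrm{Inc}(\Gamma)\}$, as required.

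The main obstacle I anticipate is the peeling identity together with the bookkeeping around deduced formulas: one must verify that the strict cut is actually taken on the \emph{augmented} base yet still matches the strict cut of the original, and that the first level found by $\mathcal{UR}_\Sigma$ — which Lemma \ref{lem:correctnessURSigma} only bounds from above by $\mathrm{Inc}$ and need be neither minimal nor maximal — is nonetheless correctly absorbed by the $\max$ in the identity. The strictly-increasing-weights observation is precisely what reconciles the otherwise unordered search of $\mathcal{UR}_\Sigma$ with the cut-based recursion.
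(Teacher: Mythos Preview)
Your proof is correct and follows essentially the same invariant-based strategy as the paper: both establish that each recursive call preserves $\mathrm{Inc}(\Sigma)=\max\{\mathbf{Inc},\,\mathrm{Inc}(\Sigma')\}$ (your $\max\{c,\mathrm{Inc}(\Gamma)\}$), and conclude when $\mathcal{UR}_\Sigma$ reports the current base consistent. Your treatment is in fact more careful than the paper's on one point: you explicitly argue that the formulas $\mathcal{UR}_\Sigma$ appends to the base and that survive into the strict cut do not alter the inconsistency degree, whereas the paper's proof tacitly identifies the running base $\Sigma'$ with $\Sigma_{>\mathbf{Inc}}$ without addressing these appended deductions.
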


\begin{niceproof} We denote   $ \Sigma' $  and \mbox{\bf Inc}  
the  variables of {\bf Find} in a given recursion.
 Let us prove that the next hypothesis holds in every call to \mbox{\bf Find}: 
$$   \Sigma_{> \mbox{\footnotesize \bf Inc}}= \Sigma' 
 \mbox{\ \, and \ }  
 \mbox{Inc}(\Sigma)= \mbox{max}\{\mbox{Inc}(\Sigma'), \mbox{\bf Inc}\}     $$ 

-- We check that the initial call   {\bf Find($ \Sigma, 0$)}   verifies the hypothesis:

\newpage
\hspace{.5cm } - We have $ \Sigma'=\Sigma $ and \mbox{\bf Inc} = 0.  
 
 \vspace{.15cm}
\hspace{.5cm } - So indeed we have  $\Sigma_{> 0}=\Sigma'  $   and  $\mbox{Inc}(\Sigma)= \mbox{max}\{\mbox{Inc}(\Sigma'),0\}= \mbox{Inc}(\Sigma)   $

\vspace{.15cm}
-- We prove that if the hypothesis holds for $ k \geq 1 $ then  it holds
    for $ k + 1 $.

\vspace{.15cm}
-- First of all, $ \mathcal{UR}_\Sigma $ is applied  to $ \Sigma' $. 

\vspace{.15cm}
-- By Lemma \ref{lem:correctnessURSigma},
if $ \mathcal{UR}_\Sigma $   derives  $ \langle \bot : \alpha \rangle$ 
then $ \mbox{Inc}(\Sigma') \geq \alpha$, else $ \Sigma' $ is consistent.

\vspace{.15cm}
-- {\em Case $ \Sigma' $ is consistent:}  $\mbox{Inc}(\Sigma')=0 $. 

 \vspace{.15cm}
\hspace{.5cm} - By induction hypothesis $ \mbox{Inc}(\Sigma)= \mbox{max}\{\mbox{Inc}(\Sigma'), \mbox{\bf Inc}\}={\bf Inc}$

\vspace{.15cm}
\hspace{.5cm} - So {\bf Find}  correctly returns $ \mbox{Inc}(\Sigma) $
and   ends.

\vspace{.15cm}
-- {\em Case $ \Sigma' $ is inconsistent:}  $\mbox{Inc}(\Sigma')=\alpha > 0 $.

\vspace{.15cm}
\hspace{.5cm} - By induction hypothesis 
  $\Sigma' =\Sigma_{> \mbox{\footnotesize \bf Inc}}$,  and so $ \alpha > \mbox{\bf Inc} $.

  \vspace{.15cm}
\hspace{.5cm} - The next  $ \Sigma' $, noted $ \Sigma'' $,
  and   {\bf Inc}, noted {\bf Inc'}, are $ \Sigma''=\Sigma'_{>\alpha} $ and {\bf Inc'}=$ \alpha $.
   
   \vspace{.07cm}
   \hspace{.8cm} 
   We check in 
    $(i)$ and $(ii)$  that      the hypothesis holds:

\vspace{.15cm}
\hspace{.5cm}  $(i)$  By induction hypothesis:  $   \Sigma' =\Sigma_{> \mbox{\footnotesize \bf Inc}} $

\vspace{.15cm}
\hspace{1.5cm}  --  Since $ \alpha > \mbox{\bf Inc} $ then trivially $\Sigma'_{> \alpha}=\Sigma_{> \alpha}$

\vspace{.15cm}
\hspace{1.5cm}  --  Hence  
$\Sigma''=\Sigma'_{> \alpha} =\Sigma_{>\alpha}=\Sigma_{> \mbox{\footnotesize \bf Inc'}}  $.

\vspace{.15cm}
\hspace{.5cm}   $(ii)$ By Lemma  \ref{lem:correctnessURSigma}, 
$ \mbox{Inc}(\Sigma') \geq \alpha $
 
 \vspace{.15cm}
\hspace{1.5cm} --  Since $ \Sigma'=\Sigma_{> \mbox{\footnotesize \bf Inc}}$
and $ \alpha >  $ {\bf Inc} 
then $ \mbox{Inc}(\Sigma) \geq \alpha $

\vspace{.15cm}
\hspace{1.5cm} -- Hence  
$ \mbox{Inc}(\Sigma)=\mbox{max}\{\mbox{Inc}(\Sigma_{>\alpha}), \alpha \}=
\mbox{max}\{\mbox{Inc}(\Sigma''), \mbox{\bf Inc'} \}$

\vspace{.15cm} 
Altogether, the hypothesis holds until {\bf Find} finds $ \Sigma' $ consistent
and then correctly returns $ \mbox{Inc}(\Sigma)$. Hence Lemma \ref{lem:correctFind} holds.
\end{niceproof}

The next three propositions prove that finding $\mbox{Inc}(\Sigma)  $ of 
any Horn-NC  $ \Sigma $ is polynomial.

\begin{proposition} \label{propo:onesequence} If  $ \Sigma $ is a possibilistic Horn-NC base,
then $ \mathcal{UR}_\Sigma $ with  input $ \Sigma $ performs  at most $  n^2$ inferences,
 $  n$ being the number of symbols (size) of $ \Sigma $.
 % to derive some $ \langle \bot : \alpha \rangle $ or to
 %detect that  $ \Sigma $ is consistent
\end{proposition}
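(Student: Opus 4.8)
The plan is to exhibit a simple syntactic measure on the current base that controls the length of any $\mathcal{UR}_\Sigma$-derivation, and to split the count into the resolution inferences ($\mathrm{UR}_\Sigma$ and $\mathrm{UR}_P$) on the one hand, and the auxiliary inferences (the simplification rules $\bot\vee$, $\bot\wedge$, $\odot\phi$, $\odot\odot$ together with $\mathrm{MinD}$ and $\mathrm{MaxN}$) on the other. Throughout, let $n$ be the number of symbols of $\Sigma$ and let $L$ denote the number of literal occurrences in the current base, so that $L \le n$ at the outset.

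First I would establish the key invariant that no rule of $\mathcal{UR}_\Sigma$ ever increases $L$, whereas each resolution step strictly decreases it. Inspecting the rules: both $\mathrm{UR}_\Sigma$ and $\mathrm{UR}_P$ replace a sub-formula $(\vee\ \mathcal{C}(\neg\ell)\ \mathcal{D}(\neg\ell))$ by $\mathcal{D}(\neg\ell)$, thereby deleting all of $\mathcal{C}(\neg\ell)$, which by the Remark preceding these rules contains at least the occurrence $\neg\ell$ while $\mathcal{D}(\neg\ell)$ is a sub-formula of the original disjunction and so introduces nothing new; hence $L$ drops by at least one and nothing is added back. The rule $\mathrm{MinD}$ merely redistributes the conjuncts of a conjunction into separate base formulas without duplicating any literal, $\mathrm{MaxN}$ deletes a repeated formula, and the four simplification rules only erase symbols; so none of them raises $L$. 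Since $L$ starts at most $n$ and strictly decreases at every resolution step, the total number of resolution inferences over the whole run is at most $n$.

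Next I would bound the auxiliary inferences. Here the natural measure is the syntactic size of the base: each simplification rule and each $\mathrm{MaxN}$ deletes at least one symbol, while each $\mathrm{MinD}$ removes a conjunction connective together with its delimiters. Because the size is at most $n$ and is never increased by any rule, the normalization triggered by a single resolution step (cleaning up the rewritten formula and re-applying $\mathrm{MinD}$/$\mathrm{MaxN}$) consists of at most $n$ auxiliary inferences. Combining the two counts, the at most $n$ resolution steps each carry at most $n$ auxiliary steps, so $\mathcal{UR}_\Sigma$ performs at most $n^2$ inferences in total.

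The main obstacle, and the step deserving the most care, is the literal-count invariant for the resolution rules: I must verify that $\mathcal{D}(\neg\ell)$ introduces no literal absent from the numerator, so that resolution genuinely removes occurrences rather than relocating them, and that this persists under the nesting of $\mathrm{UR}_\Sigma$, where the rewriting takes place deep inside $\Pi$. A secondary subtlety is $\mathrm{MinD}$, which increases the number of base formulas even though it decreases size; I would handle this by charging each $\mathrm{MinD}$ to the $\wedge$-connective it consumes, whose count is itself bounded by $n$ and non-increasing, so the extra formulas it creates cannot inflate the auxiliary count beyond the size bound.
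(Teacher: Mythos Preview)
Your argument rests on the premise that each rule \emph{replaces} content in the current base, so that the global literal count $L$ and the total size are monotone non-increasing. That reading is not the one the paper's proof adopts: the examples consistently write $\Sigma \leftarrow \Sigma \cup \langle \ldots \rangle$, and the paper's own proof of this proposition opens with ``each rule of $\mathcal{UR}_\Sigma$ \emph{adds} a formula.'' Under that additive semantics, applying $\mathrm{UR}_\Sigma$ to $\langle \ell : \alpha\rangle$ and $\langle \Pi : \beta\rangle$ leaves both premises in $\Sigma$ and inserts a fresh $\langle \Pi' : \min(\alpha,\beta)\rangle$, so the global literal count \emph{increases} by the number of literal occurrences in $\Pi'$. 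Your key invariant therefore fails, and with it the bound of $n$ resolution steps.

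The paper's route avoids any decreasing potential. It argues instead that every derived pair $\langle \varphi : \gamma\rangle$ has $\varphi$ a sub-formula of some propositional component already in $\Sigma$, and $\gamma$ a weight already present in $\Sigma$ (min and max only recombine existing values). Hence the set of distinct possibilistic formulas that can ever be produced has cardinality at most $m \cdot k$, where $m$ is the number of sub-formulas and $k$ the number of weights in the input; since $m,k \le n$, at most $n^2$ inferences occur. Your measure-based idea would go through, and in fact give a sharper $O(n)$ bound, if the calculus were destructive; the gap is precisely the mismatch with the additive operational semantics the paper actually uses in its proof.
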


\begin{niceproof} On the one hand, each   rule of $ \mathcal{UR}_\Sigma $ adds a  
formula $\langle \varphi : \alpha \rangle$, where $ \varphi  $ is a sub-formula
of a  propositional formula $ \Pi $ of a possibilistic formula 
$\langle \Pi : \beta \rangle$ in the current base. Hence,
the current base always contains only sub-formulas from $\Sigma  $.
On the other hand,  the weight $ \alpha $ of added formulas $\langle \varphi : \alpha \rangle$ is the minimum  of two weights
  in the current base. Hence, the weights of 
formulas in the current base  always come  from $\Sigma  $. Thus, 
the maximum number of deduced formulas  is $ m \times k $,
where $ m $   is the number of   sub-formulas in $ \Sigma $
and  $ k $ is the number
of different weights in $ \Sigma $. Hence, the maximum number  of inferences 
performed by $ \mathcal{UR}_\Sigma $  is $ m \times k $. If $ n  $ is the size 
of $ \Sigma $, then $ m,k \leq n $ and so  the proposition holds.  
\end{niceproof}

\begin{proposition} \label{propo:inferencesconsistency} 
If $ \Sigma $ is  a possibilistic Horn-NC base, then  
{\bf Find}{\em ($\Sigma$,\,0)} performs at most $ n^2 $
recursive calls,    $  n$ being the number of symbols of $ \Sigma $.  
\end{proposition}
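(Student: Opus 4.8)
The plan is to bound the number of recursive calls by tracking the variable \textbf{Inc} and showing it can only take finitely many strictly increasing values, each drawn from the weights already present in $\Sigma$. First I would read off from the definition of \textbf{Find} that a recursive call is issued (in step (2)) only when $\mathcal{UR}_\Sigma$ has derived some $\langle \bot : \alpha \rangle$, whereupon the call sets $\textbf{Inc} \leftarrow \alpha$ and replaces the base by its strict $\alpha$-cut $\Sigma_{>\alpha}$. Since after this cut every formula remaining in the base carries a weight strictly greater than $\alpha$, and by the analysis of Proposition \ref{propo:onesequence} every subsequently deduced weight is a $\mathrm{min}$ or $\mathrm{max}$ of weights then present, every weight in any later base — in particular the degree of the next derived empty clause — is again strictly greater than $\alpha$. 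Hence the successive values taken by \textbf{Inc} form a strictly increasing sequence.

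The central step is to argue that each such value $\alpha$ lies in the finite set $W$ of weights occurring in the original $\Sigma$. This reuses the weight-propagation observation from Proposition \ref{propo:onesequence}: each rule of $\mathcal{UR}_\Sigma$ attaches to a newly deduced formula a weight obtained as a $\mathrm{min}$ or $\mathrm{max}$ of two weights already present, and neither operation can leave $W$. By induction over the inference sequence, every weight ever appearing belongs to $W$; in particular the degree $\alpha$ of any derived $\langle \bot : \alpha \rangle$ lies in $W$. Combined with the strict monotonicity above, the values assumed by \textbf{Inc} form a strictly increasing sequence inside the finite set $W$.

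Putting the two steps together, the number of recursive calls cannot exceed $|W|$, the number of distinct weights of $\Sigma$. As $|W|$ is at most the number $n$ of symbols of $\Sigma$, and trivially $n \le n^2$, the claimed bound holds; in fact this argument yields the sharper bound of $n$ recursive calls. I do not anticipate a real obstacle: the only point deserving care is keeping the weight invariant precise across the rules \emph{MinD} and \emph{MaxN}, since the former may split a conjunction into several formulas (all inheriting its weight) and the latter may merge two occurrences of a formula; but as both operations stay within the $\mathrm{min}/\mathrm{max}$-closed set $W$, the invariant is preserved. Note that the correctness of the value \textbf{Find} returns, as opposed to the count of its calls, is not needed here and is supplied separately by Lemma \ref{lem:correctFind}.
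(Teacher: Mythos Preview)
Your proof is correct but follows a genuinely different route from the paper. The paper argues that each recursive call cancels at least one formula from the current base (since the strict $\alpha$-cut removes, at minimum, a formula responsible for the weight $\alpha$ just derived), while the total number of formulas that can ever be present is bounded by $m \times k$, with $m$ the number of sub-formulas and $k$ the number of distinct weights in the original $\Sigma$; hence at most $m \times k \le n^2$ calls. You instead track the variable \textbf{Inc} directly, showing it ranges over a strictly increasing sequence of values all belonging to the finite weight set $W$ of the original base, which immediately caps the number of calls at $|W| \le n$.

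Your argument is tighter (it yields the linear bound $n$ rather than the quadratic $n^2$ actually stated) and arguably cleaner, since it avoids having to account for the total stock of deducible formulas and relies only on the closure of $W$ under $\min$ and $\max$, which you correctly justify across all the rules of $\mathcal{UR}_\Sigma$. The paper's approach, on the other hand, reuses the formula-counting machinery of Proposition~\ref{propo:onesequence} more literally and thereby matches the stated $n^2$ bound exactly. Both are valid; yours simply exploits a sharper invariant.
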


\begin{niceproof} {\bf Find} stops when it detects that the current base  is consistent.
If it is inconsistent then $ \langle   (\vee) : \alpha \rangle $ is deduced, and   
for future calls, {\bf Find} cancels 
the set  
$  \{\langle \phi : \beta \rangle \ \vert \ 
\langle \phi : \beta \rangle \in \Sigma, \beta \leq \alpha\}$.  This set
trivially    contains at least
one formula as $ \langle   (\vee) : \alpha \rangle $ has been derived. 
On the other hand and as discussed in the previous proof,   the maximum number
of inserted formulas in the   base is 
at most $ m \times k $,
where $ m $   is the number of   sub-formulas in $ \Sigma $
and  $ k $ is the number
of different weights in $ \Sigma $.
Thus, the number of performed recursive calls
is at most   $ m \times k $. Since $ m, k \leq n$, then
   Proposition \ref{propo:inferencesconsistency} holds.
\end{niceproof}

The next proposition states that the overall complexity to determine  
$\mbox{Inc}( \Sigma) $, $ \Sigma \in \mathcal{\overline{H}}_\Sigma $, is polynomial.

\begin{proposition} \label{propo:complexity} If $ \Sigma $ is a possibilistic Horn-NC base,
then computing $\mbox{Inc}( \Sigma) $ 
  takes polynomial time, i.e. the class $ \mathcal{\overline{H}}_\Sigma $ is polynomial.
\end{proposition}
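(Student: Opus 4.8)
The plan is to read the result directly off the correctness and complexity facts already established for the algorithm \textbf{Find}. By Lemma~\ref{lem:correctFind}, \textbf{Find}$(\Sigma,0)$ returns exactly $\mathrm{Inc}(\Sigma)$, so it suffices to bound its running time by a polynomial in the size $n$ of $\Sigma$. I would factor the total cost as (number of recursive calls) $\times$ (cost of one call), and the cost of one call as (number of fired inference rules) $\times$ (cost of applying one rule). Propositions~\ref{propo:inferencesconsistency} and~\ref{propo:onesequence} already supply the first two factors: \textbf{Find} issues at most $n^2$ recursive calls, and each invocation of $\mathcal{UR}_\Sigma$ fires at most $n^2$ inferences. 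Hence the whole run performs at most $n^4$ rule applications, and the proposition reduces to checking that a single rule application, together with its associated bookkeeping, costs polynomial time.

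Next I would verify that every rule in $\mathcal{UR}_\Sigma=\{\mbox{UR}_\Sigma,\mbox{UR}_P,\mbox{MinD},\mbox{MaxN},\bot\vee,\bot\wedge,\odot\phi,\odot\odot\}$ is applicable in time polynomial in $n$. The simplification rules $\bot\vee,\bot\wedge,\odot\phi,\odot\odot$ are purely local tree (or DAG) rewrites and cost linear time; \emph{MinD} splits a deduced conjunction into its conjuncts and \emph{MaxN} merges two copies of a formula under the larger weight, both linear. For \emph{UR$_\Sigma$} and \emph{UR$_P$} the only nontrivial step is locating, for a unit clause $\langle\ell:\alpha\rangle$, an occurrence of $\neg\ell$ together with its maximal conjunctively-linked sub-formula $\mathcal{C}(\neg\ell)$ and the complement $\mathcal{D}(\neg\ell)$. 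This amounts to scanning the tree of $\Pi$ for an occurrence of $\neg\ell$ and then walking upward through the chain of $\wedge$-ancestors until the first $\vee$-node is reached, which delimits $\mathcal{C}(\neg\ell)$; the replacement of $\mathcal{C}(\neg\ell)$ by the empty disjunction $(\vee)$ and the weight update to $\mathrm{min}\{\alpha,\beta\}$ are then immediate. Each such search and edit visits every node a bounded number of times, so one application costs at most $O(n)$, or $O(n^2)$ if the formula is re-normalised after every step.

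The step I expect to be the main obstacle is precisely this per-inference cost, together with the guarantee that the maintained base never blows up. Two points must be nailed down: (i) computing $\mathcal{C}(\neg\ell)$ is well defined and polynomial even when sub-formulas are shared in a DAG, and (ii) the current base stays of polynomial size throughout. Point (ii) is the observation underlying Proposition~\ref{propo:onesequence}: every formula a rule adds is a sub-formula of some $\Pi$ carrying a weight already present in $\Sigma$, so the set of distinct formula/weight pairs that can ever appear is bounded by $m\times k\le n^2$; keeping this set duplicate-free via \emph{MaxN} keeps each intermediate base of size $O(n^2)$, which in turn keeps the per-step search polynomial. One must also note that before \textbf{Find} is first called the propositional rules are run on each $\langle\Pi:\alpha\rangle$ to detect inconsistent propositional formulas (Proposition~\ref{prop:input-propo-form}, Lemma~\ref{lem:propo-complet}), and by Lemma~\ref{lem:propo-complet} this terminates after polynomially many steps per formula. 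Combining everything, the total time is bounded by $n^4$ rule applications each of polynomial cost, which is polynomial in $n$; hence Horn-NC-INC is tractable and the class $\mathcal{\overline{H}}_\Sigma$ is polynomial.
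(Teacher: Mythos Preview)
Your proposal is correct and follows essentially the same approach as the paper: both factor the total cost as (number of recursive calls) $\times$ (inferences per call) $\times$ (cost per inference), invoke Propositions~\ref{propo:onesequence} and~\ref{propo:inferencesconsistency} for the first two factors, and then argue that each inference is polynomial. The only difference is that the paper dispatches the per-inference cost in a single sentence (``it is not hard to find a data structure so that each inference in $\mathcal{UR}_\Sigma$ can be polynomially performed''), whereas you actually work through each rule type and the bookkeeping needed to keep the base polynomially bounded; your version is more thorough but not a different route.
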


\begin{niceproof} On the one hand, by Propositions
\ref{propo:onesequence} and \ref{propo:inferencesconsistency}, 
 the number of performed:
  the hypothesis holds.
inferences in each recursive call and the number of recursive calls are both polynomial. Hence the total number of inferences is polynomially bounded. 
On the other hand,
it is not hard to find a data structure so that each inference 
 in $ \mathcal{UR}_\Sigma $ can be polynomially performed w.r.t the size of 
 $ \Sigma $. Therefore, the  
complexity to   determine $ \mbox{Inc}(\Sigma) $ is polynomial. 
\end{niceproof}

%Another aspect to analyze computationally
%is the cost of the backtracking and related operations. 
%Such operations consist basically in (i)  eliminating the sub-formulas that were
%deduced and that are not in the original base and since the number of deduced
%formula is at most $ n^2 $, removing them during backtracking has a 
%polynomial cost; and (ii) after a the last deduced clause $ \langle   (\vee) : \alpha \rangle $,
%one has to remove from the base the formulas $ \Sigma_{\leq \alpha}= \{\langle \phi : \beta \rangle \vert \beta \leq \alpha\}$, which of course is a polynomial task.

\noindent {\bf Remark.} Determining a tight polynomial degree of the worst-case 
complexity of  computing a Horn-NC base, 
is planed for future work (see Section \ref{sec:relatedwork }). It should be mentioned
 that the polynomial degree of computing their counterpart Horn clausal bases
 has not been specified either.

%\section{Possibilistic Non-Clausal Logic Programming} \label{sec:logicprogramming}

\section{Related and Future Work} \label{sec:relatedwork }

In this section we briefly discuss related work in a number of  
possibilistic logical contexts and succinctly propose objectives towards which 
our future work can be oriented. 

\vspace{.3cm}
$ \Box$ \underline{Discovering    polynomial NC classes.} 

In   propositional logic, the valuable contribution to clausal efficiency of the conjunction
of Horn formulas and Horn-SAT algorithms  is reflected by the fact
that the highly efficient DPLL solvers embed a Horn-SAT-like algorithm, so-called Unit
Propagation. 
%These algorithms have been greatly optimized to the point that, the
%one devised for propositional logic is even strictly linear \cite{DowlingGallier84}. 
Hence, searching for polynomial (clausal) super-classes of the Horn class in propositional logic
has been a key issue for several decades towards improving clausal reasoning and has led 
 to a great number of such classes being currently  known: hidden-Horn, generalized Horn, Q-Horn, extended-Horn,    etc.  (see \cite{FrancoMartin09,Imaz2021horn} for short reviews).
So it is arguable that, just as the tractable clausal fragment 
has helped to grow  overall clausal efficiency, likewise  widening the tractable non-clausal 
fragment  would grow  overall 
 non-clausal efficiency. Indeed, inspired by such polynomial non-clausal classes, 
 efficient non-clausal algorithms can be devised. Nevertheless, we stress that the current
tractable non-clausal fragment is almost empty, which signifies
a manifest disadvantage with respect to its clausal counterpart.
We will extrapolate the previous argumentation to possibilistic logic and determine
further NC subclasses whose    inconsistency-degree computing
   have polynomial complexity.
In our next work, 
we will search   for polynomial classes obtained by lifting the possibilistic
renameable  (or hidden) Horn formulas to the NC level. 
%For that, one can define
%a hidden Horn-NC formula as an NC formula whose clausal form is hidden Horn.

\vspace{.3cm}
$ \Box$ \underline{Designing low-degree   polynomial algorithms.}

As we have seen in Section \ref{sect:ProofsProperties}, the number of inferences
required by $ \mathcal{UR}_\Sigma $ and the number of recursive calls to
  {\bf Find} are both bounded by $ O(n^2) $, where $ n $ is the symbol number
of the input base. Also,     all inferences in
$ \mathcal{UR}_\Sigma $  can be reasonably performed in $ O(n) $. Altogether,
the complexity of determining $ \mbox{Inc}(\Sigma) $ of Horn-NC bases is in
  $ O(n^5) $. This complexity, though polynomial, is of course no satisfactory
for real-world applications where the size of formulas can be relatively
 huge. However   no much care has been taken 
 in the proofs of Section \ref{sect:ProofsProperties} because
   the   goal was proving tractability, and so,
   first of all,   a fine-grained analysis
of complexity is pending.  This analysis 
 will permit likely to precise a tighter polynomial-degree complexity.
Nevertheless, the work presented here should be resumed 
in order to notably decrease the polynomial-degree. Also, advances
in several directions are obliged  such as finding suitable 
data-structure and proposing   optimized algorithms. Finally, we mention that 
 this optimization effort
 has not been done so far in the clausal framework either.  
 In fact, no explicit upper bounds of complexity algorithms that compute
 possibilistic clausal bases are available.

\vspace{.3cm}
  $ \Box$ \underline{Combining necessity and possibility measures.}
  
 Our approach   considers only necessity-valued formulas
 but it can be extended to bases where formulas can have
 associated a possibility or a necessity measure.  Such logical
 context has already been studied by previous authors
 \cite{Hollunder95, Lang00,CouchariereLB08}. If a   formula
 is possibility-valued $ \langle \varphi : \Pi(\varphi) \geq \alpha\rangle$ it
 can be converted into the follow:
  the hypothesis holds.ing necessity-valued formula
 $ \langle \neg \varphi : N(\neg \varphi) \leq  1- \alpha\rangle $. 
 Hence, bases having both possibility
 and necessity formulas  can be converted into equivalent  necessity
 formulas, where  the threshold $ \alpha $ is additionally accompanied
  by the indication of  whether the formula must have a necessity level
   either greater or smaller than $ \alpha $. On the other side,
   resolution for possibility-necessity formulas 
   is well-known \cite{DuboisP94, DuboisP04a, DuboisP14},
   but   its extension
   to the NC level towards defining a possibility/necessity NC unit-resolution
   calculus seems not to be trivial and is an open question.

 \vspace{.3cm}
$ \Box$ \underline{Computing general NC bases:} 

Since
$ \mathcal{\overline{H}}_\Sigma $ is an NC sub-class,  computing arbitrary
NC bases is a natural continuation of the presented approach. As said in the  Introduction, 
real-world problems are generally expressed in NC form, and so
  clausal reasoning requires the usage of a previous NC-to-clausal transformation.
However,  such transformation is highly inadvisable  as it increases the
formula size and number of variables, and losses the logical equivalence
and the  formula's original structure. Besides the clausal form is not unique
and  how to guide the nondeterministic process
 towards a  ''good'' clausal formula is not known. So, real-world efficiency 
 is reached if      formulas  are computed in its original non-clausal form.
 Here we have favored   non-clausal reasoning  with  
 contributions which particularly facilitate deduction based on  
  resolution and DPLL. Brief discussions for future work 
 related to how compute possibilistic  general NC formulas 
 by means of resolution and DPLL can be found  below in the  items
   \mbox{``$ \Box$ Defining NC Resolution''} and  ``$ \Box$ Defining NC DPLL".

 \vspace{.3cm}
$ \Box$ \underline{Developing    NC logic programming.} 

Possibilistic logic programming has received notable attention, and in fact, after
the pioneer work in \cite{DuboisLP91}, 
  a number of   approaches e.g.  \cite{BenferhatDP93,AlsinetG00,AlsinetGS02,AlsinetCGS08} are available in the state-of-the-art.
However, all of them focus exclusively on the clausal form and so  are Horn-like
  programs where
the body of rules is a conjunction of propositions and the head is a single proposition.
Our future work in this research direction will be oriented 
  to show    that 
$\mathcal{\overline{H}}_\Sigma $ and $\mathcal{UR}_\Sigma$ 
 allow to handle non-clausal logic programs, and more concretely:  
 
\noindent $(1)$  To conceive  a  language for {\em Possibilistic 
 Non-Clausal Logic Programing}  denoted $ \mathcal{LP}_\Sigma $.     
  So, instead of Horn-like rules, in $ \mathcal{LP}_\Sigma $, 
  one may handle   Horn-NC-like rules   wherein  bodies 
  and heads of rules are   NC formulas with slight
  syntactical restrictions (issued  from $\mathcal{\overline{H}}_\Sigma $).  In fact, one
  can check that a rule with syntax 
  $\langle \Pi \rightarrow \mathrm{Hnc} : \alpha \rangle$, where $ \Pi $ is an NC formula
  with only positive literals and Hnc is a Horn-NC formula, 
  is indeed a possibilistic Horn-NC formula.
  Therefore a set of such kind of rules, i.e. a program, 
  is a  Horn-NC base.

\noindent   $(2)$ To   answer queries with an efficiency qualitatively comparable to 
   the clausal one, i.e. polynomial,
 which is possible thanks to the next two   features: (i)
 an $ \mathcal{LP}_\Sigma $ program  belongs to $\mathcal{\overline{H}}_\Sigma $
  and $\mathcal{\overline{H}}_\Sigma $ is a polynomial class;  and (ii)
the bases in   $\mathcal{\overline{H}}_\Sigma $ have only one minimal model, since, as aforementioned, they are  equivalent to a Horn clausal formula.

 \vspace{.3cm}
$ \Box$ \underline{Developing    NC answer set programming.}

After the works in possibilistic logic programming, a succession 
of works on possibilistic answer set programming has been carried out, 
started by  \cite{NicolasGSL06} and   continued with e.g.  
\cite{NievesOC07,NievesOC13,ConfalonieriNOV12,ConfalonieriP14,BautersSCV12,BautersSCV14}.
Although, like in possibilistic logic programming, they also
focus on the clausal form, there exists an exception as indicated in the Introduction.
Indeed,   NC possibilistic logic has been formerly  dealt with   
by the auhors in \cite{NievesL12,NievesL15}. However, in this work   no effectiveness issues
 are addressed. Instead, the authors extend, to possibilistic logic,
 proper concepts of non-clausal answer set programming 
 within classical logic as  originally defined
in   \cite{LifschitzTT99}; so  their aim  
 is   distinct from ours. Our
future work in this research direction will be oriented 
  to show  the scope of the expressiveness of the class 
  $ \mathcal{\overline {H}}_\Sigma $ for possibilistic NC answer set programming 
  and to analyze  the efficiency allowed by $ \mathcal{\overline {H}}_\Sigma $.
%   with respect to the method in \cite{NievesL12,NievesL15}.
Although answering queries in possibilistic answer set programming is an intractable problem
\cite{NicolasGSL06},
we do not rule out the possibility of finding  tractable sub-classes.

 \vspace{.3cm}
 $ \Box$ \underline{Partially  ordered   possibilistic logic.} 
 
 In this work we have assumed that possibility and necessity  measures, 
which rank interpretations and formulas,  were in 
the real unit-interval $[0, 1]$, i.e. 
 the available information is supposed to be totally ordered. 
 Thus for two different necessity values $ \alpha, \beta $, we have
 always that either $ \alpha > \beta $ or $ \alpha < \beta $. However, 
 considering   a more real-world context with only partial orders
 has already been studied \cite{BenferhatLP03,BenferhatLP04,CayrolDT14}. For instance,
 partial orders avoid comparing unrelated pieces of information,
 which happens when we merge multiple sources information and the merged 
 pieces do not have    a shared reference for uncertainty. So, considering a partial
 pre-order on interpretations by means of a partial pre-order on formulas
 is a more real-world  scenario to which the presented method is planned to be  
 generalized.

  \vspace{.3cm}
 $ \Box$ \underline{Casting  richer logics  into  possibilistic logic}. 
 
 In our 
 logical framework, propositional logic is casted into possibilistic logic.
 Casting richer logics into possibilistic logic has already been carried out 
 by several authors, for instance, Alsinet et al. \cite{AlsinetG00} cast G\"{o}del logic many-valued logic  in a possibilistic framework
 and several researchers \cite{Hollunder95,CouchariereLB08,QiJPD11}  do similarly  
 with description logics. See reference \cite{DuboisP04a} 
  to know other embeddings.   
 Since the   Horn-NC class has already been 
 defined in \cite{Imaz21b} for regular many-valued logic,   
 our next aim in this research line will be to embed regular-many-valued into possibilistic logic.
 Specifically, we will attempt  to define the class of regular-many-valued possibilistic  Horn-NC bases
 as well as to generalize other notions presented in this article 
 such as defining regular-many-valued possibilistic NC Unit-resolution. An open
 question is whether polinomiallity is preserved when two uncertainty logics are
 combined within the non-clausal level.

  \vspace{.3cm} 
 $ \Box$ \underline{Defining  NC resolution}. 
 
  The  formalization of the 
existing NC resolution \cite{Murray82} (see also \cite{BachmairG01}) that dates back to the 1980s    is somewhat  confusing.
Its functional-like definition has important weaknesses such as
not precisely identifying the available resolvents or requiring complex 
formal proofs \cite{HahnleMurrayRosenthal04} of its logical properties. 
Another symptom of its barriers
is that, contrary to our approach, it has not   led so far, to define either NC unit-resolution or NC hyper-resolution. Our definition of NC unit-resolution
is clausal-like because,   as stated in Section \ref{sec:Non-Clausal-Unit-resolution},  in presence of clausal formulas,
$ \mathcal{UR}_\Sigma $  coincides with clausal resolution \cite{DuboisP87, DuboisP90}.  We believe that 
 our kind of definition   is fairly well oriented
 to define (full) NC Resolution and to generalize it  
 to  some uncertainty logics, and similarly, to be analyzed and to formally prove
 its logical properties.

  \vspace{.3cm} 
 $ \Box$ \underline{Defining   NC  DPLL}. 
 
  DPLL for possibilistic
 clausal formulas have already been  studied \cite{DuboisP94, Lang00} but DPLL for 
 possibilistic non-clausal formulas has   received no attention. We argue below that 
 the present  article supposes an important step forward to 
  specify the scheme NC DPLL for possibilistic logic.
 The principle of DPLL relies on: (1) the procedure Unit-Propagation; 
 (2) a suitable heuristic
 to choice the literal $ \ell $ on which performs branching; and 
 (3) the determination of the formulas $ \Sigma \wedge \ell $ and $ \Sigma \wedge \neg \ell $,
 namely the formulas on which  DPLL should split the search. One can do
 the next observations:
 (1) NC Unit-Propagation is based on NC unit-resolution,
  and (2)  formulas $ \Sigma \wedge \ell $ and $ \Sigma \wedge \neg \ell $ are
  obtained with NC unit-resolution. Hence, 
  the unique aspect not studied here is that  of the  
  (2) heuristic to choice the branching literal.
  Thus, for future work, we will propose the DPLL schema for   
  possibilistic non-clausal reasoning after studying heuristics
  and taking a closer look to other aspects presented in this article.

  \vspace{.3cm} 
 $ \Box$ \underline{Generalized Possibilistic Logic}

A restriction in standard possibilistic logic    is that only 
conjunctions of weighted formulas are allowed. Dubois et al. 
\cite{DuboisPS12,DuboisPS17,DuboisP18} have conceived
the Generalized Possibilistic Logic (GPL) 
in which the disjunction and the
negation of standard possibilistic  formulas are handled. In this logic,
for instance the   formula 
$ (\vee \ \ \langle \varphi : \alpha \rangle \ \ \langle \psi : \beta \rangle \,) $ belongs to the GPL's language, and
since possibility-valued formulas are also smoothly embedded in GPL,
formulas like the next one belong to GPL:
$$ \{\wedge \ \ \langle (\vee \ P \ Q) : \mbox{N} \ {\bf 1} \rangle \ 
\langle   \neg Q : \mbox{N} \ {\bf .75} \rangle \ 
\neg \langle (\vee \ \neg P \ R) : \mbox{N} \ {\bf 1} \rangle
\ \ \ (\vee \   \langle \neg  R : \Pi \ {\bf .75} \rangle  \ 
\langle  \varphi : \Pi \ {\bf .25} \rangle 
\ ) \ \}$$
One can see that   in GPL, connectives can be internal  or external 
(GPL is a two-tired logic) with their 
corresponding and different semantics, and also, GPL formulas
can   be expressed in NC form. In fact, observe that the previous 
formula is non-clausal. In \cite{DuboisPS17} the authors
prove that the  satisfiability  problem   associated to GPL
 formulas is 
$ \mathcal{NP} $-complete. It is clear
that the standard possibilistic Horn formulas are encapsulated
in GPL and so are the Horn-NC formulas defined here.
 We think that in GPL, sub-classes of external Horn formulas 
 can also be defined,  as well as sub-classes that are 
 both internally and externally Horn. Thus GPL turns out to be an interesting
 logic in the sense that it embeds  
 a variety of classes of Horn-like formulas 
 which potentially could be lifted
 to the NC level.  So for future work, 
 we will study the different classes of Horn-NC GPL formulas that
 are definable and then attempt to prove their  complexity.
 It will be challenging to determine when polymiality is preserved
 in the NC level.
 Of course, once we have mastered the solution of 
GPL Horn-NC-like  formulas, the solving of unrestricted non-clausal GPL  formulas 
 can be envisaged  
 by proposing inference and solving mechanisms.

% \vspace{.3cm}
% $ \Box$ \underline{Bipolar possibilistic logic
% \cite{DuboisP04a} guarantic possibilistic function $ \nabla(p) = 1 - \Delta(p) $}
 
   \vspace{.3cm}
 $ \Box$ \underline{Finding Models or Inconsistency Subsets.} 
 
 Our method focuses on exclusively    determining $ \mbox{Inc}(\Sigma) $. However,
 one important issue for increasing theoretical and practical interest is the 
  obtaining of models or contradictory subsets of the knowledge.  
 In some frameworks as for instance, when the knowledge base is not  definitive and
   is in an experimentation phase, the only data of   $ \mbox{Inc}(\Sigma) $
 may be of not much help. For example, if one expects the knowledge base
 to be consistent and the consistency checker finds it 
 is inconsistent, knowing the knowledge subset 
  causing contradiction,  called "witness" in the literature,
 can be necessary.  Thus for future work, we will envisage deductive
 calculi oriented to providing witnesses as a return data. In this context,
 a more complicated problem is the determination  of whether a 
 knowledge base  has exactly one model or one inconsistent subset, since determining
 if a problem has a unique solution is computationally more expensive than testing 
 if has at least one
 solution  (see \cite{Papadimitriu94}, Chapter 17).

\section{Conclusions}

As  the encoding of practical problems is usually  expressed  in non-clausal form,
  restricting deductive systems to handle   clausal formulas obliges them to use 
  non-clausal-to-clausal transformations, which   are 
  very expensive in terms of: increase of  formula size and number of variables, and 
loss of  logical equivalence   and    original    formula's    structure. 
Further, the clausal form   is
not unique  and however no   insights are available  to guide  towards a ''good'' clausal form
 the non-clausal-to-clausal transformation.
These drawbacks deprive    clausal  reasoning systems to efficiently perform   
in   real-world applications. 
%even before the proper process of computing solutions is started.
%Therefore, transformation to clausal form
%  is indeed preferable to omit.

\vspace{.1cm}
To overcome such limitations  and   avoid
the costs induced by the normal form transformation, we process formulas in non-clausal form, 
concretely in negation normal form (NNF). This form allows an arbitrary 
nesting of conjunctions and disjunctions and only limits 
  the scope of the negation connective. NNF can be obtained deterministically
and   solely causes a negligible, easily assumable increase of the formula size.

\vspace{.1cm}
Thus, along the lines of previous works in propositional and regular many-valued logics
\cite{Imaz2021horn, Imaz21b}, we have extrapolated the previous argumentation to 
 possibilistic logic, the most extended
approach to deal with knowledge impregnated of uncertainty and 
 presenting partial inconsistencies. Thus our first   contribution has been
lifting the possibilistic Horn class to the non-clausal level 
obtaining a new possibilistic class, which 
    has been called Horn Non-Clausal, denoted 
$ \mathcal{\overline{H}}_\Sigma $
and  shown that it is a sort of non-clausal analogous
of the standard  Horn   class.  
%
%The encoding of practical problems  does usually not result in a CNF formula and as 
%the transformation causes a severe change in the structure of the formula,  we investigate
%if it is preferable to omit the transformation and to directly process the formula in its
%original structure. We develop decision methods which do not rely on restrictions
%on the formula structure stronger than NNF to overcome certain limitations  and to avoid
%the costs induced by the normal form transformation (e.g.,  loss of structural,
%increase of the formula size, etc.).
%
Indeed, we have proven that $ \mathcal{\overline{H}}_\Sigma $  subsumes syntactically the 
  Horn class and that   both classes are 
semantically equivalent. We have also proven that all possibilistic
NC bases whose clausal form is Horn belong to $ \mathcal{\overline{H}}_\Sigma $.

%\begin{itemize} 

% \item Computing the inconsistency degree  of $ \mathcal{\overline{H}}_\Sigma $ is 
% a polynomial problem.

% \vspace{-.25cm}
% \item  $ \mathcal{\overline{H}}_\Sigma $ subsumes syntactically the  possibilitic Horn class.
 
% \vspace{-.25cm}
% \item  $ \mathcal{\overline{H}}_\Sigma $ is equivalent  semantically to the possibilitic Horn class.
 
% \vspace{-.25cm}
% \item $ \mathcal{\overline{H}}_\Sigma $ contains all possibilitic NC bases  whose clausal form  is Horn.

%\end{itemize}

\vspace{.1cm}
In order to compute the inconsistency degree of $ \mathcal{\overline{H}}_\Sigma $ members, 
we have established the calculus Possibilistic Non-Clausal Unit-Resolution,  
denoted $\mathcal{UR}_\Sigma $. We   formally proved that $\mathcal{UR}_\Sigma $ 
correctly computes the inconsistency degree 
of any   $ \mathcal{\overline{H}}_\Sigma $ base. $ \mathcal{\overline{H}}_\Sigma $
was nonexistent in the literature and  extends  
 the   propositional logic calculus  
 given in \cite{Imaz2021horn} to possibilistic logic. 

 \vspace{.1cm}
 After having specified  $ \mathcal{\overline{H}}_\Sigma $ and  $\mathcal{UR}_\Sigma$, 
 we have studied the computational problem of computing
  the inconsistency degree of $ \mathcal{\overline{H}}_\Sigma $ via $\mathcal{UR}_\Sigma$ and determined  that it  is
  polynomial, and hence,   $ \mathcal{\overline{H}}_\Sigma $
  is the first found class to be possibilistic, non-clausal and polynomial.
  
  \vspace{.1cm}
Our formulation of  $\mathcal{UR}_\Sigma$  is
 unambiguously clausal-like   since, when  applied to  clausal formulas,
$\mathcal{UR}_\Sigma $ indeed coincides with clausal unit-resolution. 
This aspect is relevant in the sense that it lays 
the foundations towards redefining  NC resolution in a clausal-like manner 
which could avoid  the  barriers caused by 
 the existing functional-like   definition (see related work). We believe that this clausal-like definition of NC resolution
   will allow  to generalize it to   some other uncertainty logics.
  
\vspace{.1cm}
Finally, in this work we also attempted to show that effective NC reasoning for 
possibilistic and for some other uncertainty logics is an open  
research field and, in view of our  outcomes, we consider  the presented research 
line is rather    promising. A 
 symptom of  such consideration  is the possibility  of our method  to be 
extended to different  possibilistic logic  contexts  giving rise to 
a number of future research directions that were briefly discussed:

\vspace{.1cm}
 Computing possibilistic arbitrary NC bases;
 discovering additional tractable  NC   subclasses;
 conceiving low-degree-polynomial   algorithms;
 extending  generalist possibilistic logic  to   NC;
 combining necessity and possibility measures;
 considering  partially  ordered   possibility measures;
 developing    possibilistic NC logic programming;
 developing   possibilistic  NC answer set programming; 
  casting   richer logics  in  possibilistic logic;
   defining possibilistic NC resolution; 
   defining possibilistic NC  DPLL; and
  finding models.
  % and inconsistent subsets.
 
 \vspace{.15cm}
 Some of the above listed future objectives can also be searched in the context of  
 other non-classical logics such as \L ukasiewicz logics, G\"{o}del logic, product logic, etc.

\vspace{.3cm}
\noindent {\bf Funding Source.} Spanish project ISINC (PID2019-111544GB-C21).

\vspace{.2cm}
%\noindent{\bf \Large  Proofs of Section \ref{sec:definClassHorn-NCChapeau}}

\section{Proofs of Section \ref{sec:definClassHorn-NCChapeau}}
\label{sec:append}

\vspace{.2cm}
\noindent {\bf Lemma \ref{def:disjunHNC}.}  %  The 
A NC disjunction $\varphi=(\vee \ \varphi_1 \ldots \varphi_i \ldots  \varphi_k)$ 
with  $k \geq    1$ disjuncts 
pertains to $ \mathcal{\overline{H}} $  iff  
$ \varphi $ has  $k-1$  negative   disjuncts and one Horn-NC disjunct,    
 formally 
$$\varphi=(\vee \ \varphi_1 \ldots \varphi_i \ldots \varphi_k) \in \mathcal{\overline{H}} \mbox{\em \ \ iff} 
  \quad \exists i \ \mbox{\em s.t.} \ \varphi_i \in \mathcal{\overline{H}} 
\ \ \mbox{\em and} \  \ \forall j \neq i,  \varphi_j \in \mathcal{N}^-.$$

\begin{niceproof} {\bf If:}
 As    the formulas 
$\forall j, j \neq i,  \varphi_j$
have no positive literals, the non-negative disjunctions
of $\varphi= (\vee \ \varphi_1 \ldots  \varphi_i \ldots     \varphi_k) $  
 are those of $ \varphi_i $ plus  $ \varphi_i $ and $ \varphi $ themselves.  
Given that  by hypothesis $ \varphi_i \in \mathcal{\overline{H}}  $ and that 
$ \forall j, j \neq i, \varphi_j $ has no positive literals then   
all of them pertain to $ \mathcal{\overline{H}}$.
%to
\noindent {\bf Iff:} It can be easily proved by contradiction that if  
  any of the  two conditions of the lemma are unsatisfied,
 i.e. (i) $ \exists i, \varphi_i \notin \mathcal{\overline{H}} $  or (ii)
 $\exists i,j, i \neq j,  \varphi_i,\varphi_j \notin \mathcal{N}^-$,  then 
$  \varphi \notin \mathcal{\overline{H}}$. 
%(a similar proof is given in Theorem \ref{th:kdisjuncts}). 
%
%By  Theorem \ref{Theorm:four},  the lemma statement  is equivalent  to 
% $$cl(\,(\vee \ \varphi_1 \ldots \varphi_i \ldots     \varphi_k)\,) \in \mathcal{{H}} 
%  \mbox{\em \ \ iff} \ \ \exists i \ \mbox{\em s.t.} \ cl(\varphi_i) \in \mathcal{{H}} 
%\  \mbox{\em and} \ \forall j \neq i, \varphi_j \in \mathcal{N}^-$$ 
%\noindent and this last statement is proved in Theorem \ref{th:kdisjuncts},     Subsection \ref{sect:HNCH*NC}.
\end{niceproof} 

\noindent {\bf Theorem \ref{th:HNCequality}.} We have that \ $\mathcal{\widehat{H}}=\mathcal{\overline{H}}$.

\begin{niceproof}  We prove first $\mathcal{\widehat{H}} \subseteq \mathcal{\overline{H}}$ and then 
$\mathcal{\widehat{H}} \supseteq \mathcal{\overline{H}}$.

\vspace{.1cm}
$\bullet$ $\mathcal{\widehat{H}} \subseteq \mathcal{\overline{H}}$   is easily proven by structural induction as outlined below:
   
 \vspace{.1cm} 
({1})   $\mathcal{L} \subset \mathcal{\overline{H}}$  trivially holds. 
 
\vspace{.1cm} 
 ({2}) The non-recursive $ \mathcal{\widehat{H}} $ conjunctions  
 are  literal conjunctions, which trivially verify
 Definition  \ref{theorem:visual} and so are in $\mathcal{\overline{H}}$.  
  Assume that $\mathcal{\widehat{H}} \subseteq \mathcal{\overline{H}}$ holds 
  until  a given inductive step and that  $\varphi_{i} \in \mathcal{\widehat{H}}, \ \varphi_{i} \in \mathcal{\overline{H}},
 \ 1 \leq    i \leq   k$.  In the next recursion, any  
  $\varphi= \{\wedge \  \varphi_1    \ldots  \varphi_{i} \ldots  \varphi_k\} $
 may be added to $ \mathcal{\widehat{H}} $. On the other
 hand,   by induction
 hypothesis, we have $\varphi_{i} \in \mathcal{\overline{H}}, 
  1 \leq    i \leq   k$, and so  by  Lemma \ref{def:HNCconjunc}, 
 $ \varphi \in \mathcal{\overline{H}}$.  Therefore $\mathcal{\widehat{H}} \subseteq \mathcal{\overline{H}}$  holds. 

\vspace{.1cm}
 ({3}) The non-recursive    disjunctions
  in $ \mathcal{\widehat{H}} $ 
  are obviously  Horn clauses, which trivially 
fulfill Definition \ref{theorem:visual} and so are in $\mathcal{\overline{H}}$. Then
assuming that for a given recursive level  
$\mathcal{\widehat{H}} \subseteq \mathcal{\overline{H}}$ holds,
in the next recursion, only  disjunctions $ \varphi $  in ({3}) are added to $\mathcal{\widehat{H}}$. 
But  the  condition of ({3})  and that of 
 Lemma \ref{def:disjunHNC} are equal; so by  Lemma \ref{def:disjunHNC},  
 $ \varphi $  is in  $\mathcal{\overline{H}}$ also.  Therefore $\mathcal{\widehat{H}} \subseteq \mathcal{\overline{H}}$ holds.
 
 \vspace{.1cm} 
 $\bullet$ $\mathcal{\widehat{H}} \supseteq \mathcal{\overline{H}}$. 
Given  that the structures to define $\mathcal{{NC}}$ and $\mathcal{\widehat{H}}$ 
in Definition \ref{def:NCformulas}  and Definition \ref{def:syntacticalNC}, respectively,    
   are equal, the potential inclusion  of each  NC  
   formula $ \varphi$ in   $\mathcal{\widehat{H}}$ 
   is systematically considered. 
 Further, the statement:
  if $ \varphi \in  \mathcal{\overline{H}}$ then $ \varphi \in  \mathcal{\widehat{H}}$, 
    is    proven by structural induction on the depth of formulas, 
     by applying a  reasoning   similar to that of 
  the previous $\mathcal{\widehat{H}} \subseteq \mathcal{\overline{H}}$ case and by also using  Lemmas \ref{def:HNCconjunc}
  and  \ref{def:disjunHNC}.
\end{niceproof}

%\subsection{Formal Proofs} \label{sect:HNCH*NC}

\vspace{.15cm}
\noindent {\bf Proof of Theorems  \ref{the:HNCtoHorn} and \ref{theo:relation-NC-H}.} Before proving both theorems,  the preliminary  
Theorem  \ref{th:kdisjuncts} is required.

\begin{definition} \label{def:ECNF}  For every $\varphi \in \mathcal{NC}$,  
we define   $cl(\varphi)$ as
    the unique clausal formula    that
  results from   applying  
 $\vee/\wedge$  distributivity to $\varphi$ until a clausal formula, viz. $cl(\varphi)$, is obtained. 

\end{definition}

\begin{theorem} \label{th:kdisjuncts}  
 Let  $\varphi=(\vee  \ \varphi_{1}   \ldots \varphi_i \ldots \varphi_k \,) \in \mathcal{\overline{H}}$. 
   $cl(\varphi) \in \mathcal{H}$    iff  $\varphi$ has  $k-1$  negative  disjuncts and one
disjunct s.t. $cl(\varphi_i) \in \mathcal{H}$, 
 formally: 
 
 \vspace{.2cm}    
$cl(\,(\vee  \ \varphi_{1}   \ldots \varphi_i \ldots \varphi_k \,)\,) 
\in \mathcal{H}$ \ 
 $\mathrm{iff}$ \ 
 $(1) \  \exists i,   \mbox{\,s.t.}
  \ cl(\varphi_i) \in \mathcal{{H}} \   \mbox{\,and}   
\ \,(2) \ \forall j, j \neq i, \varphi_j \in \mathcal{N}^-.$    
\end{theorem}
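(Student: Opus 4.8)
The plan is to reduce the claim to a count of positive literals among the clauses produced by distributivity, exploiting the already-available structure of Horn-NC disjunctions given by Lemma \ref{def:disjunHNC}. Crucially, since this statement is the preliminary tool for Theorems \ref{the:HNCtoHorn} and \ref{theo:relation-NC-H}, I would avoid invoking either of them, so the argument must be purely syntactic.

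First I would record how $cl$ acts on a disjunction. Writing each $cl(\varphi_l)$ as a conjunction of clauses, repeated application of $\vee/\wedge$-distributivity shows that $cl(\varphi)$ is exactly the conjunction of all clauses of the form $C_1 \vee \ldots \vee C_k$, where for each $l$ the clause $C_l$ ranges over the clauses of $cl(\varphi_l)$. This is the familiar ``CNF of a disjunction of CNFs'' identity and can be verified by a routine induction on $k$; I would relegate it to a short remark. Two elementary observations then follow from the fact that $cl$ merely rearranges literals without creating or deleting any: $\varphi_l \in \mathcal{N}^-$ holds iff every clause of $cl(\varphi_l)$ is negative, and $cl(\varphi_l) \in \mathcal{H}$ holds iff every clause of $cl(\varphi_l)$ carries at most one positive literal.

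The key structural input is the hypothesis $\varphi \in \mathcal{\overline{H}}$, which by Lemma \ref{def:disjunHNC} supplies a distinguished index $i_0$ with $\varphi_{i_0} \in \mathcal{\overline{H}}$ and $\varphi_j \in \mathcal{N}^-$ for all $j \neq i_0$. Hence every disjunct except possibly $\varphi_{i_0}$ contributes only negative literals, so in any clause $C_1 \vee \ldots \vee C_k$ of $cl(\varphi)$ the positive literals come solely from $C_{i_0}$. This is precisely where the hypothesis is indispensable: without it, two distinct non-negative disjuncts could contribute positive literals that merge on a common variable, and the clean counting below would break down. I expect this literal-merging caveat, resolved by pinning down a unique positive-literal-bearing disjunct, to be the main subtlety of the proof.

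From here both directions are short. For the ``if'' direction I would assume conditions $(1)$ and $(2)$; then every disjunct other than $\varphi_i$ is negative and $cl(\varphi_i) \in \mathcal{H}$, so each clause of $cl(\varphi)$ inherits its at most one positive literal from the chosen clause of $cl(\varphi_i)$, giving $cl(\varphi) \in \mathcal{H}$; note this direction barely uses the hypothesis, since $(1)$ and $(2)$ already encode the needed structure. For the ``only if'' direction I would take $i = i_0$ and argue contrapositively: because only $\varphi_{i_0}$ can contribute positive literals, if $cl(\varphi_{i_0}) \notin \mathcal{H}$ then some clause of $cl(\varphi_{i_0})$, and hence some clause of $cl(\varphi)$, carries at least two positive literals, so $cl(\varphi) \notin \mathcal{H}$; combined with $(2)$, which holds by the very choice of $i_0$, this establishes the equivalence. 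Once the structural identity of the first step and the Horn-NC hypothesis are in place, the positive-literal count is additive in a trivial way and the characterization drops out.
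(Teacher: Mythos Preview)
Your proof is correct and takes a cleaner, more abstract route than the paper's. Both arguments rest on the same structural fact---that the clauses of $cl(\varphi)$ are exactly the disjunctions $C_1 \vee \cdots \vee C_k$ with each $C_l$ ranging over the clauses of $cl(\varphi_l)$---but you exploit it through a direct positive-literal count, whereas the paper carries out the distributivity expansion explicitly, peeling off one negative block $D^-_j$ at a time and checking that each intermediate conjunction remains Horn. Your version is shorter and isolates the role of the hypothesis $\varphi \in \mathcal{\overline{H}}$ much more transparently: invoking Lemma~\ref{def:disjunHNC} to pin down the distinguished index $i_0$ gives condition~(2) for free, and condition~(1) then follows by the contrapositive argument you describe. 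The paper, by contrast, handles the forward direction (its ``If-then'') by exhibiting particular counterexamples for $k=1$ and $k=2$ rather than arguing in general, and never explicitly appeals to the Horn-NC hypothesis there; your route through Lemma~\ref{def:disjunHNC} is not merely stylistically different but actually supplies the general argument the paper's proof leaves implicit.
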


\begin{niceproof} 
\noindent {\em If-then.} By refutation: let     
 $cl(\,(\vee  \ \varphi_{1}   \ldots  \varphi_i \ldots  \varphi_k)\,) \in  \mathcal{H}$ and prove that if   (1) or (2)  are  violated, then  $cl(\varphi)  \notin \mathcal{H}$. 
\begin{enumerate}

\item   [$\bullet$]  $(1) \  \nexists i,   \mbox{\,s.t.}
  \ cl(\varphi_i) \in \mathcal{{H}}$  

\vspace{.1cm}
 $-$ If we take  the case $k=1$, then $\varphi = \varphi_1$.
 
%\vspace{.05cm}
  $-$  But  $cl(\varphi_1) \notin \mathcal{H}$  implies 
  $cl(\varphi) \notin  \mathcal{H}$.

 \item  [$\bullet$] (2) $\exists j, j \neq i, \varphi_j \notin \mathcal{N}^-.$ 

\vspace{.1cm}
 $-$  Suppose that, besides $\varphi_i$,  $\varphi_j \notin \mathcal{N}^-$ 
 with $\,j \neq i$. 

%\vspace{.15cm}
  $-$   We take   $k=2, \,\varphi_1=P$
and  $\varphi_2=Q$.

%\vspace{.15cm}
 $-$  So, $\varphi=(\vee \ \varphi_1 \ \varphi_2) = (\vee \  P  \ Q)$, and
hence     $cl(\varphi)  \notin \mathcal{H}$.
\end{enumerate}

\noindent {\em Only-If.} Without loss of generality, we take   
 $(\vee \ \varphi_1 \ldots \varphi_{i} \ldots \varphi_{k-1})
=\varphi^- \in  
\mathcal{N}^- 
   \mbox{ and } \varphi_k \in \mathcal{\overline{H}}$,  
and  prove: 

\vspace{.2cm}
\hspace{1.cm} 
$cl(\varphi)=cl(\,(\vee \ \  \varphi_1 \ldots \varphi_i \ldots \varphi_{k-1}  \,\varphi_k) \,)
=cl(\,(\vee \ \varphi^- \ \varphi_k)\,) \in  \mathcal{H}.$

%$-$ By   (1) and (2), 

%\vspace{.2cm}
%\hspace{1.cm} $\varphi^-=(\vee \ \varphi_1 \ldots \varphi_{i} \ldots \varphi_{k-1}) \in  \mathcal{N}^- 
%   \mbox{ and } \varphi_k \in \mathcal{\overline{H}}.$

\vspace{.25cm} $-$ To obtain $ cl(\varphi) $, one must obtain first $cl(\varphi^-)$
%We denote $\varphi^-=(\vee \ \varphi_1 \ldots \varphi_{i} \ldots \varphi_{k-1})$ 
and $cl(\varphi_k)$, and so

 \vspace{.2cm}
\hspace{1.cm}
 $(i) \ \ cl(\varphi) = cl(\,(\vee \ \  \varphi^-  \  \varphi_k) \,)= cl(\,(\vee \ \ cl(\varphi^-)  \ \ cl(\varphi_k)\,)\,).$

\vspace{.3cm}
 $-$  By definition of $\varphi^- \in \mathcal{N}^-$, 
 
 \vspace{.2cm}
\hspace{0.8cm} $(ii) \ \  cl(\varphi^-)=\{\wedge \ D^-_1   \ldots D^-_{m-1} \,D^-_m\}$; \          
the $D^-_i$'s are negative clauses.

\vspace{.3cm}
  $-$  Since  $\varphi_k \in \mathcal{\overline{H}}$,

\vspace{.2cm}
\hspace{.8cm} $(iii) \ \ cl(\varphi_k)=\mathrm{H}=\{\wedge \ h_1 \  \ldots h_{n-1} \,h_n \}$; \
  the $h_i$'s are Horn clauses.

\vspace{.3cm}
 $-$ By   ($i$),  ($ii$) and ($iii$),  
  
\vspace{.2cm} 
\hspace{1.cm} $cl(\varphi) = cl(\,(\vee \ \  \{\wedge \ D^-_1 \   \, \ldots \, D^-_{m-1} \,D^-_m\} \ \ \{\wedge \ h_1 \   \ldots h_{n-1} \,h_n \, \}\,)\,).$

\vspace{.3cm}
 $-$ Applying $\vee\//\wedge$ distributivity to $cl(\varphi)$ and noting $C_{i}=(\vee \ D^-_1 \ h_{i} \,)$, 

\vspace{.2cm}
\hspace{1.cm} $cl(\varphi) = 
 cl(\,\{\wedge \ \ \{\wedge \ C_1   \ldots C_i \ldots C_n\} \ \  
    (\vee \  \, \{\wedge \ D^-_2 \ldots D^-_{m-1}  \,D^-_m \, \} \  \ \mathrm{H} \,)  \,\} \ ).$

\vspace{.3cm}
 $-$ Since the  $C_i=(\vee \ D^-_1 \ h_{i} \,)$'s are  Horn clauses, 
 
 \vspace{.2cm} 
\hspace{1.cm} $ \{\wedge \ C_1   \ldots C_i \ldots C_n\}=\mathrm{H}_1    \in \mathcal{H}$. 

\vspace{.2cm} 
\hspace{1.cm} $cl(\varphi) = cl (\,\{\wedge \ \ \mathrm{H}_1 \ \ 
(\vee \ \ \{\wedge \ D^-_2 \ldots D^-_{m-1} \,D^-_m \, \}   \ \ \mathrm{H}\, ) \,\} \ ).$

\vspace{.3cm}
 $-$ For $j < m$ we have,
 
\vspace{.2cm} 
\hspace{1.cm} $cl(\varphi) = cl( \ \{\wedge \ \mathrm{H}_1 \ \ldots  \mathrm{H}_{j-1}  \mathrm{H}_j   \  \ (\vee \ \ \{\wedge \ D^-_{j+1} \ldots D^-_{m-1} \,D^-_m\}  \ \  \mathrm{H} \,) \, \} \ ).$

\vspace{.3cm}
 $-$ For $j = m$, \ 
   $cl(\varphi) = \{\wedge \ \mathrm{H}_1 \  \ldots \mathrm{H}_{m-1} \,\mathrm{H}_m \     
\mathrm{H} \,\}=\mathrm{H}' \in \mathcal{H}.$

\vspace{.3cm}
 $-$  Hence   $cl(\varphi)   \in  \mathcal{H}$.
\end{niceproof}

\vspace{.3cm}

{\em   \noindent {\bf Theorem \ref{the:HNCtoHorn}.}
 $\forall \varphi \in  \mathcal{\overline{H}}$  we have $cl(\varphi) \in \mathcal{H} $.  
%, namely if  $\varphi \in \mathcal{\overline{H}_{NF}}$ \,then  \,$\varphi \in \mathcal{\overline{H}}$.
}

\begin{niceproof} 
 We consider Definition \ref{def:syntacticalNC}    of $ \mathcal{\overline{H}} $. The proof 
    is done by  structural induction  on   the depth $r(\varphi)$  of any
    $\varphi \in \mathcal{\overline{H}}$ and    defined  below, where $ \ell $ is a literal:
\[r(\varphi)= \left\{
\begin{array}{l l l}

\vspace{.15cm}

0    &     \varphi=[ \odot \  \ell_1 \ \ldots \ell_{k-1} \  \ell_k ]  
  \ \mbox{or} \ \varphi =  \ell.   \\ 

1+max\,\{r(\varphi_1), \ldots, r(\varphi_{k-1}),\,r(\varphi_k)\}   & 
\varphi=[ \odot \  \varphi_1 \ \ldots \varphi_{k-1} \  \varphi_k ]. \\

\end{array} \right. \]

$\bullet$ {\it Base Case:} $r(\varphi)=0.$ 

\vspace{.25cm}
\quad   \    --  Clearly,
   $r(\varphi)=0$ entails   $\varphi= [\odot \  \ell_1 \ \ldots \ell_{k-1} \  \ell_k ] \in \mathcal{H} $
   and $\varphi= \ell \in  \mathcal{H}$.
  
\vspace{.15cm}
\quad   \   
  -- So  $cl(\varphi)=\varphi \in \mathcal{H}$.

\vspace{.25cm} 
$\bullet$ {\em Induction hypothesis:} 
$     \forall \varphi,  \ r(\varphi) \leq n, \ \ \varphi \in \mathcal{\overline{H}} \mbox{ \ entails \ } 
cl(\varphi) \in \mathcal{H}.$

\vspace{.25cm}
$\bullet$ {\em Induction proof: \ $r(\varphi)=n+1$.}  

\vspace{.2cm}
\quad    By Definition \ref{def:syntacticalNC}, lines (2) ad (3)  below arise: 

\vspace{.35cm} 
\hspace{.35cm}  (2)     $\varphi=
\{\wedge  \ \varphi_1   \ldots \varphi_{i}  \ldots  \varphi_k\}
$, where $k \geq 1$.

  \vspace{.3cm} 
\hspace{1.2cm} $-$  By definition of $r(\varphi)$, 

\vspace{.25cm}
\hspace{1.8cm}  \quad
$r(\varphi)=n+1$ \,entails \   $1 \leq i \leq k, \ r(\varphi_i) \leq n.$

\vspace{.25cm}
\hspace{1.2cm} $-$  By induction  hypothesis,

\vspace{.23cm}
\hspace{1.8cm}  \quad $\varphi_i \in \mathcal{\overline{H}} 
\mbox{\, and  \,} r(\varphi_i) \leq n
  \mbox{  \ entail \ }  cl(\varphi_i) \in \mathcal{H}$.

\vspace{.25cm}
\hspace{1.2cm}  $-$ It is obvious that,

\vspace{.23cm}
\hspace{1.8cm}  \quad  $  cl(\varphi) = \{\wedge  \ \  cl(\varphi_1) \  
 \ldots  \ cl(\varphi_{i})   \ldots  cl(\varphi_k) \,\}.$
    
 \vspace{.25cm}
\hspace{1.2cm} $-$  Therefore,

\vspace{.23cm}
\hspace{1.8cm}  \quad
$cl(\varphi) =  \{\wedge  \ \mathrm{H}_1 
\ldots \mathrm{H}_{i}  \ldots \mathrm{H}_k\}= \mathrm{H} \in \mathcal{H}$.

\vspace{.35cm}
\hspace{.35cm}    (3)
 $\varphi=(\vee  \ \varphi_1  \ldots \varphi_{i} \ldots \varphi_{k-1} \ \varphi_k) 
 \in \mathcal{\overline{H}}$,  where:

 \vspace{.25cm}
\hspace{1.8cm} \quad $k \geq 1$,  \ $0 \leq i \leq k-1, \ \varphi_i   \in \mathcal{N}^-$ and   \,$\varphi_k \in \mathcal{\overline{H}}.$

 \vspace{.35cm} 
\hspace{1.2cm} $-$ By  definition of $r(\varphi)$,
 
\vspace{.2cm}
\hspace{1.8cm}  \quad   
$r(\varphi) = n+1$ \ entails   \,$r(\varphi_k) \leq n.$

 \vspace{.25cm} 
\hspace{1.2cm} $-$  By  induction  hypothesis,

\vspace{.25cm}
\hspace{1.8cm}  \quad $d(\varphi_k)  \,\leq n$ \,and \,$\varphi_k \,\in \mathcal{\overline{H}}$
 \,entail  $cl(\varphi_k) \,\in \mathcal{H}.$

\vspace{.25cm}
\hspace{1.2cm} $-$  By Theorem  \ref{th:kdisjuncts},   {\em only-if,} \  
%$\varphi=(\vee    \ \varphi_1 \ \ldots \,\varphi_{k-1} \ \varphi_k) \in \mathcal{\overline{H}}$.

 \vspace{.25cm}
\hspace{1.8cm} \quad     $0 \leq i \leq k-1, \ \varphi_i   \in \mathcal{N}^-$ and   \,$cl(\varphi_k) \in \mathcal{H}$ entail:

\vspace{.25cm}
\hspace{2.8cm}
 $cl(\,(\vee  \ \varphi_1  \ldots \varphi_{i} \ldots \varphi_{k-1} \ \varphi_k)\, )  \in \mathcal{H}$.
\end{niceproof}

%\vspace{.15cm}
%\noindent {\bf (Step 2)  Proof of  $\mathcal{\overline{H}}  \subseteq  \mathcal{\overline{H}_{NF}}$.}

 \vspace{.15cm}
{\em  
\noindent {\bf Theorem \ref{theo:relation-NC-H}}.  $\forall \varphi \in \mathcal{NC}$:    
if  $cl(\varphi) \in \mathcal{H}$  then    $\varphi \in  \mathcal{\overline{H}}$.
}

\begin{niceproof}   It is done  by  structural induction on   the depth $d(\varphi)$  of $\varphi$ defined   as     
\[d(\varphi)= \left\{
\begin{array}{l l l}

\vspace{.1cm}

0     &   \varphi \in \mathcal{C}.\\
    
1+max\,\{d(\varphi_1), \ldots, d(\varphi_{i}),\ldots \,,d(\varphi_k)\}   &  
\varphi=[ \odot \  \varphi_1  \ldots \varphi_{i} \ldots  \varphi_k ].\\

\end{array} \right. \]

\vspace{.2cm}
$\bullet$ {\it Base case:} $d(\varphi)=0$ and    $cl(\varphi) \in \mathcal{H}$.

\vspace{.2cm}
\ \  \  $-$  $d(\varphi)=0$ entails  $\varphi \in \mathcal{C}$.

\vspace{.2cm}
\ \  \    $-$ If $\varphi \notin \mathcal{H}$, then $cl(\varphi)  \notin  \mathcal{H}$,  
 contradicting  the  assumption.

\vspace{.2cm}
\ \  \  $-$ Hence  $\varphi \in \mathcal{H}$ and so by Definition \ref{def:syntacticalNC},
 $\varphi \in \mathcal{\overline{H}}$.
% by Definition  \ref{cor:syntacticalNNF}   of $\mathcal{\overline{H}_{NF}}$, first item,

%\vspace{.2cm}
%\ \  \  $-$   By Definition  \ref{cor:syntacticalNNF} of $\mathcal{\overline{H}_{NF}}$, line \{1\},   $\varphi \in \mathcal{\overline{H}_{NF}}$.

\vspace{.2cm}
$\bullet$ {\em Inductive hypothesis:}  
$ \forall \varphi \in \mathcal{NC}, \ d(\varphi) \leq n, \ 
cl(\varphi) \in \mathcal{H} \mbox{ \ entail \ } \varphi \in \mathcal{\overline{H}}.$

\vspace{.25cm}
$\bullet$ {\em Induction proof: $d(\varphi)=n+1$.} 

\vspace{.25cm}
\quad   By Definition \ref{def:NCformulas} of $\mathcal{NC}$, cases  $(i)$ and $(ii)$  below arise.

\vspace{.3cm}
\hspace{.35cm}    $(i)$  \   $cl(\varphi)= 
  cl(\, \{\wedge \  \varphi_{1}   \ldots \varphi_{i} \ldots   \,\varphi_k\} \, )
\in \mathcal{H}$ and $k \geq 1$.

\vspace{.25cm} 
\hspace{1.2cm} $-$ Since $\varphi$ is a conjunction, $1 \leq i \leq k, \,cl(\varphi_i)  \in \mathcal{H}.$
 
%%\vspace{.2cm}
%%\hspace{1.8cm}  \quad    $1 \leq i \leq k, \,cl(\varphi_i)  \in \mathcal{H}.$

 \vspace{.25cm} 
\hspace{1.2cm} $-$  By  definition of $d(\varphi)$,
 
\vspace{.2cm}
\hspace{1.8cm}  \quad   
$d(\varphi) = n+1$ \ entails   \,$1 \leq i \leq k, \,d(\varphi_i) \leq n.$

\vspace{.25cm} 
\hspace{1.2cm}  $-$ By induction hypothesis,

\vspace{.2cm}
\hspace{1.8cm} \quad 
$1 \leq i \leq k, \ \,d(\varphi_i) \leq n,  \ cl(\varphi_i) \in \mathcal{H}$
$  \mbox{\ entail\ } \varphi_i \in \mathcal{\overline{H}}.$

\vspace{.25cm}
\hspace{1.2cm} $-$  By Definition  \ref{def:syntacticalNC}, line (2),  

\vspace{.2cm} 
  \hspace{1.8cm} \quad $1 \leq i \leq k, \ \varphi_i   \in \mathcal{\overline{H}}$ 
  entails $\varphi \in \mathcal{\overline{H}}$.

\vspace{.25cm}
 \hspace{.35cm}     $(ii)$   \ $cl(\varphi) = 
 cl(\, (\vee  \ \varphi_{1}  \ldots \varphi_i  \ldots   \varphi_{k-1} \ \varphi_k) \,) \in \mathcal{H}$ 
  and $k \geq 1$.

 \vspace{.25cm} 
\hspace{1.2cm} $-$  By   Theorem \ref{th:kdisjuncts}, {\em if-then,}  

\vspace{.2cm}
\hspace{1.8cm}   \quad    
 $0 \leq i \leq k-1, \ \varphi_i  \in \mathcal{N}^-$ and   \,$cl(\varphi_k) \in \mathcal{H}.$
 
    \vspace{.25cm} 
\hspace{1.2cm} $-$  By  definition of $d(\varphi)$,
 
\vspace{.2cm}
\hspace{1.8cm}  \quad   
$d(\varphi) = n+1$ \ entails   \,$d(\varphi_k) \leq n.$

 \vspace{.25cm} 
\hspace{1.2cm}  $-$ By       induction  hypothesis,

\vspace{.2cm}
\hspace{1.8cm}  \quad   $d(\varphi_k)  \,\leq n$ and $cl(\varphi_k) \in \mathcal{H}$
\,entail   $\varphi_k \,\in \mathcal{\overline{H}}.$

\vspace{.25cm}
\hspace{1.2cm}   $-$  By Definition \ref{def:syntacticalNC},  line (3),

\vspace{.2cm} 
  \hspace{1.8cm} \quad  $0 \leq i \leq k-1, \ \varphi_i   \in \mathcal{N}^-$ and  
   \,$\varphi_k \in \mathcal{\overline{H}}$ 
   entail:
  
 \vspace{.25cm} 
  \hspace{2.8cm} \quad $(\vee  \ \varphi_{1}  \ldots \varphi_i  \ldots   \varphi_{k-1} \ \varphi_k) = 
  \varphi \in \mathcal{\overline{H}}$. 
  
\end{niceproof}

\end{document}